\theoremstyle{plain}
\newtheorem{theorem}{Theorem}[section]
\theoremstyle{definition}
\theoremstyle{remark}
\newcommand{\algo}{\texttt{AMP+HALF+TANH}}
\newcommand{\A}{\mathcal{A}}
\newcommand{\F}{\mathcal{F}}
\newcommand{\G}{\mathcal{G}}%
\newcommand{\K}{\mathcal{K}}
\newcommand{\N}{\mathbb{N}}%
\newcommand{\R}{\mathbb{R}}%
\newcommand{\U}{\mathcal{U}}
\newcommand{\Q}{\mathcal{Q}}
\newcommand{\T}{\mathbb{T}}
\definecolor{dark-blue}{rgb}{0.15,0.15,0.4}
\title{Guaranteed Approximation Bounds for \\ Mixed-Precision Neural Operators}
\author{Renbo Tu%
\thanks{Equal contribution. Email to: \texttt{renbo.tu@mail.utoronto.ca}.
}
$^{14}$, Colin White$^{*2}$, Jean Kossaifi$^{3}$, Boris Bonev$^{3}$, Nikola Kovachki$^{3}$, \\
\textbf{Gennady Pekhimenko$^{14}$, Kamyar Azizzadenesheli$^{3}$, Anima Anandkumar$^{2}$} \vspace*{1mm} \\
    $^1$ University of Toronto, $^2$ Caltech,  $^3$ NVIDIA, $^4$ Vector Institute
}
\begin{document}
\maketitle

\begin{abstract}
Neural operators, such as Fourier Neural Operators (FNO), form a principled approach for learning solution operators for partial differential equations (PDE) and other mappings between function spaces. However, many real-world problems require high-resolution training data, and the training time and limited GPU memory pose big barriers. 
One solution is to train neural operators in mixed precision to reduce the memory requirement and increase training speed. However, existing mixed-precision training techniques are designed for standard neural networks, and we find that their direct application to FNO leads to numerical overflow and poor memory efficiency. Further, at first glance, it may appear that mixed precision in FNO will lead to drastic accuracy degradation since reducing the precision of the Fourier transform yields poor results in classical numerical solvers. We show that this is not the case; in fact, we prove that reducing the precision in FNO still guarantees a good approximation bound, when done in a targeted manner. 
Specifically, we build on the intuition that neural operator learning inherently induces an approximation error, arising from discretizing the infinite-dimensional ground-truth input function, implying that training in full precision is not needed.
We formalize this intuition by rigorously characterizing the approximation and precision errors of FNO and bounding these errors for general input functions. We prove that the precision error is asymptotically comparable to the approximation error. Based on this, we design a simple method to optimize the memory-intensive half-precision tensor contractions by greedily finding the optimal contraction order. Through extensive experiments on different state-of-the-art neural operators, datasets, and GPUs, we demonstrate that our approach reduces GPU memory usage by up to 50\% and improves throughput by 58\% with little or no reduction in accuracy.

\end{abstract}

\section{Introduction} \label{sec:intro}

Real-world problems in science and engineering often involve solving systems of partial differential equations (PDEs) \citep{strang2007computational}. 
These problems typically require a fine grid for numerical solvers to guarantee convergence, e.g.,  climate modeling and 3D fluid dynamics simulations, and the full-scale solutions to these real-world systems are out of reach even for the world's largest supercomputers \citep{schneider2017climate}.

To overcome the computational challenges of numerical solvers, fast surrogates using machine learning have been developed. Among them, \emph{neural operators}
are a powerful \emph{data-driven} technique for solving PDEs \citep{li2020neural,li2021fourier,kovachki2023neural,lu2021learning}.
Neural operators learn maps between function spaces, and they can be used to approximate the solution operator of a given PDE.
The input and output functions in neural operators can be at any resolution or on any mesh, and the output function can be evaluated at any point in the domain; therefore, neural operators are \emph{discretization convergent}:
once the neural operator is trained, it can be evaluated, without any retraining, at any resolution, and it converges to a unique limit under mesh refinement \citep{kovachki2023neural}.
By learning from discretized data from input and solution functions, the trained models can perform inference orders of magnitude faster than traditional PDE solvers \citep{kovachki2023neural}.
In particular, the Fourier Neural Operator (FNO) and its extensions have been successful in solving PDE-based problems with significant speedups \citep{li2021fourier,li2023gino,bonev2023sfno,kossaifi2023multigrid,liu2022learning,gopakumar2023fourier}.

Despite their successes,  training of neural operators is still compute- and memory-intensive when faced with extremely high-resolution and large-scale problems.
For conventional deep learning models, there is a wealth of knowledge on automatic mixed precision (AMP) training in order to reduce memory usage and increase computational throughput~\citep{rakka2022mixed}.
We find that their direct application to neural operators results in poor memory efficiency and numerical overflow; when applied to FNO,  most memory-intensive operations are in the spectral domain, which is complex-valued and not handled well by the standard AMP, as seen in \cref{fig:ball_figure}.


Conventional wisdom may suggest that reducing precision in  FNO without drastic accuracy degradation is not feasible. The presence of the Fourier transform in FNO appears to be a limitation in reducing precision since it suffers from numerical instabilities on high-frequency modes under reduced precision. Hence, numerical solvers, such as pseudo-spectral solvers, do indeed require very high precision to ensure numerical stability~\citep{trefethen2000spectral}, and time-stepping leads to an accumulation of round-off errors under low precision that quickly explode. 
However, we show both theoretically and empirically that this is not the case for FNO and other operator learning approaches. 

%

\begin{figure*}[t]
\vskip -0.5in
\begin{center}
\begin{minipage}[c]{0.15\textwidth}
\hspace{4mm}
\includegraphics[width=\textwidth]{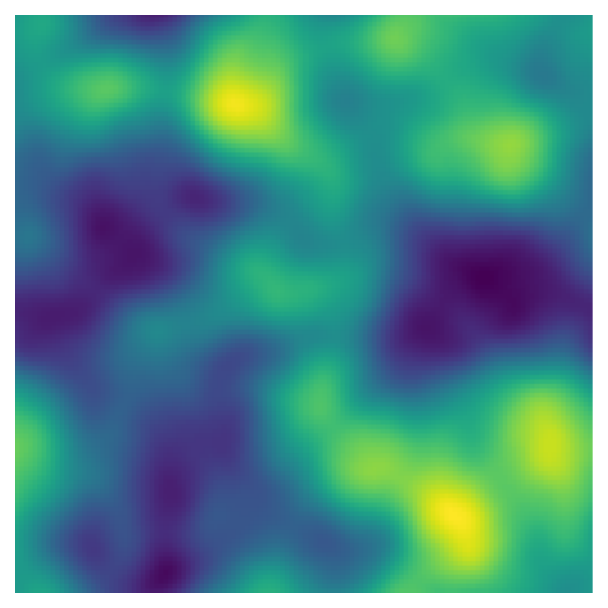}
\end{minipage}
\hspace{5mm}
\begin{minipage}[c]{0.15\textwidth}
\includegraphics[width=\textwidth]{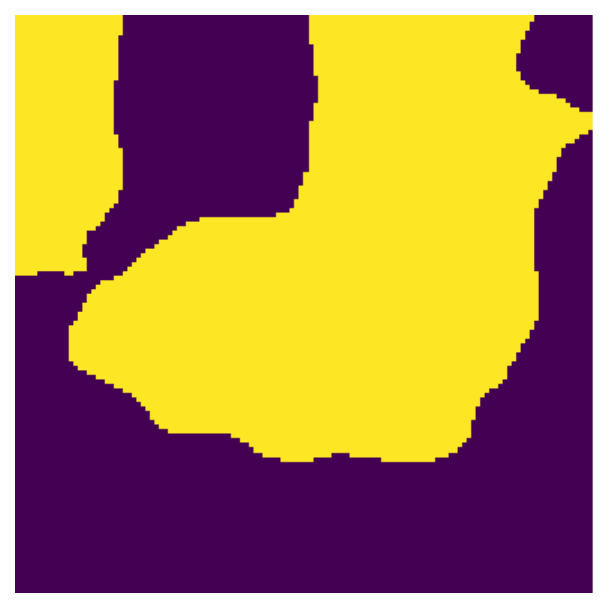}
\end{minipage}
\hspace{2.5mm}
\begin{minipage}[c]{0.18\textwidth}
\includegraphics[width=\textwidth]{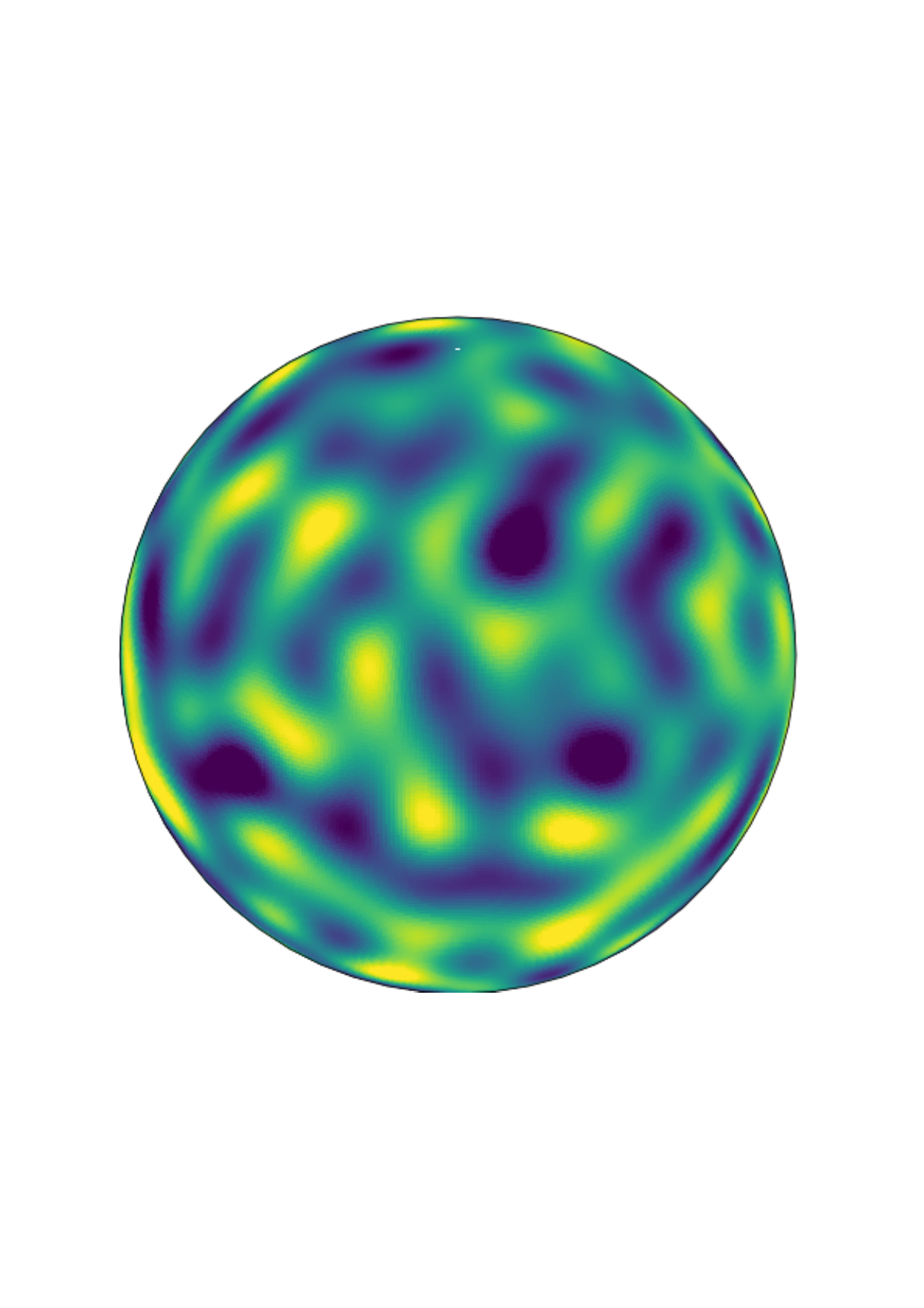}
\end{minipage}
\hspace{0mm}
\begin{minipage}[c]{0.17\textwidth}
\includegraphics[width=\textwidth]{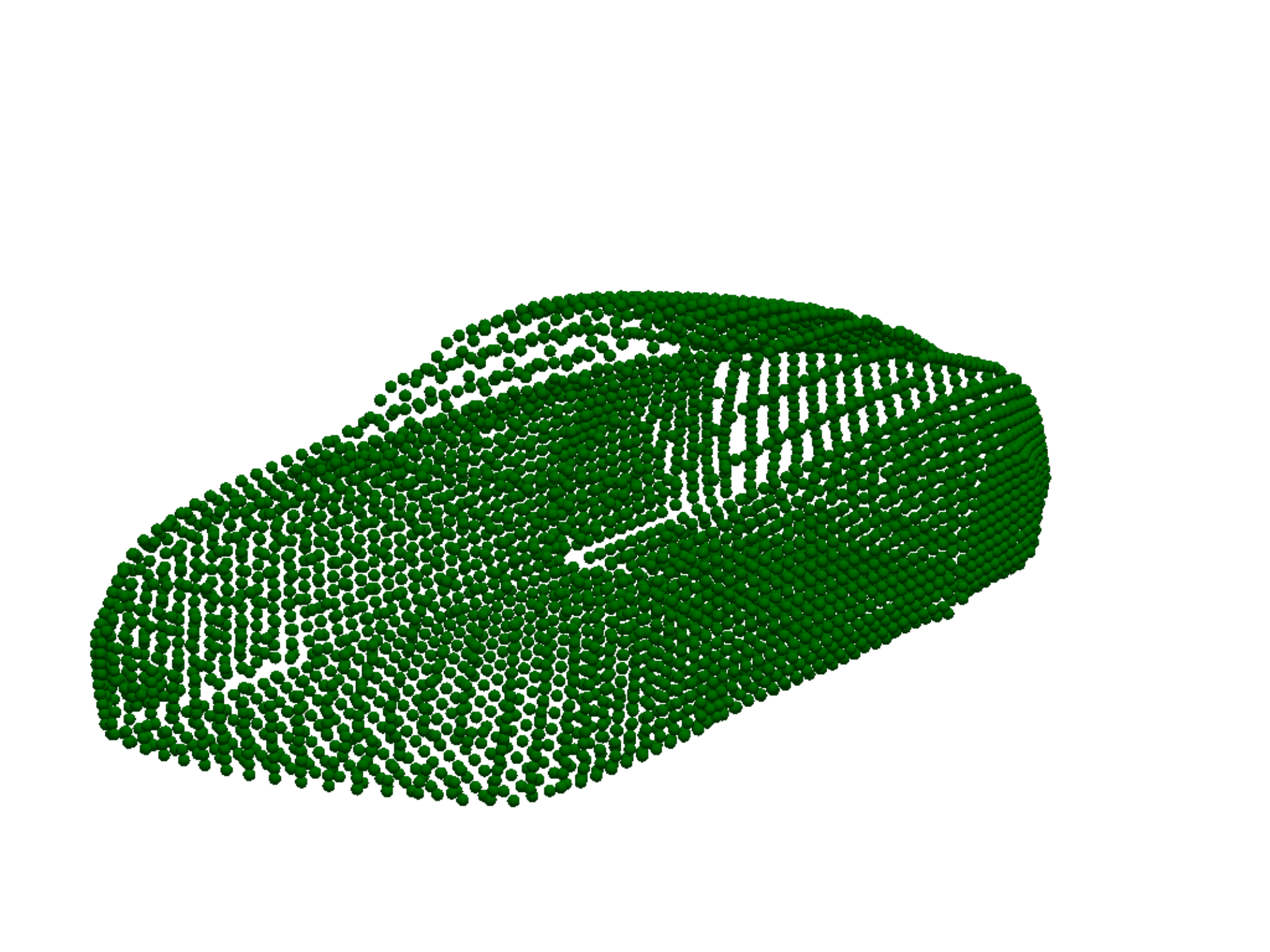}
\end{minipage}
\hspace{2mm}
\begin{minipage}[c]{0.16\textwidth}
\includegraphics[width=\textwidth]{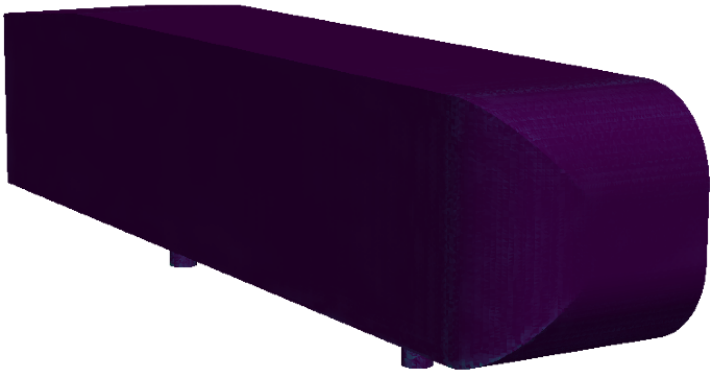}
\end{minipage}
\vspace{-8mm}
\begin{center}
\includegraphics[width=0.98\textwidth]{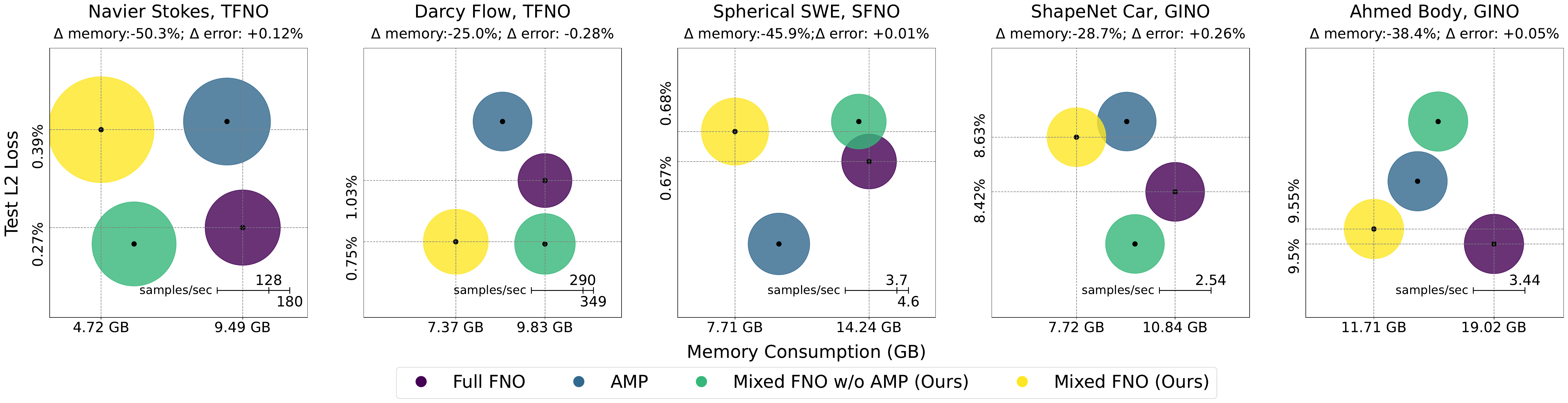}
\end{center}
\caption{
\textbf{Top: example data points from each dataset}: Navier-Stokes, Darcy Flow, Spherical Shallow Water, Shape-Net Car CFD, and Ahmed-body CFD.
\textbf{Bottom: performance of our method compared to full-precision and AMP on each dataset.} For each dataset, we plot test error (y-axis) and GPU memory (x-axis), and we annotate the maximum throughput (proportional to the area of each ball). All data are measured on the same hardware (RTX 3090 Ti) and the same virtual environment.
Memory decreases by up to 50\%, while $L^2$ loss increases by at most 0.28\%.
}
\label{fig:ball_figure}
\end{center}
\vskip -0.3in
\end{figure*}

{\bf Our approach: }In this work, we devise the first mixed-precision training method for neural operators and also derive approximation bounds that guarantee the expressivity of mixed-precision operators. We use the intuition that the Fourier transform within neural operators is already approximated by the discrete Fourier transform, since the training dataset is a discrete approximation of the ground-truth continuous signal. Intuitively, since we already incur approximation error from discretization, there is no need to run the discrete Fourier transform in full precision.
Building on this,  we show that, in a single FNO layer, the round-off error due to lower precision is small. Therefore, the overall round-off error remains small since a full FNO architecture is only made up of a few layers---much smaller than the number of steps in a classical pseudo-spectral solver where errors accumulate and explode.

We make the above intuitions concrete, and we develop the following theoretical approximation bounds: we characterize the precision error and resolution error of the FNO block, proving asymptotic bounds of $n^{-\nicefrac{2}{d}}$ and $\epsilon$, respectively, for mesh size $n$, dimension $d$, and dynamic range $\epsilon$. Therefore, we justify using mixed precision since its error is comparable to the discretization error already present in FNO.

Motivated by these theoretical results, we introduce a mixed-precision training method by optimizing the memory-intensive half-precision tensor contractions. We devise a simple and lightweight greedy strategy to find the optimal contraction order, which considers vectorization for each intermediate tensor.
Unfortunately, na\"{i}vely training neural operators in mixed precision leads to prohibitive numerical instability due to overflows in the FNO block.
To address this issue, we study numerical stabilization techniques for neural operators, finding that \texttt{tanh} pre-activation before each FFT consistently avoids numerical instability.
Further, we incorporate additional vectorization steps for complex-valued inputs, not present in standard packages designed for real-valued networks. 

We carry out extensive experiments to show a significant reduction in memory usage and improved throughput without sacrificing accuracy; see \cref{fig:ball_figure}. We consider three variants of FNO: tensorized (TFNO) \citep{kossaifi2023multigrid}, spherical (SFNO) \citep{bonev2023sfno}, and geometry-informed (GINO) \citep{li2023gino}.
Across different datasets and GPUs, our method results in up to 58\% improvement in training throughput and 50\% reduction in GPU memory usage with little or no reduction in accuracy.

Furthermore, we show that our method is discretization convergent via zero-shot super-resolution experiments. 
Finally, we propose a precision schedule training routine, which transitions from mixed to full precision during training; we show this method achieves better than the baseline full-precision accuracy.
Our mixed-precision training routine is lightweight and easily added to new neural operators. 
We release our codebase and all materials needed to reproduce our results at \url{https://github.com/neuraloperator/neuraloperator}. 

\noindent\textbf{We summarize our main contributions below:}
\begin{itemize}[topsep=1pt, itemsep=2pt, parsep=0pt, leftmargin=5mm]
\item We introduce the first mixed-precision training routine for neural operators, which optimizes the memory-intensive tensor contraction operations in the spectral domain, and we propose using \texttt{tanh} pre-activations to minimize numerical instability.

\item  We theoretically ground our work by characterizing the precision and discretization errors of the FNO block, showing that these errors are comparable, proving that, done right, mixed-precision training of neural operators leads to little or no performance degradation.
%
\item  We empirically verify the superiority of our mixed-precision training approach on three state-of-the-art neural operators, TFNO, GINO, and SFNO, across four different datasets and GPUs.
Our method uses \textbf{half the memory} and \textbf{increases training throughput up to 58\%} across different GPUs with little or no reduction in accuracy ($<0.1\%$).

\item We provide an efficient implementation of our approach in PyTorch, which we open-source, along with all data needed to reproduce our results.
\end{itemize}


\section{Background and Related Work} \label{sec:related_work}

\paragraph{Neural Operators.}
Many real-world scientific and engineering problems rely on solving partial differential equations (PDEs).
As such, there is a recent focus on using machine learning-based methods to solve PDEs~\citep{gupta2021multiwavelet,bhatnagar2019prediction,lu2019deeponet,adler2017solving}.
However, most of these methods use standard neural networks and are therefore limited to a mapping between fixed input and output grid. In other words, they operate on a fixed, regular discretization and cannot learn the continuous mapping between function spaces.
\emph{Neural operators} are a new technique that addresses this limitation by directly learning maps between function spaces \citep{li2021physics,li2020neural,li2020multipole,kovachki2023neural}.
The input and output functions to neural operators can be in any resolution or mesh, and the output function can be evaluated at any point in the domain; therefore, neural operators are \emph{discretization convergent}:
once the neural operator is trained, it can be evaluated, without any retraining, at any resolution, and it converges to a unique limit under mesh refinement \citep{kovachki2023neural}.

\textbf{FNO and extensions.} The Fourier neural operator, inspired by spectral methods, is a highly successful neural operator \citep{li2021fourier,gopakumar2023fourier,renn2023forecasting,wen2022accelerating}.
Let $\A:\{a:D_{\A}\rightarrow \R^{d_{\A}}\}$ and $\U:\{u:D_{\U}\rightarrow \R^{d_{\U}}\}$ denote the input and output function spaces, respectively.
In this work, we consider the case where $D_{\A}=D_{\U}\subset \R^d$ for $d\in\N$.
Given a dataset of pairs of initial conditions and solution functions $\{a_j,u_j\}_{j=1}^N$, which are consistent with an operator $\G(a_j)=u_j$ for all $1\leq j\leq N$, the goal is to learn a \emph{neural operator} $\G_\theta$ that approximates $\G$.
The primary operation in FNO is the Fourier convolution operator,
$(\K v_t) (x)=\F^{-1} (R\cdot T_K (\F v_t))(x),$ $\forall x\in D$,
where $\F$ and $\F^{-1}$ denote the Fourier transform and its inverse, $R$ denotes a learnable transformation, $T_K$ denotes a truncation operation, and $v_t$ denotes the function at the current layer of the neural operator.
We use the discrete Fast Fourier Transform (FFT) and its inverse (iFFT) to implement this operator on discrete data.

Building on FNO,~\citet{kossaifi2023multigrid} introduced the Tensorized Fourier Neural Operator (\textbf{TFNO}). Building on tensor methods, which have proven very successful in deep learning~\cite{9420085,PANAGAKIS20241009}, TFNO computes a tensor factorization~\citep{kolda2009tensor} of the weight tensors in the spectral domain, acting as a regularizer that boosts performance while decreasing parameter count.
Additionally, two FNO-based neural operators have recently been proposed to improve performance in non-regular geometries.
The Spherical Fourier Neural Operator (\textbf{SFNO})~\citep{bonev2023sfno} is an extension of FNO to the spherical domain, which is achieved by the use of the spherical convolution theorem \citep{driscoll1994fourier}.
The Geometry-Informed Neural Operator (\textbf{GINO}) \citep{li2023gino} is a highly efficient FNO-based architecture for solving PDEs with varying, irregular geometries. 
The architecture consists of an FNO, along with graph neural operators to transform the irregular grid inputs into and from regular latent grids on which FNO can be efficiently applied~\citep{li2020neural}.


\paragraph{Mixed-Precision Training.}
Mixed-precision training of neural networks consists of reducing runtime and memory usage by representing input tensors and weights (and performing operations) at lower-than-standard precision.
For example, PyTorch has a built-in mixed-precision mode called automatic mixed precision (AMP), which places all operations at \texttt{float16} rather than \texttt{float32}, with the exception of reduction operations, weight updates, normalization operations, and specialized operations \citep{paszke2019pytorch}.
\texttt{bfloat16} and \texttt{TF32} are common mixed-precision methods, yet they do not support discrete Fourier transforms, which are essential to FNOs \citep{kalamkar2019study,dean2012large}.
There is a variety of work targeted at quantization for inference, yet these works do not reduce memory during training \citep{goel2020survey}.

While well-studied for standard neural nets, mixed-precision training has not been studied for FNO \citep{zhao2022lns,de2018high,micikevicius2017mixed,jia2018highly}.
The most similar work to ours is FourCastNet~\citep{pathak2022fourcastnet}, a large-scale climate model that uses mixed precision with Adaptive Fourier Neural Operators \citep{guibas2022adaptive}. However, mixed precision is not applied to the FFT or complex-valued multiplication operations, a key challenge the current work addresses.
Very recently, another work studies the method of quantization for FNO at inference time \citep{dool2023efficient}. However, unlike our work, they only study methods that improve memory usage at inference time, not training time.
Another paper has appeared recently, which proposes to apply mixed precision to PINNs and DeepONet \citep{hayford2024speeding}.

\section{Guaranteed approximation bounds}\label{sec:theory}

In this section, to motivate our mixed-precision neural operator training routine in~\cref{sec:experiments}, we theoretically show that the inherent \emph{discretization error} in the FNO block, from computing the discrete Fourier transform instead of the Fourier transform, is comparable to the \emph{precision error}, from computing the FNO block in half precision rather than in full precision.
We present our results in terms of a forward Fourier transform, and we give the full details and discussion in \cref{app:theory}.

We start by formally defining the discretization error.
Let $D$ denote the closed unit hypercube $[0,1]^d$ for dimension $d\in\N$.
Let $Q_1,\dots,Q_n$ denote the unique (up to $d$) partitioning of $D$, such that each $Q_j$ is a hypercube with sidelength $\nicefrac{1}{m}$.
For each $1\leq j\leq n$, let $\xi_j\in Q_j$ denote the vertex that is closest to the origin; formally, we have $Q_j=\prod_{k=1}^d [s_{j,k},t_{j,k}]$, and we define $\xi_j=(s_{j,1},\dots,s_{j,d})$.

Let $v:D\rightarrow \R$ denote an intermediate function within the FNO.
Recall from the previous section that the primary operation in FNO is the Fourier convolution operator, $(\K v_t) (x)$.
We say the \emph{discretization error} of $\F(v)$ is the absolute difference between the Fourier transform of $v$, and the discrete Fourier transform of $v$ via discretization of $v$ on $\Q_d=(\{Q_1,\dots,Q_n\},\{\xi_1,\dots,\xi_n\})$.
Formally, given Fourier basis function $\varphi_\omega(x)=e^{2\pi i  \langle \omega, x \rangle}$,
\begin{equation}
\texttt{Disc}(v,\Q_d,\omega)=\Big|\int_D v(x)\varphi_\omega(x)dx - \sum_{j=1}^n v(\xi_j)\varphi_\omega(\xi_j)|Q_j|\Big|.
\end{equation}
In other words, if we knew the true (infinite-dimensional) input function, we could run FNO using the (continuous) Fourier transform.
However, for real-world applications, we only have access to a discretization of the input, for example, on a $128\times 128$ grid, so we incur a discretization error.
We bound the discretization error as follows.
For the full proof details, see \cref{app:theory}.

\begin{theorem} \label{thm:discretization_error}
For any $D=[0,1]^d$, $M> 0$, and $L \geq 1$, let $\K \subset C(D)$ be the set of L-Lipschitz functions, bounded by $||v||_\infty\leq M$. Then there exists constants $c_1, c_2 > 0$ such that for all $n, d, \omega$, we have
\begin{equation*}
c_1\sqrt{d}\cdot Mn^{-\nicefrac{2}{d}}\leq 
\sup_{v\in\K}\left(\texttt{Disc}(v,\Q_d,1)\right) \text{ and }
\sup_{v\in\K}\left(\texttt{Disc}(v,\Q_d,\omega)\right)\leq c_2\sqrt{d}(|\omega|+L)M n^{-\nicefrac{1}{d}}.
\end{equation*}
\end{theorem}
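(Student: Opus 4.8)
The plan is to handle the two inequalities separately, since the upper bound is a direct quadrature-error estimate while the lower bound requires constructing a worst-case function.

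For the upper bound, I would first note that the quantity $\texttt{Disc}(v,\Q_d,\omega)$ is exactly the error of the left-endpoint (rectangle) rule applied to the integrand $g(x) := v(x)\varphi_\omega(x)$ over the partition $\{Q_j\}$ into $n$ congruent subcubes of sidelength $h = m^{-1} = n^{-1/d}$. The standard way to bound this: write $\int_D g(x)\,dx - \sum_j g(\xi_j)|Q_j| = \sum_j \int_{Q_j} \bigl(g(x) - g(\xi_j)\bigr)\,dx$, and bound each term by $|Q_j|$ times the oscillation of $g$ on $Q_j$, which in turn is at most $\mathrm{Lip}(g)\cdot \mathrm{diam}(Q_j) = \mathrm{Lip}(g)\cdot \sqrt{d}\, h$. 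It remains to estimate $\mathrm{Lip}(g)$: since $|v|\le M$, $v$ is $L$-Lipschitz, $|\varphi_\omega|=1$, and $\varphi_\omega$ is $2\pi|\omega|$-Lipschitz (in the appropriate norm — here I would be a little careful about whether $|\omega|$ means the $\ell^1$ or $\ell^2$ norm of the multi-index, and track the resulting $\sqrt{d}$ or dimensional constants), the product rule gives $\mathrm{Lip}(g) \le 2\pi|\omega| M + L \le C(|\omega|+L)M$ (using $L\ge 1$ and $M>0$ to absorb constants). Summing over $n$ cubes with $\sum_j |Q_j| = 1$ yields $\texttt{Disc} \le C\sqrt{d}(|\omega|+L)M\, h = C\sqrt{d}(|\omega|+L)M\, n^{-1/d}$, which is the claimed bound with $c_2$.

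For the lower bound (with $\omega = 1$, i.e.\ $\varphi_1 \equiv 1$, so this is purely about the integral $\int_D v$ versus its rectangle-rule approximation), the plan is to exhibit a single $v\in\K$ on which the rectangle rule is off by $\gtrsim \sqrt{d}\,M n^{-2/d}$. The natural choice is a function that, on each subcube $Q_j$, looks like a "tent" centered at the midpoint: e.g.\ take $v(x) = \min\bigl(M,\ \alpha\cdot \mathrm{dist}(x, \partial(\text{its cube}))\bigr)$, or more cleanly a periodized triangle-wave construction, scaled so that the slope is exactly $L$ (to stay in $\K$) and the amplitude stays $\le M$. The point is that such $v$ vanishes at every vertex $\xi_j$ (so the rectangle-rule sum is $0$) while $\int_D v > 0$; one computes $\int_{Q_j} v = $ (positive quantity of order $\min(M, Lh)\cdot h^d$ per cube), and summing over $n = h^{-d}$ cubes gives a total of order $\min(M, L n^{-1/d})\cdot 1$. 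For $n$ large enough that $Ln^{-1/d} \ge M$... wait — that is the wrong regime; in fact for the claimed rate $n^{-2/d}$ one wants the amplitude regime where the tent's height is $\sim L h = L n^{-1/d}$ but one only captures an $n^{-1/d}$ fraction... Let me restate: with a tent of half-width $h/2$ and slope $L$, the height is $Lh/2$; provided $n$ is large this is $\le M$, so $v\in\K$; then $\int_{Q_j}v \asymp L h \cdot h^{d} \cdot (\text{const depending on }d) $ — hmm, that over-counts. The honest computation is $\int_{Q_j} v \asymp (\text{height})\times(\text{volume}) \asymp Lh\cdot h^d$, and $n\cdot Lh\cdot h^d = L h = L n^{-1/d}$, giving rate $n^{-1/d}$, not $n^{-2/d}$. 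So the $n^{-2/d}$ in the theorem must come from a smarter lower-bound function exploiting that left-endpoint error has a leading term $\tfrac12 h \sum_j \nabla v(\xi_j)\cdot(\dots)$ that can telescope/cancel unless one arranges the function's gradient to have consistent sign — i.e.\ one should use a function like $v(x) = M\cdot\prod_{k}\phi(x_k/h - \lfloor\cdot\rfloor)$ built from a fixed profile whose one-cell rectangle-rule error is a fixed fraction of its integral; summing $n$ such identical cells each contributing error $\asymp (\text{one-cell error}) = c\cdot h\cdot\int_{\text{cell}} \asymp c h\cdot M h^d / h = $...

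Let me just say: the lower-bound construction is the main obstacle, and the plan is to build $v$ as an $h$-periodic tiling of a fixed single-cell profile $\psi$ chosen so that (i) the Lipschitz constant of the tiling is $\le L$, which forces the single-cell amplitude to scale like $Lh$ and hence (for large $n$) like $\le M$, and (ii) the single-cell rectangle-rule error is a positive constant fraction of the single-cell integral. Since there are $n$ cells each of measure $h^d$ and the per-cell error is $\asymp (\text{amplitude})\cdot h^d \asymp Lh\cdot h^d$, no — the rate issue means the profile must be chosen with amplitude $\asymp M$ independent of $h$ but then Lipschitz constant $\asymp M/h$, violating $L$-Lipschitz for large $n$; the resolution, and where the $n^{-2/d}$ genuinely comes from, is that one fixes the function $v$ (one specific $L$-Lipschitz, $M$-bounded function, e.g.\ $v(x)=M\sin^2(\pi x_1)$ rescaled, or even a fixed quadratic bump), and then the rectangle-rule error for a fixed $C^2$-ish function on a uniform grid is $\Theta(h^2) = \Theta(n^{-2/d})$ by the midpoint/Euler–Maclaurin expansion — the $\sqrt d$ and $M$ factors coming from the $d$ coordinate directions and the normalization of $v$'s second derivative. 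So the cleanest route to the lower bound: pick $v_\ast(x) = M\prod_{k=1}^d g(x_k)$ for a fixed smooth $g$ with $g(0)=g(1)=0$, $\|g\|_\infty\le 1$, $\mathrm{Lip}(g)$ small enough that $v_\ast\in\K$, and directly Taylor-expand $\int_{Q_j} v_\ast - v_\ast(\xi_j)|Q_j|$ to second order, showing the first-order terms sum to something non-negative and the aggregate second-order term is $\asymp \sqrt d\, M h^2$. I expect verifying the sign/non-cancellation of this expansion, and getting the $\sqrt d$ dependence exactly rather than a general dimensional constant, to be the fiddliest part; everything else is routine quadrature estimates.
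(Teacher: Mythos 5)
Your upper-bound plan is exactly the paper's argument: write the error as $\sum_j\int_{Q_j}(g(x)-g(\xi_j))\,dx$ for $g=v\varphi_\omega$, control the oscillation of $g$ on a cell of diameter $\sqrt{d}/m$ via the product rule using $\|v\|_\infty\le M$, the $L$-Lipschitz bound on $v$, $|\varphi_\omega|\le 1$, and the $O(|\omega|)$-Lipschitz bound on the basis function, then sum over the $n$ cells of total volume $1$. That half is sound and identical in substance to the paper's proof.

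The lower bound is where the gap is, in two respects. First, you read ``$\omega=1$'' as ``$\varphi_1\equiv 1$''; in fact $\varphi_1(x)=e^{2\pi i\langle 1,x\rangle}$ is a genuinely oscillatory factor (it is $\varphi_0$ that is constant), so the quantity to bound below is the left-endpoint quadrature error for the integrand $v(x)\varphi_1(x)$, not for $v$ alone. This matters for precisely the mechanism you spend the rest of the plan circling: the $n^{-2/d}$ rate requires the first-order Euler--Maclaurin boundary term of the one-dimensional left-endpoint rule to vanish, and in the paper that cancellation is supplied by the Fourier factor, not by $v$. The paper takes $v(x)=x_1\cdots x_d$, so that per coordinate the integrand is $x\sin(2\pi x)$, which vanishes at both endpoints; the one-dimensional Riemann sum is then evaluated in closed form as $-\frac{1}{2m}\cot(\pi/m)=-\frac{1}{2\pi}+\Theta(m^{-2})$, and the product structure over the $d$ coordinates converts the per-coordinate $\Theta(m^{-2})$ discrepancy into a total of order $d\,m^{-2}=d\,n^{-2/d}$ (times a $d$-dependent prefactor). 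Your final proposal --- a product $M\prod_k g(x_k)$ with $g(0)=g(1)=0$ and a second-order Taylor expansion --- is the same mechanism, but you never commit to a concrete $g$, never verify that the second-order contributions do not themselves cancel, and are expanding against the wrong basis function; those are exactly the steps that constitute the proof, so as written the lower bound is not established. Your closing worry about the $\sqrt d$ factor is well founded: the product construction produces a prefactor that decays exponentially in $d$ rather than the dimension-free $c_1$ in the statement, but that weakness is shared by the paper's own argument and is not something your route would have repaired.
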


Note that the upper bound is true for all $\omega$, while the lower bound is shown for $\omega=1$. It is an interesting question for future work to show the lower bound for $\omega>1$.

Intuitively, we bound the Riemann sum approximation of the true integral by showing that in the case where the function $v$ is bounded and Lipschitz, then each of the $n$ intervals contributes $n^{-(1+\nicefrac{1}{d})}$ error, up to parameters of the function.
Note that the discretization error upper bound also scales linearly for the frequency $\omega$, although we empirically show in \cref{sec:experiments} that the energy is concentrated in the lower frequency modes. 
The lower bound is satisfied when $v(x)=x_1 \cdots x_d$.
Furthermore, note that given a function $v$, if $n$ is not large enough, the discretization error can become arbitrarily large due to aliasing. For example, the function $v(x)=M\sin\left(2\pi (n+\omega) x\right)$ has discretization error $\Omega(M)$.

Next, we say that the \emph{precision error} of $\F(v)$ is the absolute difference between $\F(v)$ and $\overline{\F(v)}$, computing the discrete Fourier transform in half precision. Specifically, we define an $(a_0, \epsilon,T)$-\emph{precision system} as a mapping $q:\R\rightarrow S$ for the set $S=\{0\}\cup\{a_0(1+\epsilon)^i\}_{i=0}^T\cup\{-a_i(1+\epsilon)^i\}_{i=0}^T$, such that for all $x\in\R$, $q(x)=\texttt{argmin}_{y\in S}|x-y|$.
This represents a simplified version of the true mapping used by Python from $\R$ to \texttt{float32} or \texttt{float16} (we give further justification of this definition in \cref{app:theory}).
Then, we define 
\begin{equation}
\texttt{Prec}(v,\Q_d,q,\omega)=\Big|\sum_{j=1}^n v(\xi_j)\varphi_\omega(\xi_j)|Q_j|
-\sum_j q(v(\xi_j))q(\varphi_\omega(\xi_j))|Q_j|\Big|.
\end{equation}

Now, we bound the precision error as follows.

\begin{theorem} \label{thm:precision_error}
For any $D=[0,1]^d$, $M> 0$, and $L \geq 1$, let $\K \subset C(D)$ be the set of L-Lipschitz functions, bounded by $||v||_\infty\leq M$. Furthermore let $q$ be an $(a_0, \epsilon,T)$-precision system.
There exists $c>0$ such that for all $n,d,\omega$,
\begin{equation*}
\sup_{v\in\K}\left(\texttt{Prec}(v,\Q_d,q,\omega)\right)\leq c\cdot \epsilon M.
\end{equation*}
\end{theorem}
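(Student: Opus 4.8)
The plan is to reduce everything to the elementary fact that the precision system $q$ commits a \emph{relative} error of at most $\epsilon$ (plus a negligible absolute error from underflow), and then to exploit the partition-of-unity identity $\sum_{j=1}^n |Q_j| = |D| = 1$, so that summing these relative errors against the weights $|Q_j|$ keeps the constant uniform in $n$, $d$, and $\omega$.

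First I would record the basic rounding estimate for $q$: whenever $a_0 \le |x| \le a_0(1+\epsilon)^T$, the nearest element of $S$ to $x$ lies within half the local spacing $a_0(1+\epsilon)^{i}\epsilon$, so $|q(x)-x| \le \tfrac{\epsilon}{2}|x|$ and hence $|q(x)| \le (1+\epsilon)|x|$; for $|x| < a_0$ one has $|q(x)-x| \le a_0$. Under the standing assumptions on the precision system stated in \cref{app:theory} (no overflow on the relevant range, and $a_0$ small relative to $\epsilon M$), this yields $|q(x)-x| \le \epsilon|x| + a_0$ and $|q(x)| \le (1+\epsilon)|x| + a_0$ for every $x$ arising below. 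Since $q$ acts on complex numbers componentwise, for $z = \mathrm{Re}(z) + i\,\mathrm{Im}(z)$ with $|\mathrm{Re}(z)|, |\mathrm{Im}(z)| \le 1$ — which holds for $z = \varphi_\omega(\xi_j)$, as $|\varphi_\omega(\xi_j)| = 1$ — Pythagoras gives $|q(z)-z| \le \sqrt{2}(\epsilon + a_0)$.

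Next I would split the per-term error. Writing $b_j = v(\xi_j)$ and $\phi_j = \varphi_\omega(\xi_j)$, the triangle inequality applied to the definition of $\texttt{Prec}$ gives $\texttt{Prec}(v,\Q_d,q,\omega) \le \sum_{j=1}^n |Q_j|\,|b_j\phi_j - q(b_j)q(\phi_j)|$, and the identity $b_j\phi_j - q(b_j)q(\phi_j) = (b_j - q(b_j))\phi_j + q(b_j)(\phi_j - q(\phi_j))$ bounds each summand by $|b_j - q(b_j)|\,|\phi_j| + |q(b_j)|\,|\phi_j - q(\phi_j)|$. Plugging in $|\phi_j| = 1$, $|b_j| \le \|v\|_\infty \le M$, and the estimates from the previous step — $|b_j - q(b_j)| \le \epsilon M + a_0$, $|q(b_j)| \le (1+\epsilon)M + a_0$, $|\phi_j - q(\phi_j)| \le \sqrt{2}(\epsilon + a_0)$ — and using $\epsilon \le 1$ together with the absorption of the lower-order $a_0$ terms permitted by the assumptions on $q$, every summand is at most $c\,\epsilon M$ for an absolute constant $c$. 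Summing against the weights, $\texttt{Prec}(v,\Q_d,q,\omega) \le c\,\epsilon M \sum_{j=1}^n |Q_j| = c\,\epsilon M\,|D| = c\,\epsilon M$, uniformly in $v \in \K$, $n$, $d$, and $\omega$.

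The main obstacle is not any single estimate but ensuring the constant is genuinely uniform in the dimension $d$ and the frequency $\omega$: the number of cells $n = m^d$ grows with $d$ and the discrete sum has $n$ terms, so a careless bound picks up an $n$-dependent (hence $d$- and resolution-dependent) factor. The identity $\sum_j |Q_j| = 1$ is exactly what converts a sum of $n$ per-cell relative errors into a single relative error, and $\omega$-independence follows because rounding the unimodular factor $\varphi_\omega(\xi_j)$ costs only $O(\epsilon)$ regardless of $\omega$ — in sharp contrast to \cref{thm:discretization_error}, whose bound scales with $|\omega|$. A secondary point requiring care is underflow: values of $v$ (or of $\cos/\sin$) below $a_0$ in magnitude round to $0$ with absolute rather than relative error, which is why the assumption relating $a_0$ to $\epsilon M$ is needed; the complete accounting, including the analogous treatment of rounding within the products and the outer summation, is deferred to \cref{app:theory}.
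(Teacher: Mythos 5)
Your proof is correct and takes essentially the same route as the paper's: the same bilinear splitting of $v(\xi_j)\varphi_\omega(\xi_j) - q(v(\xi_j))q(\varphi_\omega(\xi_j))$ into two rounding-error terms, each bounded by the relative precision $\epsilon$ times the magnitude of the other factor, then summed against the weights using $\sum_{j=1}^n |Q_j| = 1$. Your explicit handling of underflow (absolute error $a_0$ below the representable range) and the componentwise complex rounding via Pythagoras are minor refinements of details the paper elides (it treats real and imaginary parts separately and simply asserts the relative-error bounds), but they do not change the argument.
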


Once again, we show that each interval contributes $\nicefrac{\epsilon}{n}$ error up to the function's parameters.
Taken together, \cref{thm:discretization_error} and \cref{thm:precision_error} show that asymptotically, the discretization error can be as large as $M n^{-\nicefrac{2}{d}}$ (up to constants), while the precision error is always bounded by $\epsilon M$.
For example, for \texttt{float16} precision ($\epsilon=10^{-4}$), our theory suggests that the precision error is comparable to the discretization error for three-dimensional meshes up to size 1\,000\,000.
Given this motivation, we present our mixed-precision training pipeline in the next section.

\section{Empirical Study of our Mixed-Precision Neural Operator} \label{sec:experiments}\vspace{-5pt}

We empirically study our proposed mixed-precision pipeline for FNO training and demonstrate significant memory usage reduction and large increases in training throughput on various GPUs.
We then study failure modes of a na\"{i}ve application of mixed precision and present our empirically justified pre-activation-based stabilizer solution.
Finally, we perform extensive experiments and ablations on our end-to-end training method across different architectures and datasets.

\subsection{Datasets and Experimental Setup} \label{subsec:setup}\vspace{-5pt}

We summarize each of the four datasets we use; see \cref{app:datasets} for detailed descriptions. 

\textbf{Navier-Stokes.} We consider the Navier-Stokes equations for a viscous, incompressible fluid in vorticity form on the unit torus. We use the same dataset as~\citet{kossaifi2023multigrid}, with a Reynolds number of $500$, composed of 10\,000 training samples and 2000 test samples with resolution $128\times 128$.
\textbf{Darcy Flow.}
We consider the steady-state 2D Darcy Flow equation, which models fluid flow through a porous medium. We use the same dataset as~\citet{li2021fourier}, with 5000 training samples and 1000 test samples at resolution $128\times 128$.
\textbf{Spherical Shallow Water Equations.} We use the dataset from~\citet{bonev2023sfno}, which generates random initial conditions on the sphere at resolution $256\times 512$. At each epoch, 120 training samples and 20 validation samples are generated on the fly.
\textbf{Shape-Net Car and Ahmed-body.}
Our final two datasets are 3D real-world car dataset generated by prior work \citep{umetani2018learning,li2023gino}, which consists of mesh points that represent a unique 3D car, and the goal is to predict the full 3D pressure field.
We use 611 water-tight shapes from car surfaces from Shape-Net \citep{chang2015shapenet}, with 500 samples for training and the rest for the test set. For Ahmed-body, we have 500 for training and 51 for test. 
The spatial resolution for both is $64 \times 64 \times 64$.

\textbf{Experimental Setup.} 
We run the Navier Stokes and Darcy flow experiments on the TFNO architecture \citep{kossaifi2023multigrid}. For the Spherical SWE, we use SFNO \citep{bonev2023sfno} to handle the spherical geometry, and for Shape-Net Car and Ahmed-body, we use GINO \citep{li2023gino} to handle the large-scale, irregular geometry. All models have the FNO as backbone. 
We use the official implementation and default hyperparameters for all models.
%
%
\subsection{Mixed-Precision Module for Complex-Valued Tensor Contraction} \label{subsec:memory}\vspace{-1pt}

\begin{wrapfigure}{r}{0.6\textwidth}
\vspace{-15pt}
\begin{center}
\includegraphics[width=0.5\textwidth]{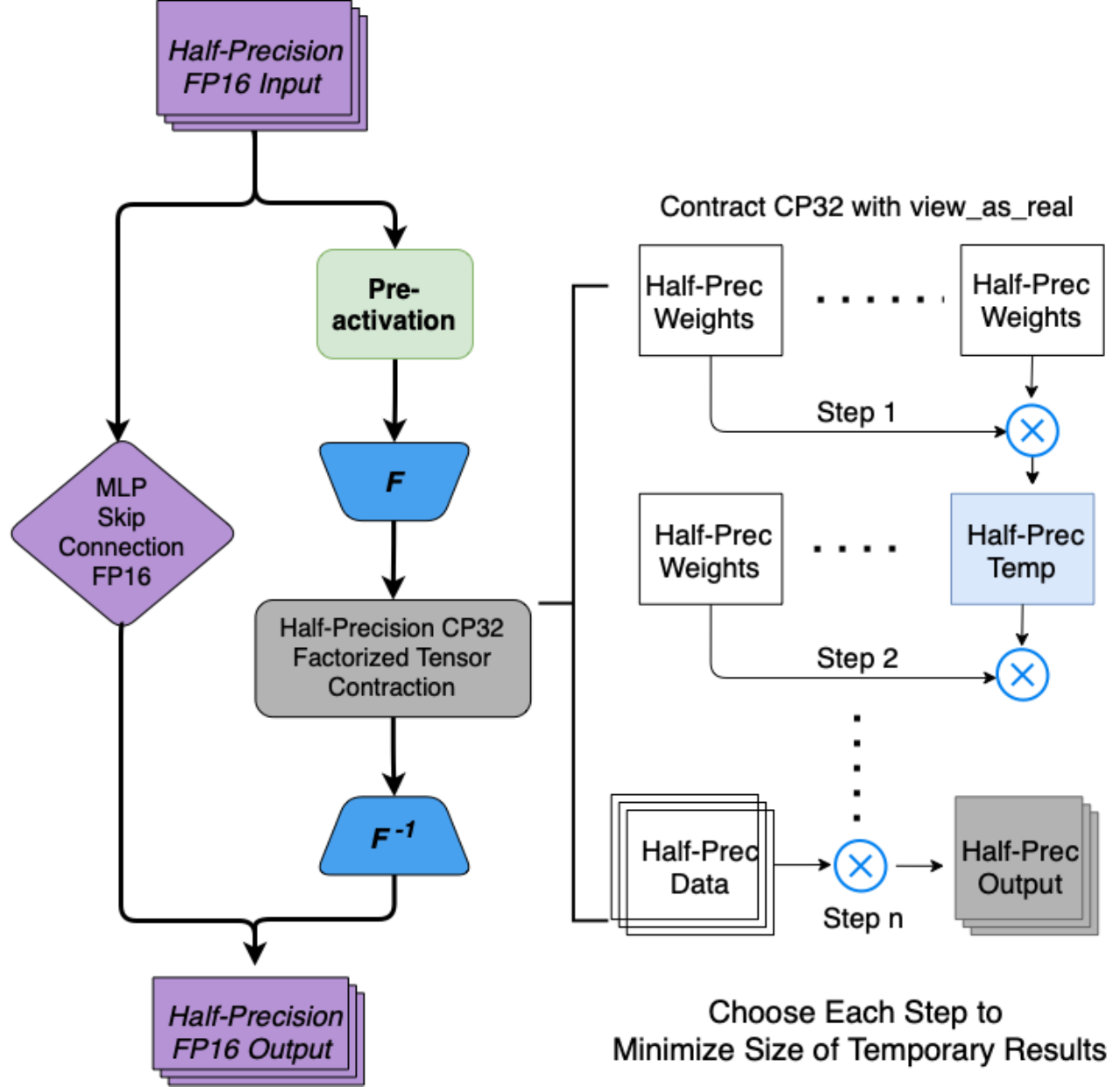}
\caption{
\textbf{Overview of our Mixed-FNO pipeline.}
}
\label{fig:overview}
\end{center}
\vspace{-10pt}
\end{wrapfigure}

Now we introduce our theoretically and empirically motivated mixed-precision pipeline for the FNO block.
We start by profiling FNO training workloads, identifying the complex-valued tensor contraction within the FNO block as the computational bottleneck, accounting for 4 out of the 5 most time-consuming GPU kernels in the entire training pipeline (see \cref{app:profiling} for the full profiling results).
Furthermore, existing mixed-precision tools such as Pytorch's Automatic Mixed Precision (AMP) \citep{paszke2019pytorch} leave the operator in full precision since it only autocasts real-valued modules in FNO. 

We introduce a simple and lightweight workaround: we break down each tensor contraction into sub-expressions consisting of at most two terms and perform each of these contractions by temporarily converting tensors to reals. To optimize the memory usage, we use a simple greedy algorithm to select the next \texttt{einsum} step that minimizes the intermediate tensor size (see \cref{fig:overview}). The complex-valued operations, including the forward FFT, tensor contraction, and inverse FFT, are done in half-precision. This completes the half-precision FNO block as AMP manages non-FNO, real-valued operations. 

Our simple half-precision FNO block module achieves up to 38.6\% reduction in memory and up to 50\% reduction in memory when combined with Pytorch's native AMP; see \cref{fig:memory}.
Note that the reduction from AMP + Half-Prec FNO is greater than the sum of its parts due to lack of the additional casting to and from half-precision.
The memory usage reduction can then be used to train on a larger batch size, thereby improving the overall training throughput and efficiency. On three different Nvidia GPUs, RTX 3090 Ti, V100, and RTX A6000, we demonstrate a consistent improvement in training throughput, in terms of numbers of training samples processed per second, from 1.23X to 1.58X over the baseline in \cref{fig:efficiency}.

\begin{figure*}[t]
\begin{center}
\includegraphics[width=.85\textwidth]{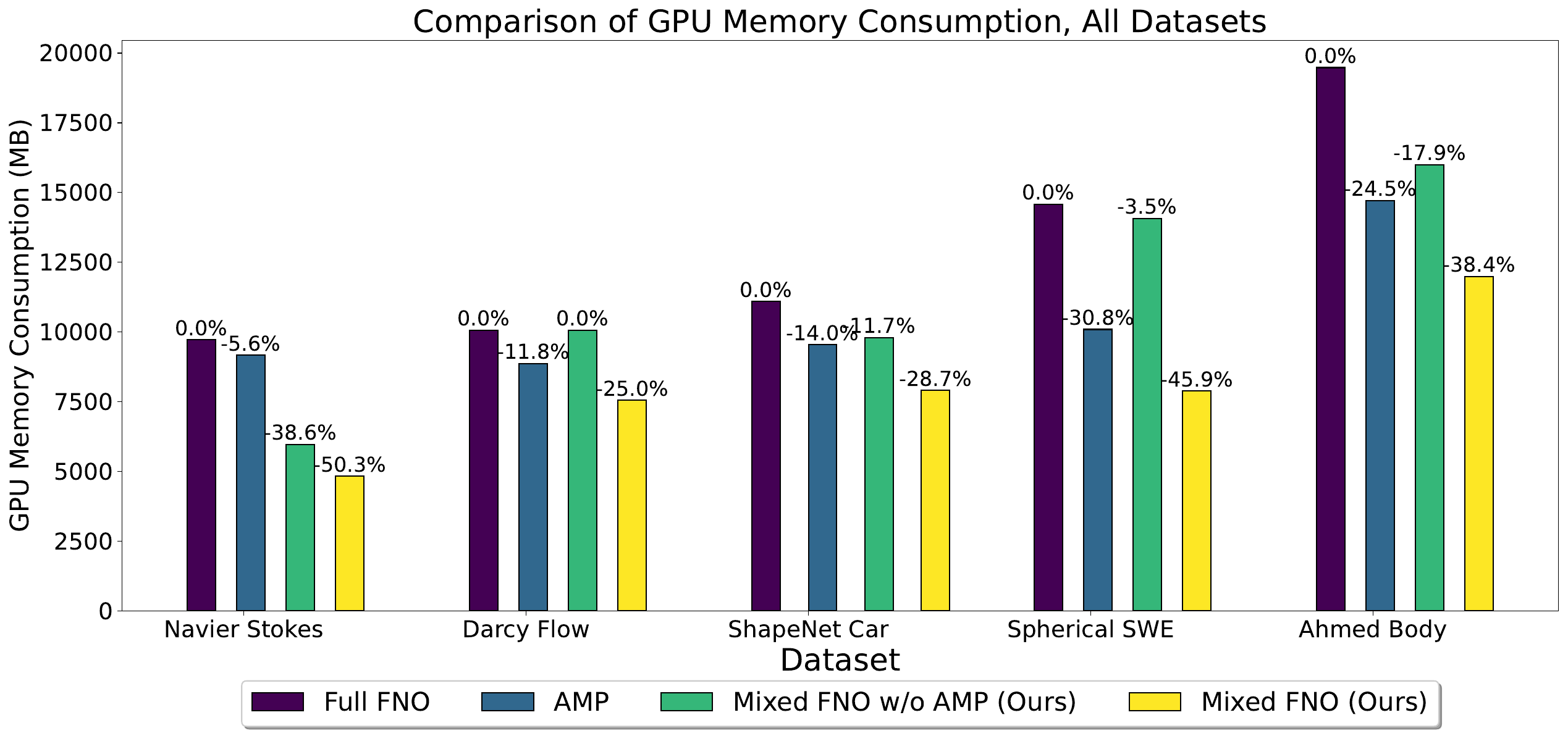}
\caption{
\textbf{GPU memory usage reduction across different variants of neural operators on diverse tasks.}
We use an Nvidia RTX 3090 Ti GPU. Our method reduces memory by up to 50\%, representing a super-linear combination of the two other methods because it avoids additional casting to and from full precision during the forward pass.
}
\label{fig:memory}
\end{center}
\vskip -0.2in
\end{figure*}

\begin{figure*}[t]
\begin{center}
\includegraphics[width=.48\textwidth]{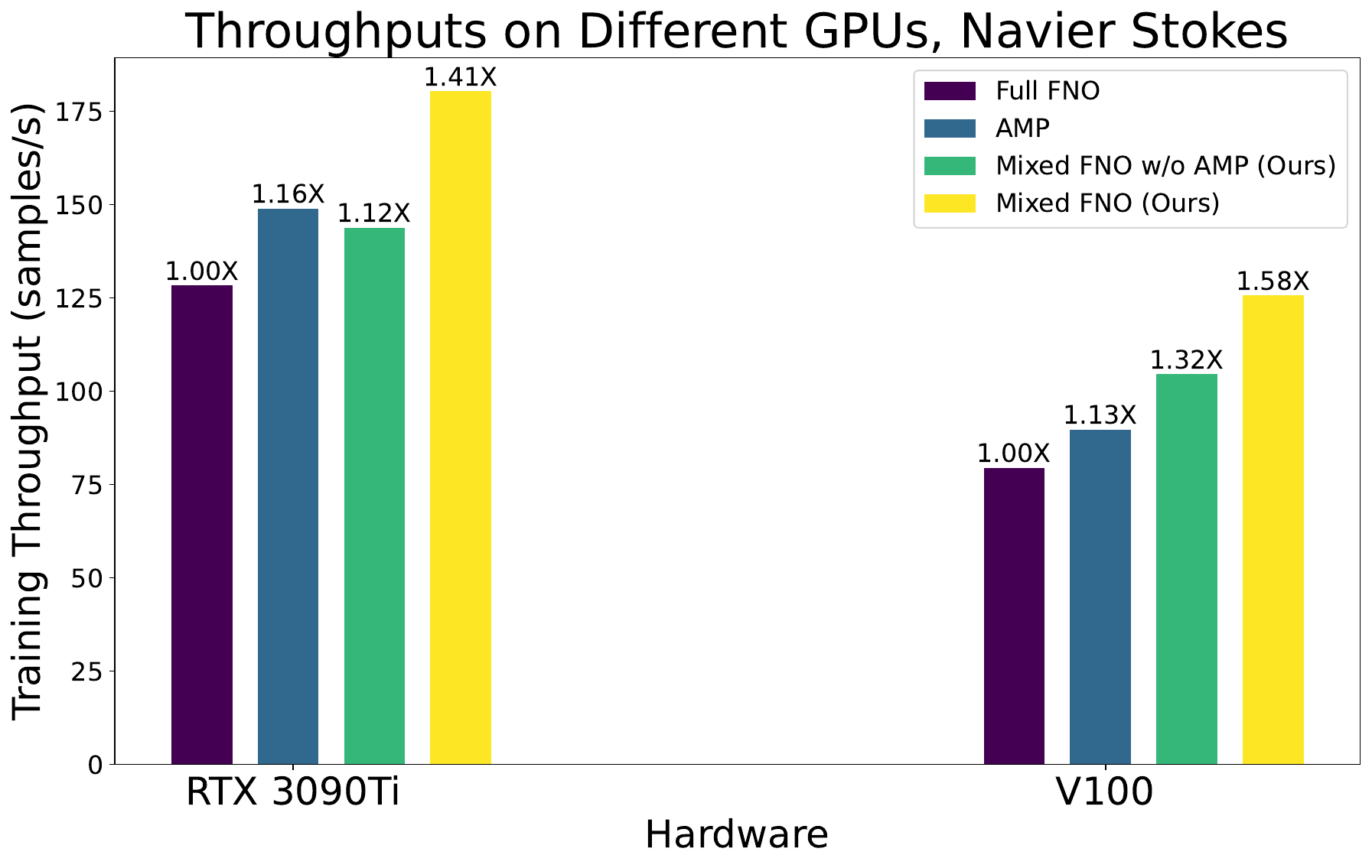}
\includegraphics[width=.48\textwidth]{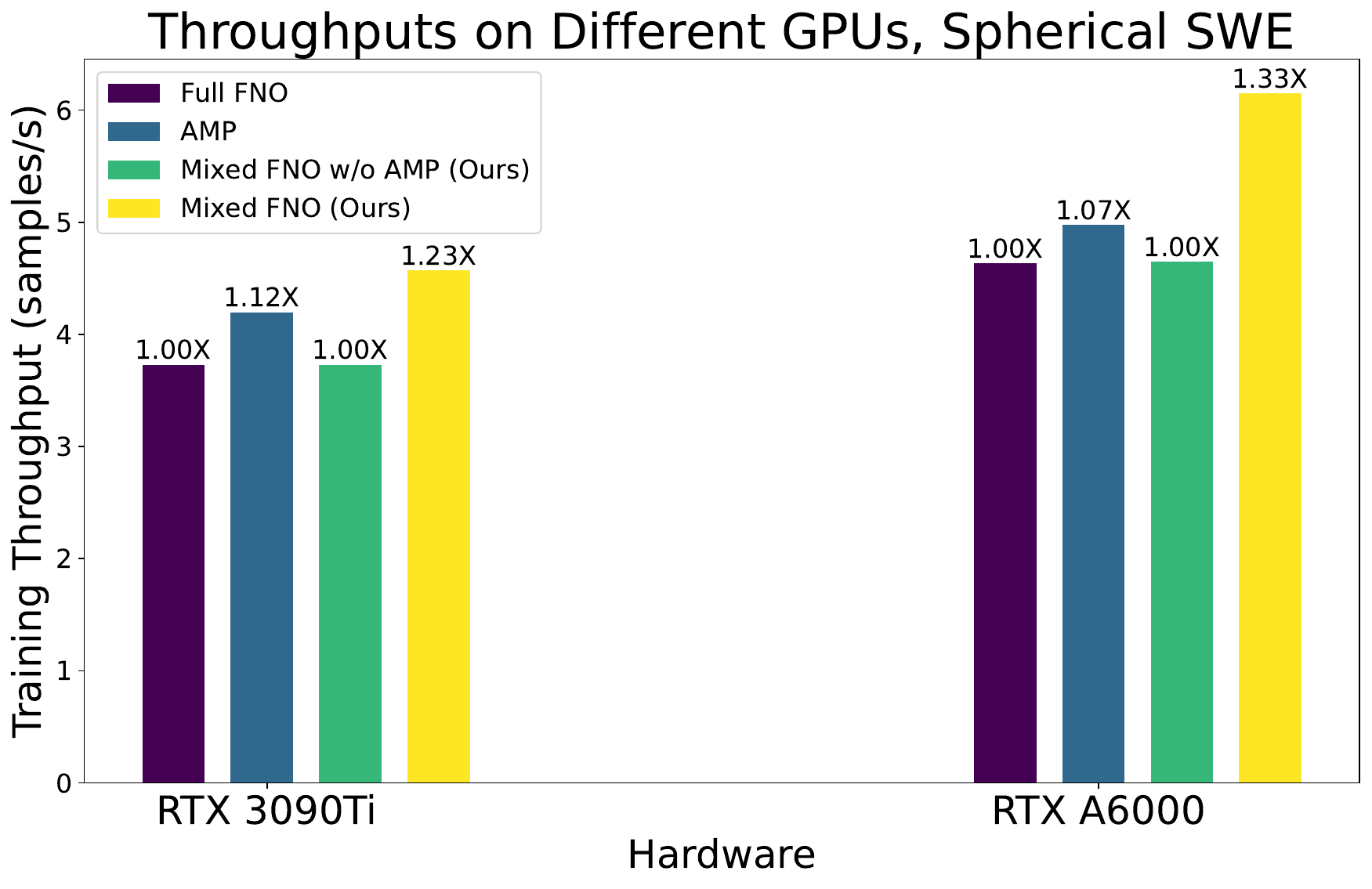}
\caption{
\textbf{Training throughput and runtime as a function of the method, on different GPUs.}
For mixed-precision FNO + AMP, we consistently observe an improvement of training throughput up to \textbf{1.58X} over the baseline with the TFNO model on Navier Stokes, and up to \textbf{1.33X} with the SFNO model on Spherical Shallow Water Equations (SWE).
Our method also improves upon using only AMP in throughput by over \textbf{1.3X} on Navier-Stokes and by over \textbf{1.2X} on Spherical SWE. 
Batch sizes are selected to fully utilize each GPU.
}
\label{fig:efficiency}
\end{center}
\vskip -0.1in
\end{figure*}

\subsection{Numerical Stability via Pre-Activation} \label{subsec:stability}\vspace{-5pt}
Mixed-precision training is prone to underflow and overflow because its dynamic range is significantly smaller than that of full precision \citep{rakka2022mixed}. 
Notably, for all four of the datasets in our study, na\"{i}vely running FNO in mixed precision results in training failure due to \texttt{NaN} outputs. 
Furthermore, we empirically show that many common solutions, including loss scaling, gradient clipping, normalization, and delaying updates, all fail to address the numerical instability of mixed-precision FNO (see \cref{app:stability}).
To mitigate this overflow issue, we find that pre-activation before each forward FFT is a very effective method for overcoming numerical instability. We also find that the \texttt{tanh} pre-activation is the highest-performing operation that we considered according to \cref{tab:preactivation}. 
Unlike other functions, \texttt{tanh} minimizes changes to small inputs, as it is approximately the identity function near 0 and is smoothly differentiable. Furthermore, \texttt{tanh} preserves the discretization-convergent property of FNO. 
The strong performance of \texttt{tanh} is also predicted by our theoretical results in \cref{sec:theory}, since it decreases the $L_\infty$ norm and the Lipschitz constant of the input function.
In \cref{app:stability}, we further show that the \texttt{tanh} pre-activation minimally alters the frequency-domain signal in both amplitude and phase. Finally, we demonstrate through an ablation that \texttt{tanh} has negligible impact on the model's final error (See Appendix \cref{tab:full-prec-tanh}).

\subsection{Error Comparison with Full-Precision}\label{subsec:accuracy}\vspace{-5pt}
Having resolved the critical issue of numerical stability, we demonstrate that our mixed-precision approach achieves errors within 1\% of the full-precision baseline across the four datasets. Additionally, we propose a precision-scheduling technique that transitions from mixed to full precision during training, which performs better than full precision in zero-shot super-resolution inference. 

\begin{figure*}[t]
\begin{center}
\includegraphics[width=0.49\textwidth]{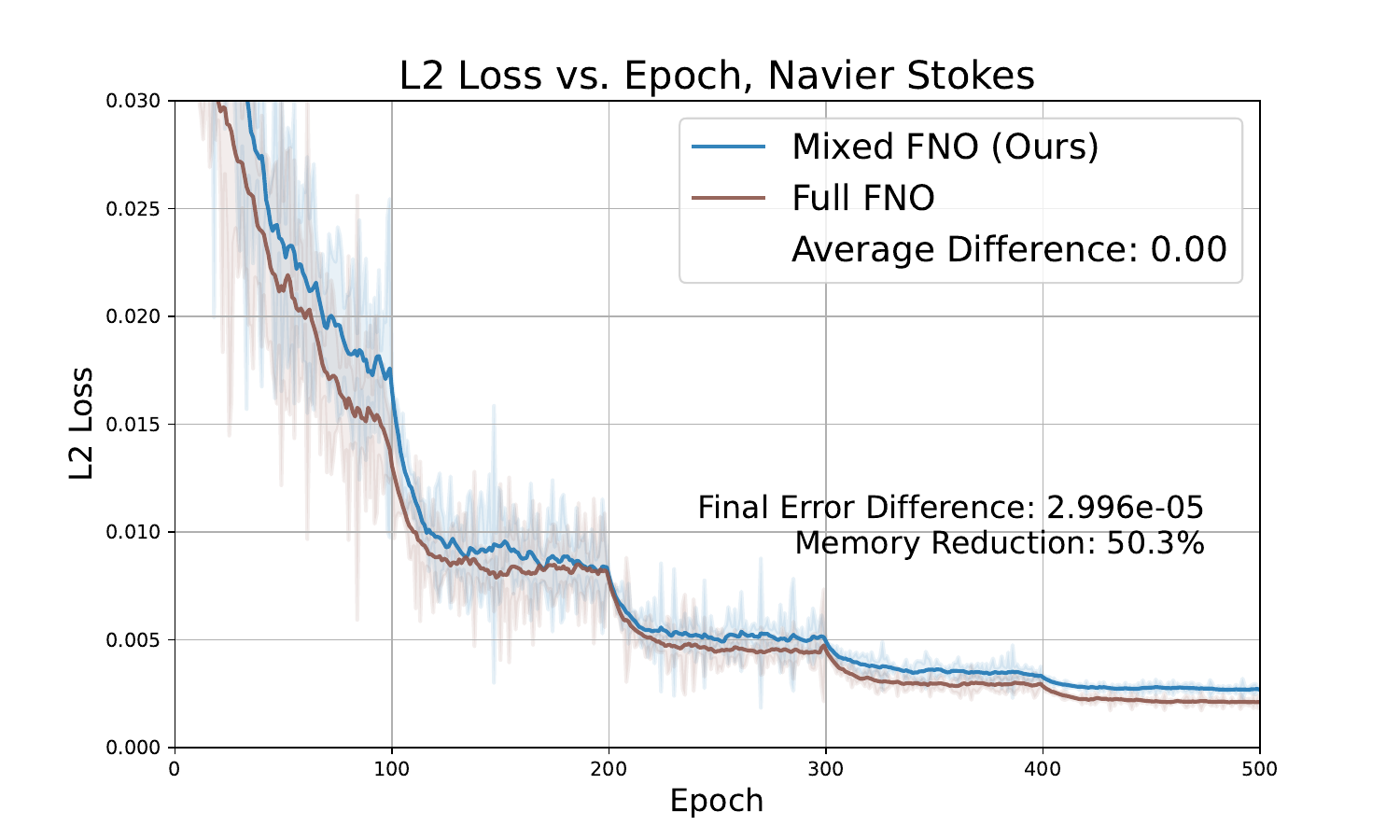}
\includegraphics[width=0.49\textwidth]{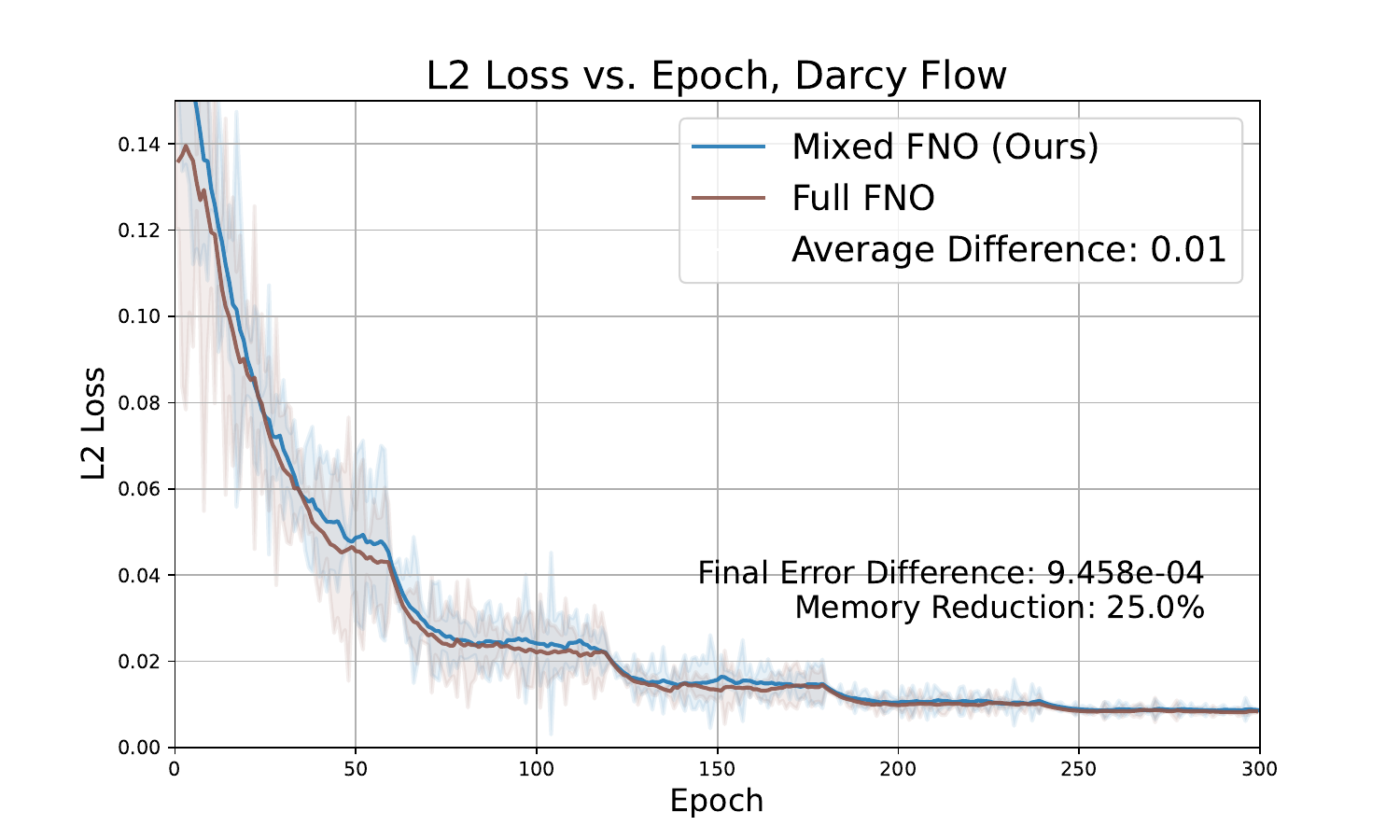}
\includegraphics[width=0.49\textwidth]{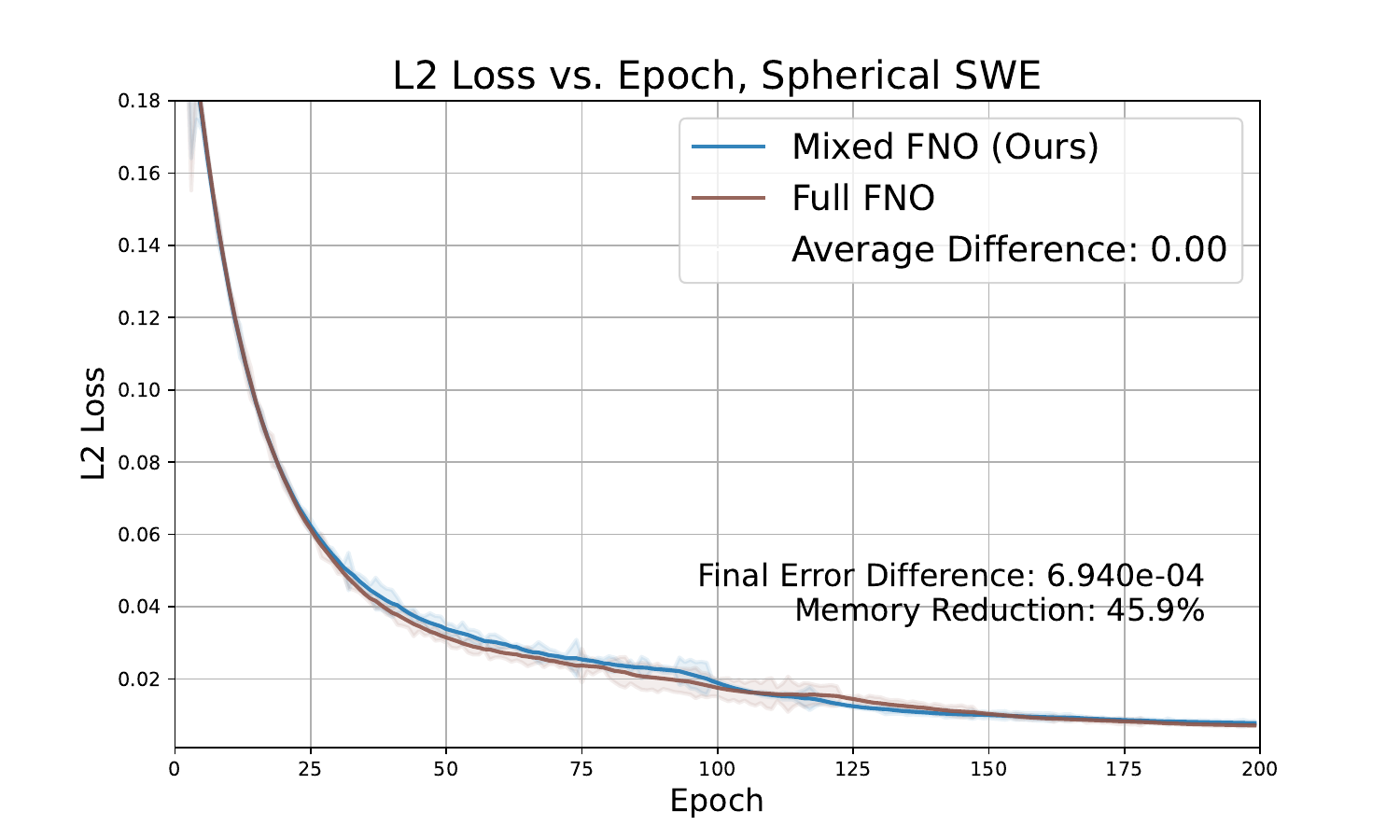}
\includegraphics[width=0.49\textwidth]{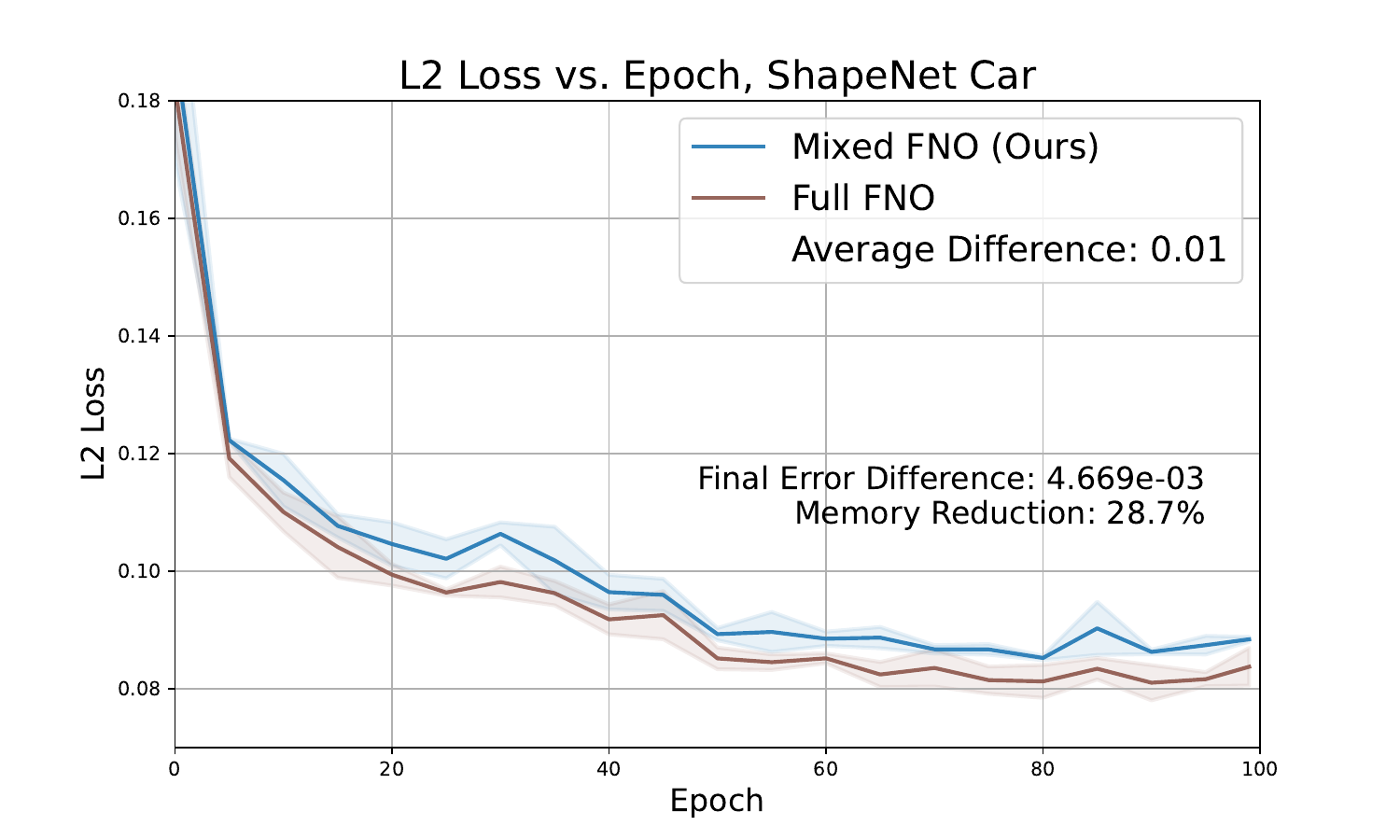}
\caption{
\textbf{Test H1 error curves for FNO on the Navier-Stokes (top left) and Darcy flow (top right) datasets.} Test L2 error curves for GINO on the Shape-Net Car (bottom left) and for SFNO on the Shallow Water Equation (bottom right) datasets.
Each plot shows the mean of \textbf{three} random seeds and standard deviation as error bars. We compute and report the average difference between training curves in the legend. We also annotate the difference in final test errors with the memory savings of our mixed precision approach.  
} 
\label{fig:full_results}
\end{center}
\vspace{-10pt}
\end{figure*}

\textbf{Mixed- vs. Full-Precision Training Curves.}
\cref{fig:full_results} illustrates that our mixed-precision approach achieves test errors on par with full precision throughout the training process, remaining within 1\% of the full-precision baseline. We ensure apples-to-apples comparison by keeping all hyperparameters constant across the two precision settings. These hyperparameter configurations are originally optimized for full precision training, so we show that they are directly transferable to mixed precision. 



\textbf{Precision Scheduling and Zero-Shot Inference.}
An important property of neural operators is their \emph{discretization convergence}, meaning that they can be trained on one resolution and tested on a higher resolution (zero-shot super-resolution) \citep{kovachki2023neural}. To achieve the best result, we propose a precision schedule, in which the first 25\% of training is in mixed-precision, the middle 50\% applying only AMP, and the final 25\% in full precision.
This follows a simple intuition: in the early stages of training, it is okay for gradient updates to be coarser, since the gradient updates are larger overall. However, in the later stages of training, the average gradient updates are much smaller, so full precision is more important. 
We use the Navier-Stokes dataset, trained on $128\times 128$ resolution (the same setting and model as \cref{fig:full_results}), and tested on $256\times 256$, $512\times 512$, and $1024\times 1024$ resolutions; see \cref{tab:super-res}.
We find that 
half-precision has a small decrease in accuracy compared to full precision, and using a precision schedule achieves significantly better generalization.

\begin{table}[t]
\centering
\resizebox{\textwidth}{!}{
\begin{tabular}{lrr rr rr rr}
\toprule
{} & \multicolumn{2}{c}{128x128} & \multicolumn{2}{c}{256x256} & \multicolumn{2}{c}{512x512} & \multicolumn{2}{c}{1024x1024} \\
\cmidrule{2-9}
{} & $H^1$ & $L^2$ & $H^1$ & $L^2$ & $H^1$ & $L^2$ & $H^1$ & $L^2$ \\
\midrule
Full FNO & 0.00557 & 0.00213 & 0.00597 & 0.00213 & 0.00610 & 0.00213 & 0.00616 & 0.00213 \\
Mixed FNO (Ours) & 0.00624 & 0.00236 & 0.00672 & 0.00228 & 0.00688 & 0.00226 & 0.00693 & 0.00226 \\
Precision schedule (Ours) & \textbf{0.00503} & \textbf{0.00170} & \textbf{0.00542} & \textbf{0.00170} & \textbf{0.00555} & \textbf{0.00170} & \textbf{0.00558} & \textbf{0.00170} \\
\bottomrule
\end{tabular}
}
\caption{\textbf{Zero-shot super resolution}. 
We test zero-shot super-resolution by training each model on $128\times 128$ resolution for 19 hours.
Mixed precision has a small decrease in accuracy compared to full precision, and using a precision schedule achieves significantly better accuracy. 
\label{tab:super-res}
}
\vspace{-3mm}
\end{table}


\subsection{Comparison against U-Nets}\vspace{-5pt}
Despite being orignally designed for computer vision tasks, U-Nets have recently been used as PDE surrogates \citep{unet}.
Here, we compare FNOs against the U-Net baseline on the Darcy Flow and Navier-Stokes datasets. As shown in \cref{tab:unet}, FNO outperforms UNet: our mixed-precision approach yields higher memory reduction compared to AMP applied to U-Nets. 
\begin{table}
\centering
\resizebox{0.8\textwidth}{!}{
    \begin{tabular}{lcccc}
    \toprule
    \multirow{ 2}{*}{\textbf{Model}} & \multicolumn{2}{c}{\textbf{Navier-Stokes}} & \multicolumn{2}{c}{\textbf{Darcy Flow}} \\
    \cmidrule{2-3} \cmidrule{4-5}
    & \textbf{Error} & \textbf{Memory Reduction} & \textbf{Error} & \textbf{Memory Reduction} \\
    \midrule
    Full FNO& \textbf{0.003} & \multirow{2}{*}{50.4\%} & 0.01 & \multirow{2}{*}{25.8\%} \\
    Mixed FNO (Ours) & 0.004 &  & \textbf{0.007} &  \\
    \midrule
    Full U-Net& 0.111 & \multirow{2}{*}{20.9\%} & 0.024 & \multirow{2}{*}{24.9\%} \\
    U-Net + AMP & 0.111 &  & 0.022 &  \\
    \bottomrule
    \end{tabular}
    }
    \caption{\textbf{Comparison of $L^2$ error and memory reduction with U-Nets}. 
    FNO consistently shows better final error, and 
    our mixed-precision approach yields significantly more memory reduction on Navier-Stokes than AMP applied to U-Nets. 
    \label{tab:unet}
    }
\vspace{-2mm}
\end{table}

\subsection{Ablation studies}\label{subsec:ablations}\vspace{-5pt}
Here, we perform ablations of our mixed-precision procedure on different parameterizations of FNOs, regularization via reducing frequency modes, and training with other numerical systems.

\textbf{Decomposition of FNO Weights.}
On the Navier-Stokes and Darcy Flow datasets, we adopt a Canonical-Polyadic (\textbf{CP}) factorization of the FNO weights using \cite{kossaifi2019tensorly} to ensure better final error. For a generic problem, FNO weights are usually not factorzied and are saved in their original, \textbf{dense} form. As shown in \cref{fig:factorization_ablation}, we experiment with both scenarios and show that our mixed-precision approach improves runtime without sacrificing accuracy.
\begin{figure*}[t]
\vspace{-5pt}
\begin{center}
\includegraphics[width=0.47\textwidth]{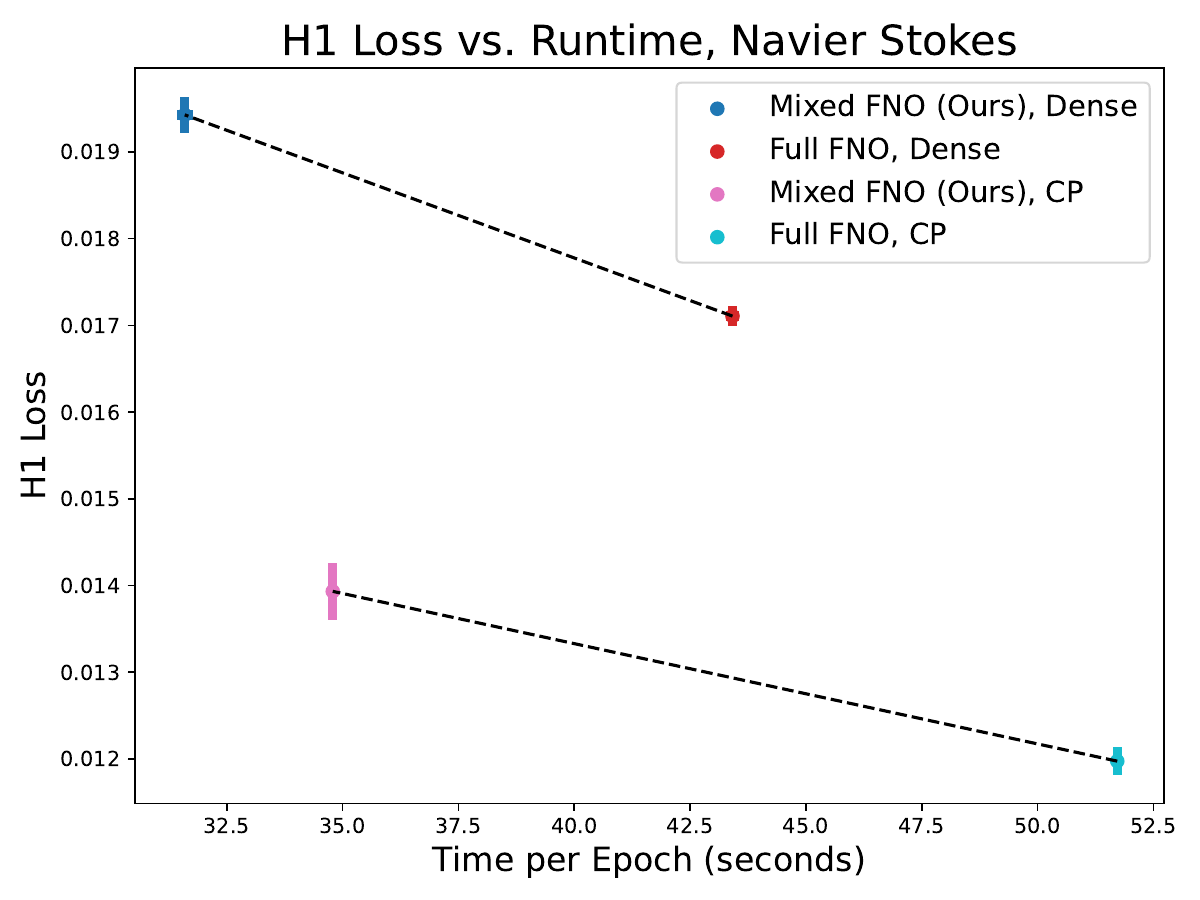}
\includegraphics[width=0.47\textwidth]{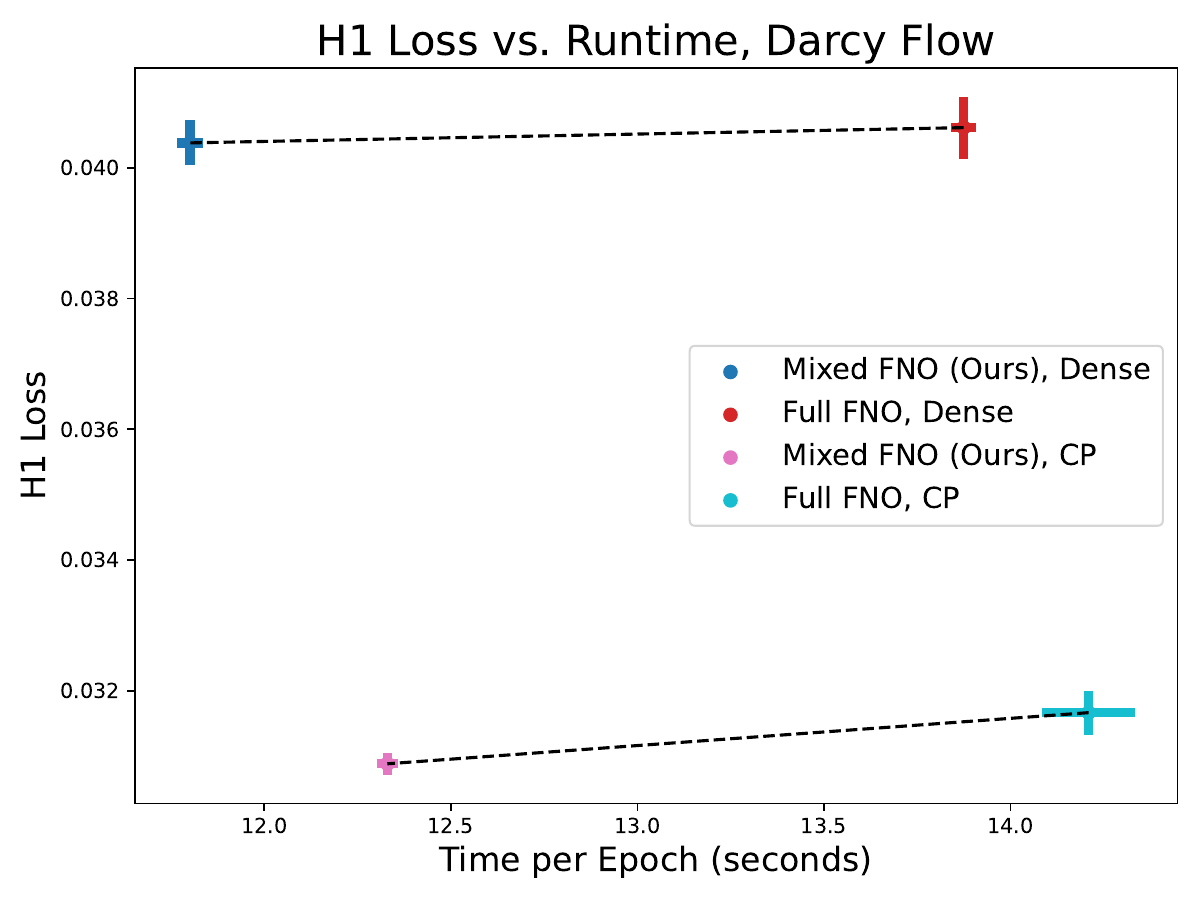}
\caption{
For Navier-Stokes (Left) and Darcy Flow (Right), our method accelerates training, whether with CP factorized weights or Dense unfactorized weights, without losing more than 1\% in error.
}
\label{fig:factorization_ablation}
\end{center}
\vspace{-15pt}
\end{figure*}

\textbf{Number of Frequency Modes.}
Recall that in the FNO architecture, after the FFT, we truncate to a fixed fraction of frequency modes to improve generalization, typically $\nicefrac{1}{3}$ to $\nicefrac{2}{3}$.
We run an ablation study on the number of frequency modes used in the FNO architecture.
We run frequency modes $\{16, 32, 64, 128\}$ on the Darcy flow dataset in full and half precision; see~\cref{fig:mode_ablation}.
We find that using two few frequency modes hurts accuracy substantially while using too many frequency modes increases runtime substantially. 
There is not a significant difference between half-precision and full precision for all frequencies. In addition, we show in \cref{appsubsec:frequency} that for synthetic data, the precision error from half precision is higher for higher frequencies relative to the amplitude.

\textbf{Other Mixed Precision Options.}
We also experimented with BrainFloat16 (BF16) and TensorFloat32 (TF32). However,  PyTorch does not support BF16 for discrete Fourier transforms, which are essential to FNOs. Even when applied to the rest of the network, BF16 suffers from error degradation on Navier Stokes, possibly due to having fewer precision bits than FP16. On the other hand, TF32 is not as efficient as our approach, even on optimized hardware such as the A100 GPU. Moreover, we simulated FP8 training via clipping.
See \cref{appsubsec:othernumeric} for additional details.

\vspace{-3mm}
\section{Conclusions and Future Work} \label{sec:conclusion}
\vspace{-2mm}
In this work, we introduced the first mixed-precision training method for neural operators. We derived approximation bounds for mixed-precision neural operator 
and rigorously characterized the approximation and precision errors of FNO-based operators, proving that the precision error is comparable to the approximation error. 
Using this insight, we show, in practice, a significant reduction of memory usage and improvement in throughput, without sacrificing accuracy, solving critical issues from standard neural net-based training methods.
Through extensive experiments on different state-of-the-art neural operators, datasets, and hardware, we demonstrate that our approach reduces GPU memory usage by up to 50\% with little or no reduction in accuracy.

Overall, half-precision FNO makes it possible to train on significantly larger data points with the same batch size. 
Going forward, we plan to apply this to real-world applications that require super-resolution to enable larger-scale training.

\section*{Acknowledgments}
AA is supported by the Bren Foundation and the Schmidt Sciences through the AI 2050 senior fellow program. This project and GP were supported, in part, by the Canada Foundation for Innovation JELF grant, NSERC Discovery grant, AWS Machine Learning Research Award (MLRA), Facebook Faculty Research Award, Google Scholar Research Award, and VMware Early Career Faculty Grant.
We thank members of the EcoSystem lab, especially Shang Wang, for their constructive feedback. 


\bibliography{main}
\bibliographystyle{iclr2024_conference}

\newpage
\appendix


\section{Additional Details from \cref{sec:theory}} \label{app:theory}

In this section, we give the full proofs, details, and discussions from \cref{sec:theory}.

First, for convenience, we restate the definition of the discretization error.
Let $D$ denote the closed unit hypercube $[0,1]^d$ for dimension $d\in\N$.
Let $Q_1,\dots,Q_n$ denote the unique (up to $d$) partitioning of $D$, such that each $Q_j$ is a hypercube with sidelength $\nicefrac{1}{m}$.
For each $1\leq j\leq n$, let $\xi_j\in Q_j$ denote the vertex that is closest to the origin; formally, we have $Q_j=\prod_{k=1}^d [s_{j,k},t_{j,k}]$, and we define $\xi_j=(s_{j,1},\dots,s_{j,d})$.

Let $v:D\rightarrow \R$ denote an intermediate function within the FNO.
Recall from the previous section that the primary operation in FNO is the Fourier convolution operator, $(\K v_t) (x)$.
We say the \emph{discretization error} of $\F(v)$ is the absolute difference between the Fourier transform of $v$, and the discrete Fourier transform of $v$ via discretization of $v$ on $\Q_d=(\{Q_1,\dots,Q_n\},\{\xi_1,\dots,\xi_n\})$.
Formally, given Fourier basis function $\varphi_\omega(x)=e^{2\pi i  \langle \omega, x \rangle}$,
\begin{equation*}
\texttt{Disc}(v,\Q_d,\omega)=\Big|\int_D v(x)\varphi_\omega(x)dx - \sum_{j=1}^n v(\xi_j)\varphi_\omega(\xi_j)|Q_j|\Big|.
\end{equation*}

\subsection{Details for Resolution and Precision Bounds} \label{app:proofs}

In this section, we give the full details for \cref{thm:discretization_error} and \cref{thm:precision_error}.

\noindent\textbf{\cref{thm:discretization_error} (restated).}
\emph{
For any $D=[0,1]^d$, $M> 0$, and $L \geq 1$, let $\K \subset C(D)$ be the set of L-Lipschitz functions, bounded by $||v||_\infty\leq M$. Then there exists constants $c_1, c_2 > 0$ such that for all $n, d, \omega$, we have
\begin{equation*}
c_1\sqrt{d}\cdot Mn^{-\nicefrac{2}{d}}\leq 
\sup_{v\in\K}\left(\texttt{Disc}(v,\Q_d,1)\right) \text{ and }
\sup_{v\in\K}\left(\texttt{Disc}(v,\Q_d,\omega)\right)\leq c_2\sqrt{d}(|\omega|+L)M n^{-\nicefrac{1}{d}}.
\end{equation*}
}

\begin{proof}
Let $\varphi_\omega^r(x)=\sin(2\pi \langle \omega, x \rangle)$ denote the real part of the Fourier base of $v$ at frequency $\omega$.
Define
\begin{equation}
\texttt{Disc}^r(v,Q_d,\omega)=\left|\int_D v(x)\varphi_\omega^r(x)dx - \sum_{j=1}^n v(\xi_j)\varphi_\omega^r(\xi_j)|Q_j|\right|.
\end{equation}

Recall that as defined above, $\Q_d=(\{Q_1,\dots,Q_n\},\{\xi_1,\dots,\xi_n\})$, where $\{Q_1,\dots,Q_n\}$ is the unique (up to $d$) partitioning of $D$, and for each $1\leq j\leq n$, $\xi_j\in Q_j$ denotes the vertex that is closest to the origin; formally, we have $Q_j=\prod_{k=1}^d [s_{j,k},t_{j,k}]$, and we define $\xi_j=(s_{j,1},\dots,s_{j,d})$.
First we prove the upper bound. We have:

\begin{align}
\texttt{Disc}^r(v,Q_d,\omega)&=\left|\int_D v(x)\varphi_\omega^r(x)dx - \sum_{j=1}^n v(\xi_j)\varphi_\omega^r(\xi_j)|Q_j|\right|\\
&=\left|\int_D v(x)\varphi_\omega^r(x)dx - \sum_{j=1}^n\int_{Q_j} v(\xi_j)\varphi_\omega^r(\xi_j)dx\right|\\
&=\left|\sum_{j=1}^n \int_{Q_j}\left(v(x)\varphi_\omega^r(x)-v(\xi_j)\varphi_\omega^r(\xi_j)\right)dx\right|\\
&=\left|\sum_{j=1}^n\int_{Q_j}\left(v(x)(\varphi_\omega^r(x)-\varphi_\omega^r(\xi_j))
+(v(x)-v(\xi_j))\varphi_\omega^r(\xi_j)\right)dx\right|\\
&\leq\sum_{j=1}^n\int_{Q_j}\left(|v(x)|\cdot|\varphi_\omega^r(x)-\varphi_\omega^r(\xi_j)|
+|v(x)-v(\xi_j)|\cdot|\varphi_\omega^r(\xi_j)|\right)dx\\
&\leq\sum_{j=1}^n\int_{Q_j}\left(M\cdot\left(|\omega| \frac{\sqrt{d}}{m}\right)
+\left(L\cdot\frac{\sqrt{d}}{m}\right)\cdot 1\right)dx\\
&=\sum_{j=1}^n\int_{Q_j}\left(\frac{\sqrt{d}}{m}\left(M |\omega| +L\right)\right)dx\\
&=n\cdot \left(\frac{1}{m^d}\right)\sqrt{d} \left(M |\omega| +L\right)n^{-\nicefrac{1}{d}}\\
&=\sqrt{d}\left(M |\omega|+L\right)n^{-\nicefrac{1}{d}}.
\end{align}
Bounding $\texttt{Disc}^c(v,Q_d,\omega)$, the complex part of the Fourier base, follows an identical argument.
Setting $c_2=2$ concludes the proof of the upper bound, which is true for all $\omega$.
Also note that we only needed the fact that $\xi_j\in Q_j$ for all $1\leq j\leq n$.


Now we move to the lower bound.
We set $v(x)=x_1\cdots x_d$ and $\omega=1$, and we will show that
\begin{align*}
\left|\int_D v(x) \sin(2\pi x)dx - \sum_{j=1}^n v(\xi_j)\sin(2\pi \xi_j)\right|=\frac{d}{3\cdot 2^d \pi^{d-2}} \cdot n^{\nicefrac{1}{2d}}.
\end{align*}

First, in one dimension, $\int_0^1 x_1\sin(2\pi x_1)dx_1=\frac{1}{2\pi}.$
Therefore, for $d$ dimensions, we have

\begin{equation*}    
\int_D v(x)\sin(2\pi x)dx=\left(\int_0^1 x_1\sin(2\pi x_1)dx_1\right)^d=(2\pi)^{-d}.
\end{equation*}

Next, we will show a lower bound on $\sum_{j=1}^n v(\xi_j)\sin(2\pi \xi_j)$.
Here, we will need the explicit definition of the $\xi_j$'s defined at the start of this section. Specifically, for each $1\leq j\leq n$, we have $Q_j=\prod_{k=1}^d [s_{j,k},t_{j,k}]$, and we define $\xi_j=(s_{j,1},\dots,s_{j,d})$.
Now we note that by construction, since the unit hypercube $D$ is partitioned uniformly across each dimension in segments of $\nicefrac{1}{m}$, we can parameterize each $Q_j$ by a unique $d$-tuple $(i_1,\dots i_d)$, where each $i_k$ is an integer from $0$ to $m-1$, and the corresponding $\xi_j = (s_{j,1},\dots,s_{j,d}) = (\nicefrac{i_1}{m},\dots\nicefrac{i_d}{m})$.
Also, each $d$-tuple of integers from $0$ to $m-1$ defines a unique $Q_j$.
In other words, we are defining a different parameterization of the $Q_j$'s by using the fact that the $\xi_j$'s form a uniform lattice across the unit hypercube. This parameterization is convenient for the next part of the proof.

Recall that $v(x)=x_1\cdots x_d$.
We have

\begin{align}
\sum_{j=1}^n \left(v(\xi_j)\sin(2\pi \xi_j)|Q_j|\right)
&=\frac{1}{m^d}\left(\sum_{j=1}^n \left(
\prod_{k=1}^d s_{j,k}\sin(2\pi s_{j,k})\right)\right)\\
&=\frac{1}{m^d}\left(\sum_{i_1=1}^m \cdots \sum_{i_d=1}^m 
\left(\prod_{k=1}^d \left(\frac{i_k}{m}\cdot\sin\left(2\pi \frac{i_k}{m}\right)\right)\right)\right)\\
&=\frac{1}{m^d}\left(\sum_{i_1=1}^m \frac{i_1}{m}\sin\left(2\pi \frac{i_1}{m}\right) \right) \cdots 
\left(\sum_{i_d=1}^m \frac{i_d}{m} \sin\left(2\pi \frac{i_d}{m}\right) \right)\\
&=\frac{1}{m^d}\left(\sum_{j=1}^m \left(\frac{j}{m}\right)
\cdot\sin\left( 2\pi\frac{j}{m}\right)\right)^d\\
&=m^{-2d}\left(\sum_{j=1}^m \left(j\cdot\sin\left(2\pi \frac{j}{m}\right)\right)\right)^d\\
&=m^{-2d}\left(-\frac{m}{2}\text{cot}\left(\frac{\pi}{m}\right)\right)^d\\
&\geq 2^{-d}\cdot m^{-d}\left(-\frac{m}{\pi}+\frac{1}{3}\cdot \frac{\pi}{m}\right)^d\\
&= (2\pi)^{-d}\cdot\frac{(m+\frac{\pi^2}{3m})^d}{m^{-d}}\\
&\geq (2\pi)^{-d}\left(1+d\cdot\frac{\pi^2}{3}\cdot m^{-2}\right).\\
\end{align}

Finally, since $n=m^d$, we have
\begin{equation*}
\left|(2\pi)^{-d} - (2\pi)^{-d}\left(1+d\cdot\frac{\pi^2}{3}\cdot m^{-2}\right)\right|
=\frac{d}{3\cdot 2^d \pi^{d-2}} \cdot n^{\nicefrac{1}{2d}},
\end{equation*}
which concludes the proof.
\end{proof}


Note that the upper bound is true for all $\omega$, while the lower bound is shown for $\omega=1$. It is an interesting question for future work to show the lower bound for $\omega>1$.

Now we bound the precision error of $\F(v)$.
Recall from \cref{sec:theory} that the \emph{precision error} of $\F(v)$ is the absolute difference between $\F(v)$ and $\overline{\F(v)}$, computing the discrete Fourier transform in half precision. Specifically, we define an $(a_0, \epsilon,T)$-\emph{precision system} as a mapping $q:\R\rightarrow S$ for the set $\{0\}\cup\{a_0(1+\epsilon)^j\}_{j=0}^T\cup\{-a_j(1+\epsilon)^j\}_{j=0}^T$, such that for all $x\in\R$, $q(x)=\texttt{argmin}_{y\in S}|x-y|$.

This represents a simplified version of the true mapping used by Python from $\R$ to \texttt{float32} or \texttt{float16}.
Recall that these datatypes allocate some number of bits for the mantissa and for the exponent. Then, given a real number $x\cdot 2^y$, its value in \texttt{float32} or \texttt{float16} would be 
$(x+\epsilon_1)\cdot 2^{(y+\epsilon_2)}=x\cdot 2^y+\epsilon_2 x\cdot 2^y+\epsilon_1\cdot 2^y+\epsilon_1\epsilon_2 2^y=(1+\nicefrac{\epsilon_1}{x}+\epsilon_2)x\cdot 2^y$. 
Therefore, our definition above is a simplified but reasonable approximation of floating-point arithmetic.

Now, we define 
\begin{equation*}
\texttt{Prec}(v,\Q_d,q,\omega)=\Big|\sum_{j=1}^n v(\xi_j)\varphi_\omega(\xi_j)|Q_j|
-\sum_j q(v(\xi_j))q(\varphi_\omega(\xi_j))|Q_j|\Big|.
\end{equation*}

Now we bound the precision error of $\F(v)$.

\noindent\textbf{\cref{thm:precision_error} (restated).}
\emph{
For any $D=[0,1]^d$, $M> 0$, and $L \geq 1$, let $\K \subset C(D)$ be the set of L-Lipschitz functions, bounded by $||v||_\infty\leq M$. Furthermore let $q$ be an $(a_0, \epsilon,T)$-precision system.
There exists $c>0$ such that for all $n,d,\omega$,
\begin{equation*}
\sup_{v\in\K}\left(\texttt{Prec}(v,\Q_d,q,\omega)\right)\leq c\cdot \epsilon M.
\end{equation*}
}

\begin{proof}
Let $\varphi_\omega^r(x)= \sin(2\pi \langle \omega, x \rangle)$ denote the real part of the Fourier base of $v$ at frequency $\omega$.
Define
\begin{equation}
\texttt{Prec}^r(v,\Q_d,q,\omega)=
\left|\sum_{j=1}^n v(\xi_j)\varphi_\omega^r(\xi_j)|Q_j| -
\sum_{j=1}^n q(v(\xi_j))q(\varphi_\omega^r(\xi_j))|Q_j|\right|
\end{equation}

We prove the upper bound as follows. We have
\begin{align*}
\texttt{Prec}^r(v,\Q_d,q,\omega)&=\left|\sum_{j=1}^n v(\xi_j)\varphi_\omega^r(\xi_j)|Q_j| -
\sum_{j=1}^n q(v(\xi_j))q(\varphi_\omega^r(\xi_j))|Q_j|\right|\\
&=\left| \frac{1}{n}\sum_{j=1}^n \left(v(\xi_j)\varphi_\omega^r(\xi_j)
- q(v(\xi_j))q(\varphi_\omega^r(\xi_j))\right)\right|\\
&\leq\left| \frac{1}{n}\sum_{j=1}^n\left(v(\xi_j)(\varphi_\omega^r(\xi_j)-q(\varphi_\omega^r(\xi_j))) + (v(\xi_j)-q(v(\xi_j)))q(\varphi_\omega^r(\xi_j))\right)\right|\\
&\leq\frac{1}{n}\sum_{j=1}^n \left(|v(\xi_j)|\cdot |\varphi_\omega^r(\xi_j)-q(\varphi_\omega^r(\xi_j))| + |v(\xi_j)-q(v(\xi_j))|\cdot |q(\varphi_\omega^r(\xi_j))|\right)\\
&\leq\frac{1}{n}\sum_{j=1}^n \left(M\cdot (\epsilon\cdot 1) + (\epsilon\cdot M) \cdot 1\right)\\
&=\frac{1}{n}\cdot n \left( 2\epsilon M\right)\\
&= 2\epsilon M.
\end{align*}
Bounding $\texttt{Prec}^c(v,\Q_d,q,\omega)$, the complex part of the Fourier base, follows an identical argument.
Setting $c=4$ concludes the proof.
\end{proof}


\subsection{General Bounds} \label{app:general_bounds}

Now, we give results similar to \cref{thm:discretization_error} and \cref{thm:precision_error}, but with a general function $f$, rather than for $\F(v)$, a function $v$ times the Fourier basis function. 

\begin{theorem} \label{thm:disc_general}
For any $D=[0,1]^d$, $M> 0$, and $L \geq 1$, let $\K \subset C(D)$ be the set of L-Lipschitz functions, bounded by $||f||_\infty\leq M$. Then for all $n,d,\omega$, we have
\begin{equation*}
2^{-d+1}\cdot d\cdot n^{-\nicefrac{1}{d}}\leq \sup_{f\in\K}\left(\texttt{Disc}(f,\Q,\omega)\right)\leq L\sqrt{d} \cdot n^{-\nicefrac{1}{d}}.
\end{equation*}
\end{theorem}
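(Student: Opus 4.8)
The plan is to prove the two inequalities separately. The upper bound is a routine adaptation of the upper‑bound computation in the proof of \cref{thm:discretization_error}, with the Fourier factor $\varphi_\omega$ removed, so that only the Lipschitz oscillation of $f$ inside each cell survives. The lower bound requires exhibiting a single function $f^\star\in\K$ for which the left‑endpoint quadrature rule $\sum_j f^\star(\xi_j)|Q_j|$ is provably biased away from $\int_D f^\star$, and then estimating the size of that bias using the lattice structure of the $\xi_j$.

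For the upper bound, fix $f\in\K$. Writing $f(\xi_j)|Q_j|=\int_{Q_j}f(\xi_j)\,dx$ and $\int_D f=\sum_{j=1}^n\int_{Q_j}f(x)\,dx$, we obtain $\texttt{Disc}(f,\Q,\omega)=\bigl|\sum_{j=1}^n\int_{Q_j}\bigl(f(x)-f(\xi_j)\bigr)dx\bigr|$. Pulling the absolute value inside and using that every $x\in Q_j$ satisfies $|x-\xi_j|_2\le\operatorname{diam}(Q_j)=\sqrt d/m$, the Lipschitz hypothesis gives $|f(x)-f(\xi_j)|\le L\sqrt d/m$ on $Q_j$; summing over $j$ and using $\sum_j|Q_j|=|D|=1$ together with $m=n^{1/d}$ yields $\texttt{Disc}(f,\Q,\omega)\le L\sqrt d\,n^{-1/d}$. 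Exactly as in \cref{thm:discretization_error}, only $\xi_j\in Q_j$ and Lipschitzness are used here; the sup‑norm bound $M$ plays no role.

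For the lower bound I would invoke the fact, established in the proof of \cref{thm:discretization_error}, that the points $\{\xi_j\}$ form the uniform lattice $\{0,\tfrac1m,\dots,\tfrac{m-1}m\}^d$, so any coordinatewise product test function has a quadrature sum that factors. Taking $g(x)=\prod_{k=1}^d(2x_k)$, one computes $\int_D g=\prod_k\int_0^1 2x_k\,dx_k=1$ and $\sum_j g(\xi_j)|Q_j|=\prod_k\bigl(\tfrac1m\sum_{i=0}^{m-1}\tfrac{2i}{m}\bigr)=(1-\tfrac1m)^d$, hence $\texttt{Disc}(g,\Q,\omega)=1-(1-\tfrac1m)^d$. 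The elementary estimate $1-(1-\tfrac1m)^d=\int_0^{1/m}d(1-t)^{d-1}dt\ge \tfrac dm(1-\tfrac1m)^{d-1}\ge \tfrac dm\cdot 2^{-(d-1)}$ for $m\ge 2$ (together with $1-(1-\tfrac1m)^d=1\ge 2^{-d+1}d$ when $m=1$) then gives precisely $\texttt{Disc}(g,\Q,\omega)\ge 2^{-d+1}d\,n^{-1/d}$. A cleaner alternative is the affine function $h(x)=\tfrac1{\sqrt d}\sum_{k=1}^d x_k$, which is $1$‑Lipschitz and gives $\texttt{Disc}(h,\Q,\omega)=\tfrac{\sqrt d}{2}n^{-1/d}$ directly — already $\ge 2^{-d+1}d\,n^{-1/d}$ for $d\ge 3$, and matching the upper bound up to a factor of $2$ when $L=1$.

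The main obstacle is on the lower‑bound side, and it is the tension between the two requirements on the test function: it must lie in $\K$ for the prescribed Lipschitz constant $L$ and sup bound $M$ (which is why the product construction $g$, whose Lipschitz constant and sup norm grow like $\sqrt d\,2^d$ and $2^d$, needs $L$ and $M$ large enough, whereas the affine $h$ needs only $L\ge 1$ and $M\ge\sqrt d$), yet it must simultaneously produce a quadrature bias of the stated order $d\,n^{-1/d}$. Choosing the construction and squeezing the constant $2^{-d+1}d$ out of the $1-(1-1/m)^d$ estimate is the delicate step; as with the aliasing remark following \cref{thm:discretization_error}, one should also expect a mild condition relating $n$ to $d$ (here, essentially $m\ge 2$) for the bound to hold exactly as stated. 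Everything else is bookkeeping.
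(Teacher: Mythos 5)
Your proof is correct and follows essentially the same route as the paper: the upper bound is the identical Riemann-sum/Lipschitz estimate, and your lower-bound witness $\prod_k(2x_k)$ is just a rescaling of the paper's $x_1\cdots x_d$, exploiting the same factorization of the left-endpoint lattice sum (your version is in fact carried out more carefully, with the correct endpoints $i/m$, $i=0,\dots,m-1$, and an explicit $m=1$ versus $m\ge 2$ split). The caveat you flag --- that the witness only lies in $\K$ when $L$ and $M$ are large enough --- applies equally to the paper's own proof, which silently assumes it.
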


\begin{proof}

First, we define
\begin{equation}
\texttt{Disc}^r(f,\Q_d,\omega)=\left|\int_D f(x)dx - \sum_{j=1}^n f(\xi_j)|Q_j|\right|.
\end{equation}

Recall that $\Q_d=(\{Q_1,\dots,Q_n\},\{\xi_1,\dots,\xi_n\})$, where the $n=m^d$ hypercubes of side length $\nicefrac{1}{m}$ subdivide $D$.
Now we prove the upper bound. We have:

\begin{align}
\texttt{Disc}^r(f,\Q_d,\omega)&=\left|\int_D f(x)dx - \sum_{j=1}^n f(\xi_j)|Q_j|\right|\\
&=\left|\int_D f(x)dx - \sum_{j=1}^n\int_{Q_j} f(\xi_j)dx\right|\\
&=\left|\sum_{j=1}^n \int_{Q_j}\left(f(x)-f(\xi_j)\right)dx\right|\\
&\leq\sum_{j=1}^n\int_{Q_j}\left|f(x)-f(\xi_j\right|dx\\
&\leq\sum_{j=1}^n\int_{Q_j}\left(L\cdot\frac{\sqrt{d}}{m}\right)dx\\
&=n\cdot \frac{1}{m^d}\left(L\cdot\frac{\sqrt{d}}{m}\right)\\
&=L\sqrt{d}\cdot n^{-\nicefrac{1}{d}}
\end{align}
This concludes the proof of the upper bound.

For the lower bound, first, in one dimension, $\int_0^1 x_1 dx_1=\frac{1}{2}.$
Therefore, for $d$ dimensions, we have

\begin{equation*}    
\int_D f(x)dx=\left(\int_0^1 x_1 dx_1\right)^d=(2)^{-d}.
\end{equation*}

Next, we will use the same explicit definition and reparameterization of the $\xi_j$ as we used in the lower bound of \cref{thm:discretization_error}:
for each $1\leq j\leq n$, we have $Q_j=\prod_{k=1}^d [s_{j,k},t_{j,k}]$, and we define $\xi_j=(s_{j,1},\dots,s_{j,d})$.
Now we note that by construction, since the unit hypercube $D$ is partitioned uniformly across each dimension in segments of $\nicefrac{1}{m}$, we can parameterize each $Q_j$ by a unique $d$-tuple $(i_1,\dots i_d)$, where each $i_k$ is an integer from $0$ to $m-1$, and the corresponding $\xi_j = (s_{j,1},\dots,s_{j,d}) = (\nicefrac{i_1}{m},\dots\nicefrac{i_d}{m})$.
Also, each $d$-tuple of integers from $0$ to $m-1$ defines a unique $Q_j$.
In other words, we are defining a different parameterization of the $Q_j$'s by using the fact that the $\xi_j$'s form a uniform lattice across the unit hypercube. This parameterization is convenient for the next part of the proof.
Recall that $f(x)=x_1\cdots x_d$.
We have
\begin{align}
\sum_{j=1}^n \left(f(\xi_j)|Q_j|\right)
&=\frac{1}{m^d}\left(\sum_{j=1}^n \left(
\prod_{k=1}^d f(s_{j,k})\right)\right)\\
&=\frac{1}{m^d}\left( \sum_{i_1=1}^m\cdots \sum_{i_d=1}^m \left(
\prod_{k=1}^d f\left(\frac{i_k}{m}\right)\right)\right)\\
&=\frac{1}{m^d}\left( \sum_{i_1=1}^m \frac{i_1}{m}\right) \cdots
\left( \sum_{i_d=1}^m \frac{i_d}{m}\right)\\
&=\frac{1}{m^d}\left( \sum_{j=1}^m \frac{j}{m}\right)^d\\
&=m^{-2d}\left(\frac{m(m+1)}{2}\right)^d\\
&=2^{-d}\left(1+\frac{1}{m}\right)^d\\
&\geq 2^{-d}\left(1+2dm^{-1}\right)^d\\
\end{align}

Finally, since $n=m^d$, we have
\begin{equation*}
\left|2^{-d} - 2^{-d}\left(1+2dm^{-1}\right)\right|
=2^{-d+1}\cdot d\cdot n^{\nicefrac{1}{d}},
\end{equation*}
which concludes the proof.
\end{proof}

Now we similarly bound the precision error of $f$.

\begin{theorem} \label{thm:prec_general}
For any $D=[0,1]^d$, $M> 0$, and $L \geq 1$, let $\K \subset C(D)$ be the set of L-Lipschitz functions, bounded by $||f||_\infty\leq M$. Furthermore let $q$ be an $(a_0, \epsilon,T)$-precision system.
Then for all $n,d,\omega$, there exists $c>0$ such that
\begin{equation*}
\nicefrac{1}{4}\cdot\epsilon M \leq \sup_{f\in\K}\left(\texttt{Prec}(f,\Q_d,q,\omega)\right)\leq \epsilon M.
\end{equation*}
\end{theorem}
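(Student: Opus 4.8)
The plan is to prove the two inequalities separately and essentially by hand: reuse the telescoping argument from the proof of \cref{thm:precision_error} for the upper bound, and use a \emph{constant} test function for the lower bound. Following the convention already adopted in \cref{thm:disc_general}, I read $\texttt{Prec}(f,\Q_d,q,\omega)$ in this general setting as $\left|\sum_{j=1}^n f(\xi_j)|Q_j| - \sum_{j=1}^n q(f(\xi_j))|Q_j|\right|$, so that $\omega$ is vestigial. The single ingredient I would first establish about the precision system is a \emph{relative}-error guarantee: since $S$ is, away from $0$, a geometric progression with ratio $1+\epsilon$, any real $x$ whose magnitude lies in the representable range sits between two consecutive points $a_0(1+\epsilon)^i$ and $a_0(1+\epsilon)^{i+1}$; a one-line check shows the next points outward, $a_0(1+\epsilon)^{i-1}$ and $a_0(1+\epsilon)^{i+2}$ (and $0$), are all strictly farther from $x$, so $q(x)$ is one of the two bracketing points and $|x-q(x)| \le \tfrac12\left(a_0(1+\epsilon)^{i+1}-a_0(1+\epsilon)^i\right) = \tfrac{\epsilon}{2}\,a_0(1+\epsilon)^i \le \tfrac{\epsilon}{2}|x|$.

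For the upper bound I would apply the triangle inequality and use $|Q_j| = \tfrac1n$ together with $\sum_{j=1}^n |Q_j| = |D| = 1$:
\[
\texttt{Prec}(f,\Q_d,q,\omega) \;\le\; \frac1n\sum_{j=1}^n \left|f(\xi_j)-q(f(\xi_j))\right| \;\le\; \frac1n\sum_{j=1}^n \frac{\epsilon}{2}\,|f(\xi_j)| \;\le\; \frac{\epsilon}{2}\,\|f\|_\infty \;\le\; \epsilon M .
\]
No cancellation among the $n$ rounding errors is needed, so this bound is independent of $n$, $d$, and $\omega$.

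For the lower bound I would take $f \equiv c$, a constant function (hence $0$-Lipschitz, so $f \in \K$ for every $L \ge 1$), where $c$ is the midpoint of the interval $\left[a_0(1+\epsilon)^{k-1},\, a_0(1+\epsilon)^k\right]$ and $k$ is the largest integer with $a_0(1+\epsilon)^k \le M$. Then $|c| < a_0(1+\epsilon)^k \le M$, so $\|f\|_\infty \le M$. Because $f$ is constant, $\sum_j f(\xi_j)|Q_j| = c$ and $\sum_j q(f(\xi_j))|Q_j| = q(c)$, hence $\texttt{Prec}(f,\Q_d,q,\omega) = |c - q(c)|$, which by the preliminary computation equals exactly half the gap, $\tfrac{\epsilon}{2}\,a_0(1+\epsilon)^{k-1}$. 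Maximality of $k$ gives $a_0(1+\epsilon)^{k+1} > M$, hence $a_0(1+\epsilon)^{k-1} > M(1+\epsilon)^{-2} \ge \tfrac{M}{2}$ for all $\epsilon \le \sqrt2-1$ (which covers every floating-point precision of interest), and therefore $\sup_{f\in\K}\texttt{Prec}(f,\Q_d,q,\omega) \ge |c-q(c)| > \tfrac{\epsilon M}{4}$. It is worth highlighting the contrast with the discretization error, which \emph{vanishes} on constants — this is why \cref{thm:disc_general} had to use $f(x)=x_1\cdots x_d$ — whereas the precision error is \emph{maximized} by constants, since then the $n$ rounding errors coincide (no averaging-out) and the constant can be tuned to lie exactly on a rounding boundary.

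There is no serious obstacle here; the argument is elementary. The only points that need care are: (i) the geometric-spacing computation that pins down $|x-q(x)| \le \tfrac{\epsilon}{2}|x|$ and the verification that $q$ rounds to one of the two bracketing points; (ii) bookkeeping the constant $\tfrac14$ in the lower bound, which relies on $\epsilon$ being small enough (e.g.\ $\epsilon \le \sqrt2-1$, with a worse constant otherwise); and (iii) the range restrictions — if $M > a_0(1+\epsilon)^T$ there is overflow and if $M < a_0(1+\epsilon)$ the midpoint construction degenerates into the underflow regime — both of which are excluded, exactly as in \cref{thm:precision_error}, by assuming $\|f\|_\infty$ lies in the representable range.
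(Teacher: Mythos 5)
Your proof is correct and follows essentially the same route as the paper's: the upper bound via the term-by-term triangle inequality with the relative rounding error $|x-q(x)|\lesssim \epsilon|x|$, and the lower bound via a constant function sitting at the midpoint of a quantization gap just below $M$ (the paper merely asserts the existence of such a point $y$ with $M/2<y<M$ and $|y-q(y)|\gtrsim \tfrac{1}{2}\epsilon y$, whereas you construct it explicitly). Your added caveats — the smallness condition $\epsilon\le\sqrt{2}-1$ for the constant $\tfrac14$ and the requirement that $M$ lie in the representable range — are legitimate hypotheses that the paper leaves implicit rather than gaps in your argument.
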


\begin{proof}
Define
\begin{equation}
\texttt{Prec}^r(f,\Q_d,q,\omega)=
\left|\sum_{j=1}^n f(\xi_j) |Q_j| -
\sum_{j=1}^n q(f(\xi_j))|Q_j|\right|
\end{equation}

We prove the upper bound as follows. We have
\begin{align*}
\texttt{Prec}^r(f,\Q_d,q,\omega)&=\left|\sum_{j=1}^n f(\xi_j)|Q_j| -
\sum_{j=1}^n q(f(\xi_j))|Q_j|\right|\\
&=\left| \frac{1}{n}\sum_{j=1}^n \left(f(\xi_j)
- q(f(\xi_j))\right)\right|\\
&\leq \frac{1}{n}\sum_{j=1}^n \left|f(\xi_j)
- q(f(\xi_j))\right|\\
&\leq \frac{1}{n}\sum_{j=1}^n \left(\epsilon M\right)\\
&=\epsilon M
\end{align*}
This concludes the proof of the upper bound.

For the lower bound, given the definition of an $(a_0, \epsilon,T)$-precision system, it follows that there exists $y$ such that $\nicefrac{M}{2}<y<M$ and $\nicefrac{1}{2}\cdot\epsilon y<|y-q(y)|.$
Then, for the lower bound we set $f(x)=y$, and we have
\begin{align*}
\texttt{Prec}^r(f,\Q_d,q,\omega)&=\left|\sum_{j=1}^n f(\xi_j)|Q_j| -
\sum_{j=1}^n q(f(\xi_j))|Q_j|\right|\\
&=\left| \frac{1}{n}\sum_{j=1}^n \left(f(\xi_j)
- q(f(\xi_j))\right)\right|\\
&\geq\left| \frac{1}{n}\sum_{j=1}^n \left(\frac{1}{2}\cdot\epsilon y\right)\right|\\
&=\frac{1}{2}\cdot\epsilon y\\
&\geq\frac{1}{4}\cdot\epsilon M\\
\end{align*}
This concludes the proof.
\end{proof}

\subsection{Plotting Theoretical Bounds} \label{app:simulation}

In this section, we run additional experiments by plotting our theoretical bounds alongside the true empirical discretization and resolution errors of the Darcy flow dataset.
Specifically, we plot both our bounds assuming a Fourier basis (\cref{thm:discretization_error}, \cref{thm:precision_error}) and our bounds for general functions (\cref{thm:disc_general}, \cref{thm:prec_general}) compared to the true Darcy flow dataset, measured after 10 epochs, at the start of the FNO block (just before the forward FFT).
The true errors are calculated using the definitions of precision and discretization error in the previous section, and we use the true difference in precision between \texttt{float32} and \texttt{float16} when computing the precision error.
See \cref{fig:theory_bounds}.
We find that the discretization error is higher than the precision error, as expected by our theory. 
Furthermore, the Darcy flow discretization and precision errors are always lower than their respective upper and lower bounds.
Note that the lower bounds in our theorems are \emph{worst-case} lower bounds (we showed there exists a function from the class of bounded $L$-Lipschitz functions which achieves at least the desired error), which means that the true errors can be lower than the bounds, which is the case in \cref{fig:theory_bounds}.

\begin{figure*}[t]
\begin{center}
\includegraphics[width=0.49\textwidth]{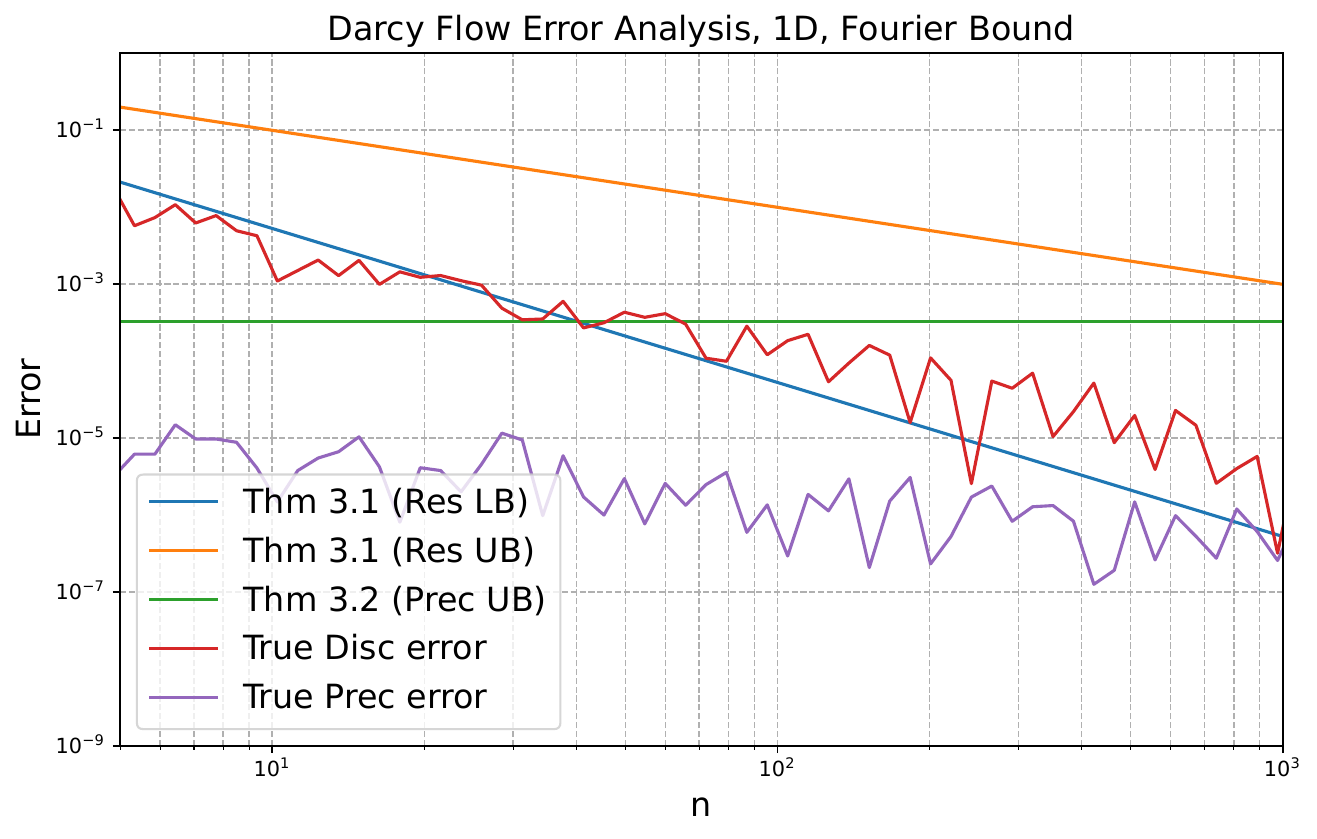}
\includegraphics[width=0.49\textwidth]{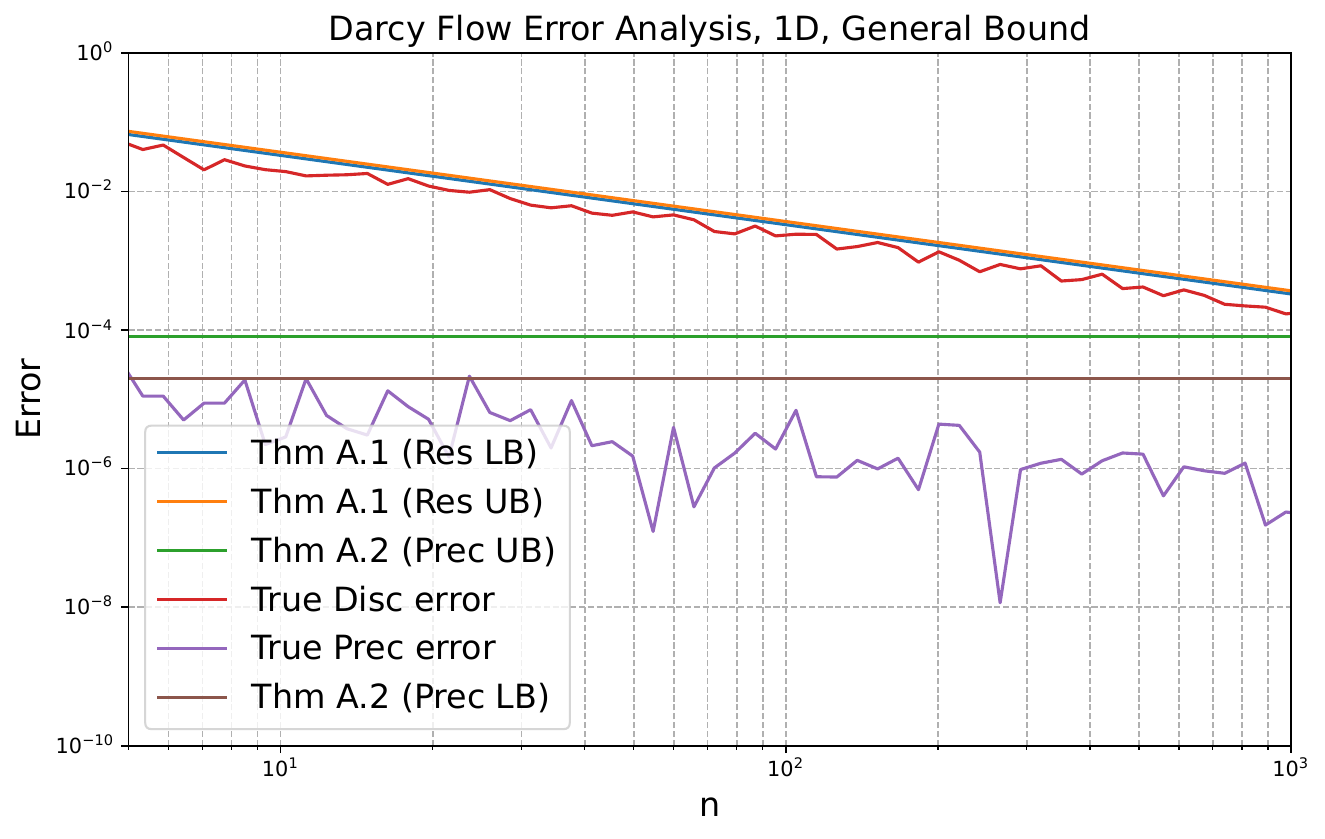}
\includegraphics[width=0.49\textwidth]{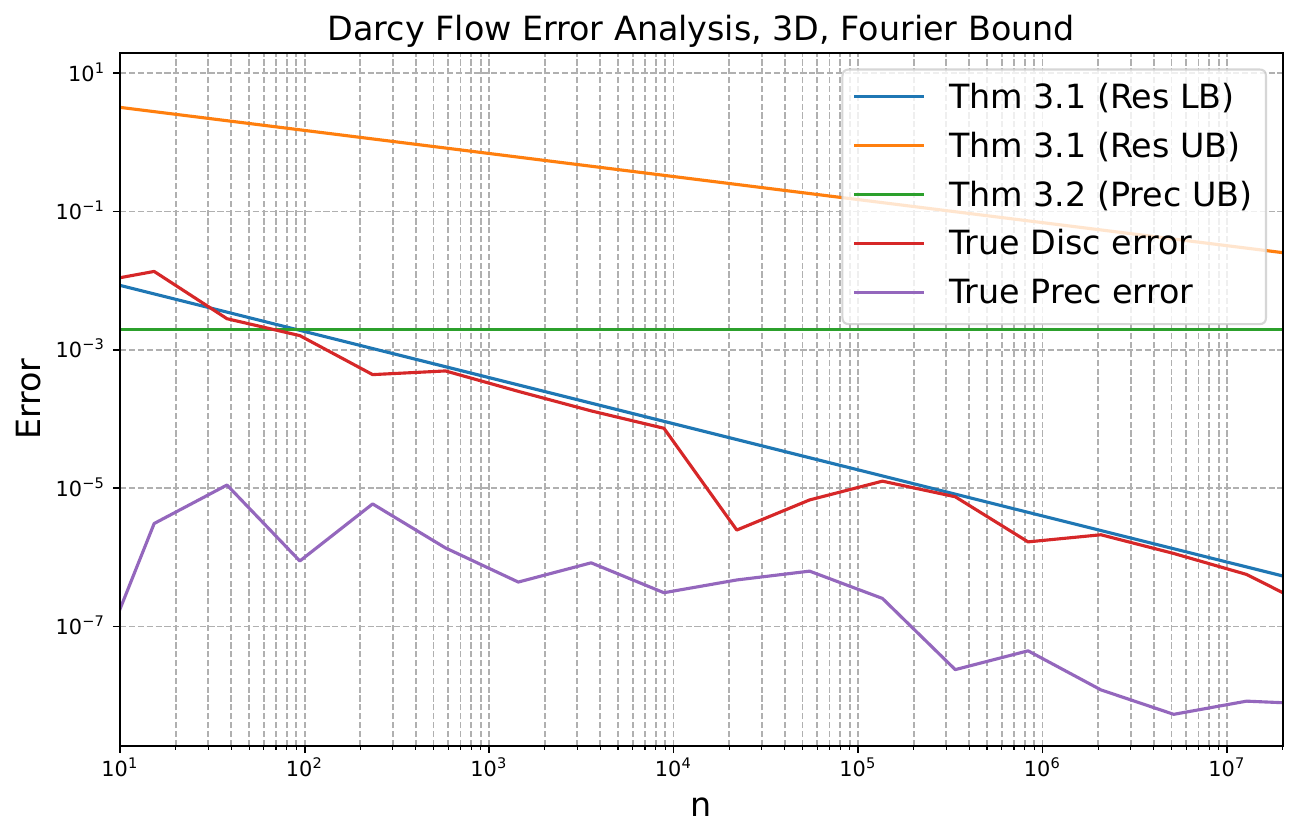}
\includegraphics[width=0.49\textwidth]{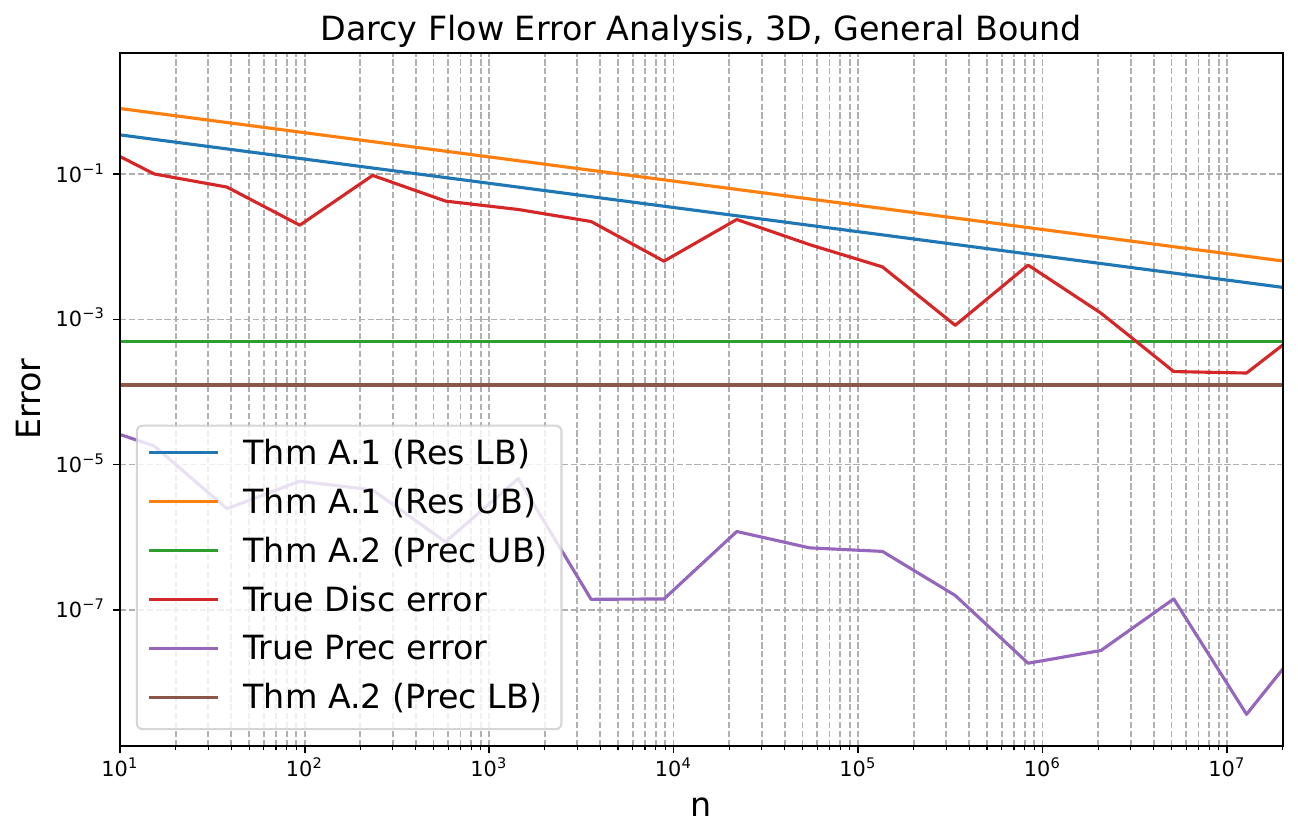}
\caption{
Discretization and precision errors of Darcy flow 1D and 3D, compared to our Fourier basis and general bounds.
Note that our bounds are worst-case bounds (we showed there exists a function from the class of bounded $L$-Lipschitz functions which achieves at least the desired error), meaning that the true Darcy flow error may be lower than the worst-case bounds.
}
\label{fig:theory_bounds}
\end{center}
\end{figure*}

\section{Additional Details from \cref{sec:experiments}} \label{app:experiments}

In this section, we give additional details on the datasets and experimental setup, as well as additional experiments.

\subsection{Ahmed-Body CFD Experiment} \label{app:ahmed}
Similar to Shape-Net Car, we train the Geometry-Informed Neural Operator on the Ahmed-body dataset, using the same hyperparameters as the original implementation \citep{li2023gino}.
The memory, error, and throughput results for Ahmed-body are appended to \cref{fig:ball_figure}. We add the L2 error curves here following the format of \cref{fig:full_results}.

\begin{figure*}[t]
\vspace{-5pt}
\begin{center}
\includegraphics[width=0.9\textwidth]{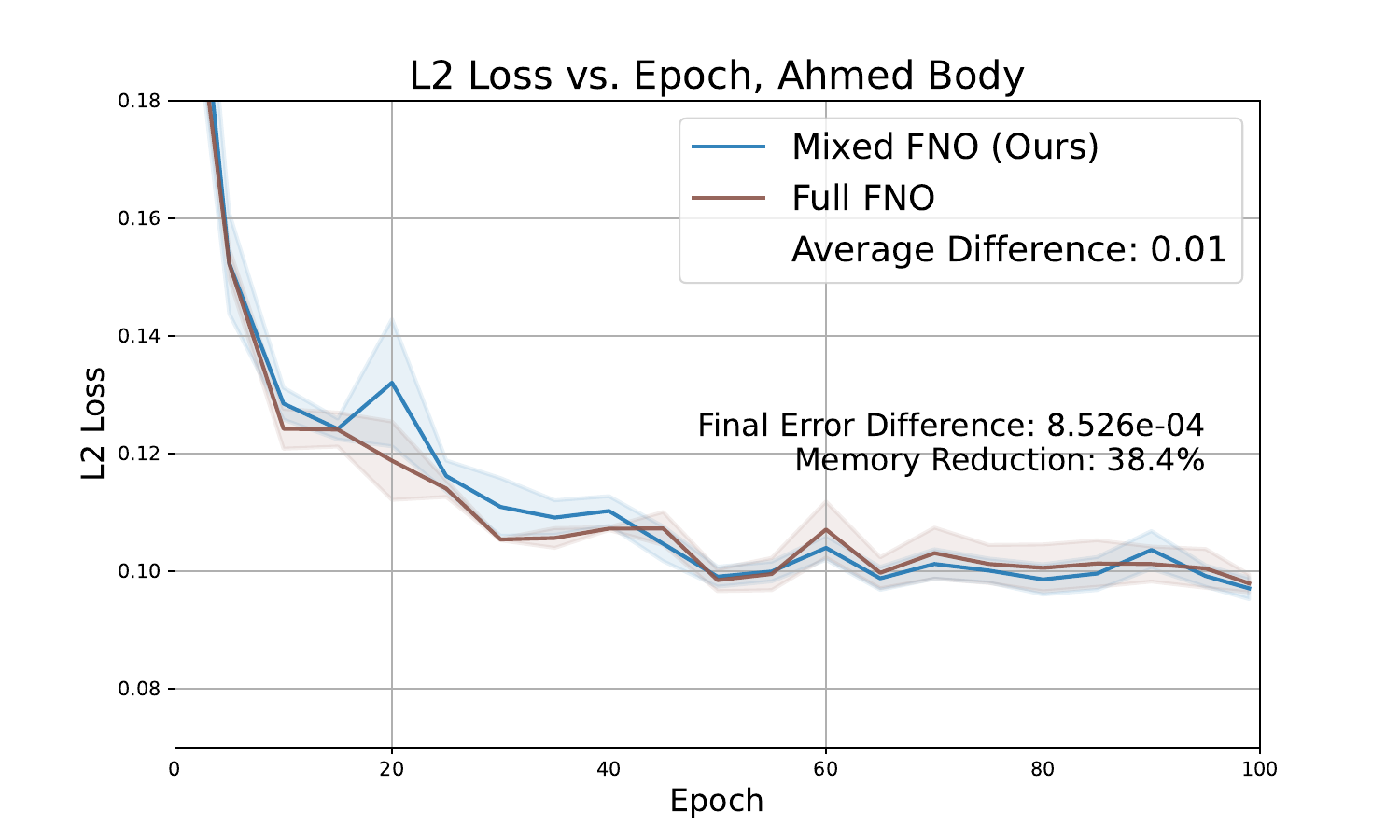}

\caption{
Test L2 error curves for GINO on the Ahmed-body CFD dataset on \textbf{three} random seeds and standard deviation as error bars.
}
\label{fig:results_ahmed}
\end{center}
\vspace{-10pt}
\end{figure*}

\subsection{Dataset Details} \label{app:datasets}
\paragraph{Navier-Stokes.}
We consider the 2D Navier-Stokes equation for a viscous, incompressible fluid in vorticity form on the unit torus  $\T^2\cong [0,2\pi)^2$, 
\begin{align}
    \partial_t\omega + \nabla^{\bot}\phi\cdot\omega = \frac{1}{\text{Re}}\Delta\omega + f,\quad&\text{for }x\in \T^2,~t\in (0,T]\nonumber\\
    -\Delta\phi=\omega,\quad \int_{\T^2}\phi=0,\quad&\text{for }x\in\T^2,~t\in (0,T]\label{eq:navier}
\end{align}
where $f\in L^2(\T^2,\R)$ is a forcing function and $\text{Re}>0$ is the Reynolds number, for the initial condition $\omega(0,\cdot)=0$.
The goal is to learn the non-linear operator $\G^\dagger:f \mapsto \omega(T,\cdot)$ for $T=5$ and $\text{Re}=500$.
We use the dataset from prior work \citep{kossaifi2023multigrid}, which sets $\text{Re}=500$ and generates forcing functions from the Gaussian measure, $\mathcal{N}(0,27(-\Delta+9I)^{-4})$, and computes the solution function via a pseudo-spectral solver \citep{chandler2013invariant}.
There are 10\,000 training and 2000 test samples with resolution $128\times 128$.

\paragraph{Darcy Flow.}
We consider the steady-state 2D Darcy Flow equation, which models fluid flow through a porous medium. 
Given $D=(0,1)^2$, the diffusion coefficient $a\in L^\infty ((0,1)^2;\R_+)$, 
and the forcing function $f\in L^2((0,1)^2;\R)$, the pressure $u$ satisfies
\begin{align}
-\nabla\cdot (a(x)\nabla u(x))=f(x),\quad&\text{for }x\in D\\
u(x)=0,\quad&\text{for }x\in\partial D
\end{align}

We seek to learn the non-linear operator 
$\G^\dagger: a\mapsto u$, the mapping from the diffusion coefficient $a$ to the solution function $u$.
We use the dataset from prior work \citep{li2021fourier}, which fixes $f\equiv 1$ and generates 5000 training and 1000 test samples with resolution $128\times 128$.

\paragraph{Spherical Shallow Water Equations.}
The spherical Shallow Water Equations on the rotating sphere are a model for a variety of geophysical flow phenomena such as atmospheric flows, tsunamis and storm surges. We consider the evolution of two state variables $\varphi$, $u$ (geopotential height and tangential velocity of the fluid column), governed by the equations
\begin{align}
\partial_t\varphi + \nabla \cdot (\varphi u) = 0, \quad&\text{for }x\in \mathbb{S}^2, t \in (0,T]\\
\partial_t(\varphi u) + \nabla \cdot F = S, \quad&\text{for }x\in \mathbb{S}^2, t \in (0,T]
\end{align}

for initial conditions $\varphi(\cdot, 0)=\varphi_0$, $u(\cdot, 0) = u_0$,
where $\mathbb{S}^2$ is the unit sphere, $F$ the momentum flux tensor with $F^{ij} = \varphi u^i u^j + \frac{1}{2} \varphi^2$, and $S$ a source term $S = - 2 \Omega x \times (\varphi u)$, which models the Coriolis force due to the rotation of the sphere with angular velocity $\Omega$.
We use the dataset from prior work \citep{bonev2023sfno}, which generates random initial conditions on the sphere at resolution $256\times 512$ and solves the PDE using the spectral solver in the \texttt{torch-harmonics} package \citep{bonev2023sfno}. 

\paragraph{Shape-Net Car}
We evaluate on 3D real-world car dataset generated by prior work \citep{umetani2018learning,li2023gino} using the Reynolds Averaged Navier Stokes (RANS) equations \citep{wilcox1998turbulence}.
Each data point consists of mesh points representing a unique 3D car, and the goal is to predict the full 3D pressure field, given an inlet velocity of 20m/s.

\paragraph{Ahmed-Body CFD} 
Including another 3D real-world dataset for evaluation, we the dataset from \citep{li2023gino}, which is based on Ahmed-body shapes from \citep{ahmed1984some}, for which steady-state aerodynamic simulations were run using the GPU-accelerated Open-FOAM solver, with varying inlet velocity from 10m/s to 70m/s. The input and output data formats are similar to those of Shape-Net Car, but Ahmed-body has many more mesh points (100k) on the surface. 

\subsection{Model Details} \label{app:model_details}
We use the official implementation of each model, with the default hyperparameters, unless otherwise specified.
\footnote{\url{https://github.com/neuraloperator/neuraloperator}}
Furthermore, we release all of our code at \url{https://github.com/neuraloperator/neuraloperator}.

For the case of Shape-Net Car and Ahmed Body on GINO, we note that the default batch size is 1, in fact, the \emph{only allowed} batch size is 1, since each car geometry is unique.
As shown in \cref{fig:ball_figure}, throughput is identical (in fact, our memory saving is enough to have batch size 2, but this is not possible).
Although each individual Shape-Net Car fits in memory, other irregular-geometry datasets are too large to fit in memory, so sub-sampling is used \citep{li2023gino}. Therefore, mixed-precision GINO allows sampling significantly more mesh points than the full-precision default. 

\subsection{FNO Profiling} \label{app:profiling}

Using Nvidia's RTX 3090 Ti, we start by profiling full-precision FNO training on the Navier-Stokes dataset with the PyTorch profiler. \cref{fig:fno_profile} provides an overview of runtime breakdown by PyTorch modules and by individual GPU kernels. We observe that complex-valued operations are accountable for the majority of the runtime. 

\begin{figure*}[t]
\begin{center}
\includegraphics[width=0.9\textwidth, trim=0mm 80mm 0mm 85mm, clip]{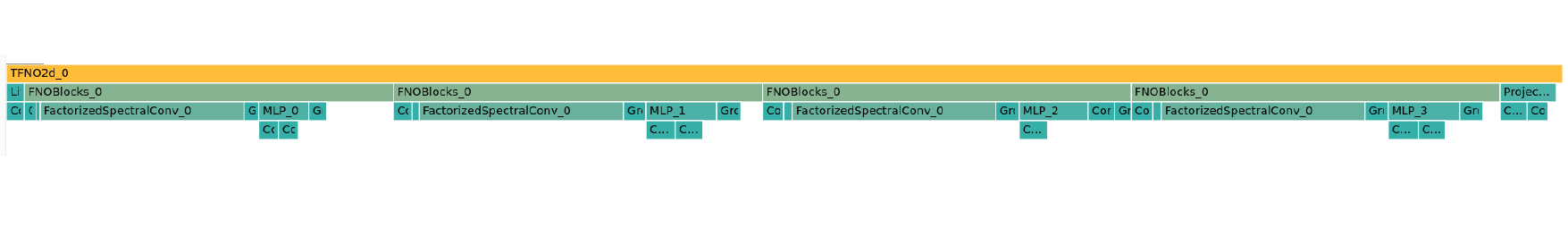}

\includegraphics[width=0.9\textwidth, trim=0mm 40mm 0mm 40mm, clip]{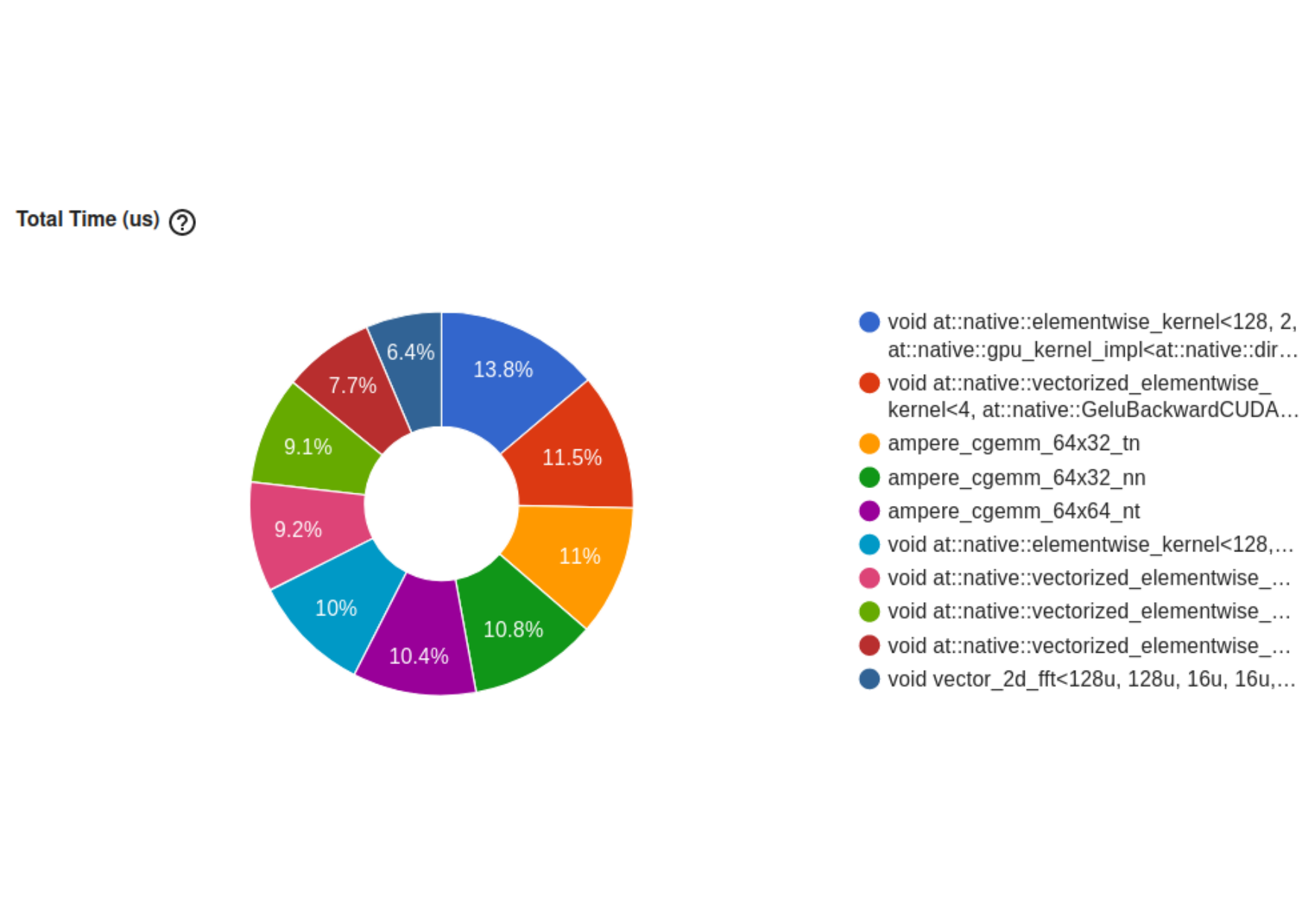}
\caption{
For Navier-Stokes, we use the PyTorch profiler to visualize the runtime breakdown of FNO training by both modules (top) and GPU kernels (bottom). \textbf{Spectral convolutions} (complex-valued tensor contraction) accounts for most of the runtime, and correspondingly, kernels associated with complex numbers are 4 amongst the 5 most costly kernels.
}
\label{fig:fno_profile}
\end{center}
\end{figure*}

\subsection{Numerical Stability: Post-Forward Methods} \label{app:stability-global}
Most commonly, numerical underflows and overflows that occur during mixed-precision training are handled after the forward pass using loss scaling, gradient clipping, and other methods \citep{micikevicius2017mixed}. For experimentation, we implemented the following: \emph{(1)} automatic loss scaling via AMP; \emph{(2)} gradient clipping to a default value (5.0); \emph{(3)} delayed updates via gradient accumulation every 3 batches. These methods are compared to our \texttt{tanh} approach and a no-stabilizer baseline on the Darcy Flow experiment setting. In \cref{fig:global_stabilizer}, we show that all three baselines diverge during the first epoch of training, while \texttt{tanh} remains stable numerically. It is noteworthy that while loss scaling almost does not diverge, its scale decreases drastically with each update and becomes infinitesimal. The common problem for global stabilizers is that they do not directly address the numerical overflow of forward FFT operations within each FNO block. In the next subsection, we discuss local methods that acts on the FNO block itself. 

\begin{figure*}[t]
\begin{center}
\includegraphics[width=0.60\textwidth]{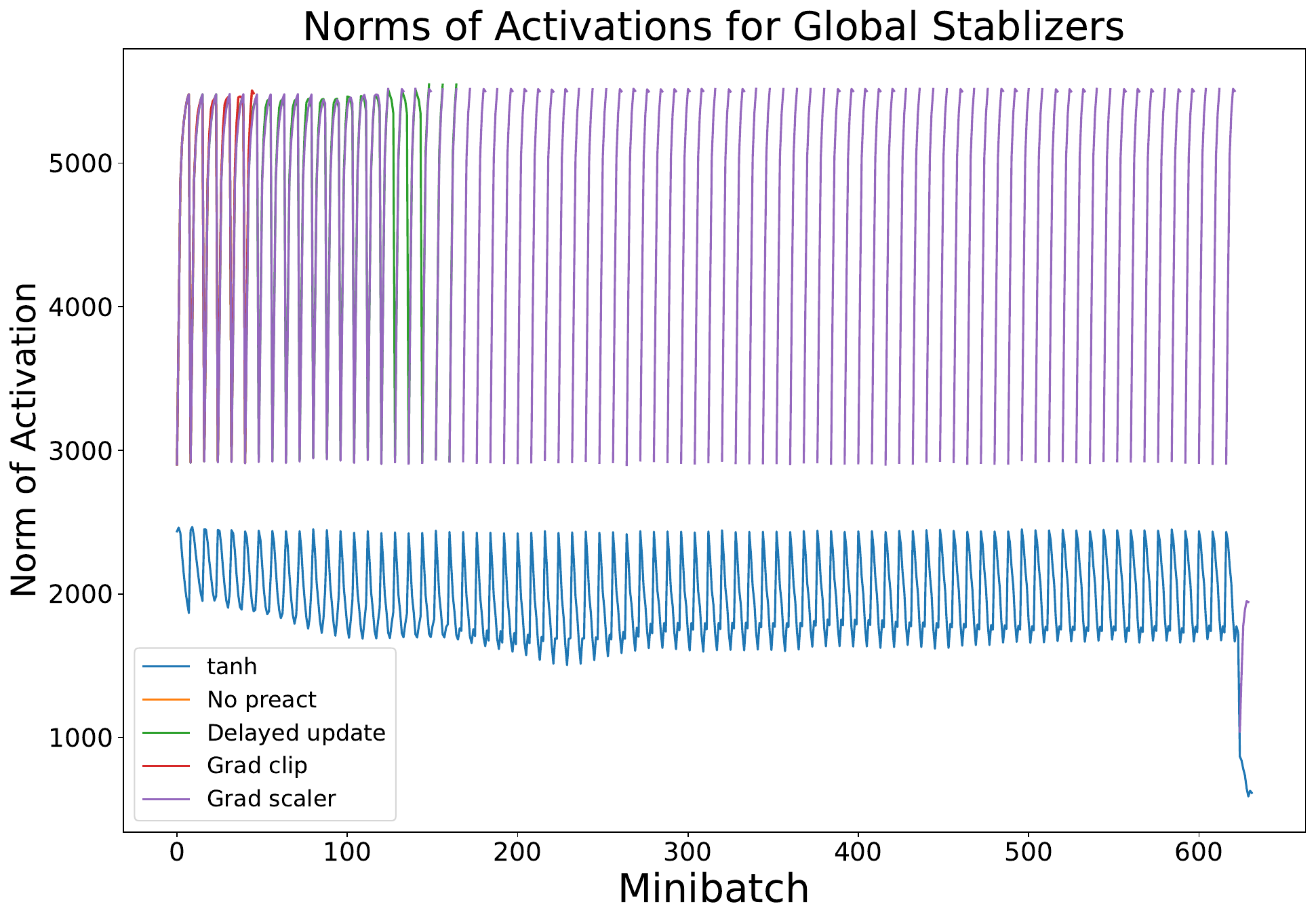}
\includegraphics[width=0.30\textwidth]{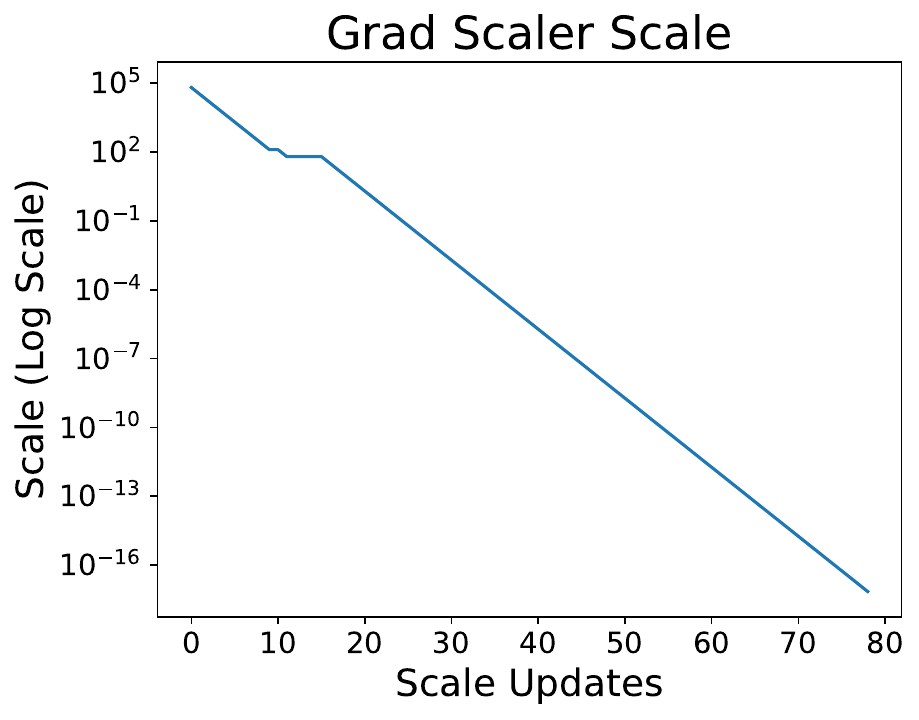}
\caption{
All three global methods diverge during the first epoch of training, with loss scaling completing the most batches without NaN values (Left). However, the scale quickly becomes infinitesimal during training (Right). 
}
\label{fig:global_stabilizer}
\end{center}
\end{figure*}

\subsection{Numerical Stability: Pre-FFT Methods} \label{app:stability}

Here, we give additional details and experiments for our study on numerical stability of mixed-precision FNO.


A simple idea for addressing overflow in FP16 is to scale down all values by adding a fixed pointwise division operation before the FFT.
However, this is an sub-optimal solution because all the values are scaled down equally, removing the data outliers but simultaneously squashing normal data. 
This forces all numbers into a very small range, which half precision cannot distinguish, preventing the model from converging to an acceptable performance. We find that accuracy deteriorates as the division factor grows: $10^1, 10^2, 10^3$. At $10^4$, the network completely fails to learn due to vanishing gradients. 
We also find that adding a normalization layer before the Fourier transform (essentially equivalent to scaling) does not fix the numerical instability in the long term. Additionally, normalization layers present extra GPU memory overhead because their statistics have to be tracked for gradient computation. 

\cref{tab:preactivation} compares several pre-FFT stabilization methods that \textbf{can} address numerical instability with acceptable computational costs. \texttt{hard-clip} forces maximum and minimum values in the input to a default range. \texttt{2$\sigma$-clip} similarly performs clipping but based on the data's mean and standard deviation. Out of these methods, \texttt{tanh} is founded to be the highest-performing. 
After adopting \texttt{tanh} as our pre-activation function, we assess its impact on the signal using a trained model, as shown in \cref{fig:preact_study}. We find that the loss of signal information is minimal in amplitude and phase. 

\begin{table}
\label{tab:preactivation}
\begin{center}
\begin{small}
\begin{tabular}{ccccc}
\toprule
\textbf{Fourier} & \textbf{AMP} & \textbf{Pre-} & \textbf{Runtime} & \textbf{Train} \\
\textbf{Precision} & \textbf{(T/F)} & \textbf{Activation} & \textbf{per epoch} & \textbf{loss} \\
\midrule
Full & F & N/A  & 44.4 & 0.0457 \\
Full & T & N/A  & 42.3 & 0.0457 \\ 
Half & F & N/A  & N/A & N/A \\ 
Half & T & \texttt{hard-clip} & 37.1 & 0.0483\\ 
Half & T & \texttt{2$\sigma$-clip}  & 40.0 & 0.0474\\ 
Half & T & \texttt{tanh}  & 36.5 & 0.0481 \\ 
\bottomrule
\end{tabular}
\end{small}
\vspace{-10pt}
\end{center}
\caption{\textbf{Comparison of different pre-activation functions used for numerical stability.}
\texttt{tanh} achieves the fastest runtime without significantly compromising on loss.
Since our focus now is on numerical stability, we compare only the average train loss over the 5 final epochs (we compare the test losses in \cref{subsec:accuracy}).
}
\end{table}

\begin{figure*}[t]
\vspace{-5pt}
\begin{center}
\includegraphics[width=0.45\textwidth]{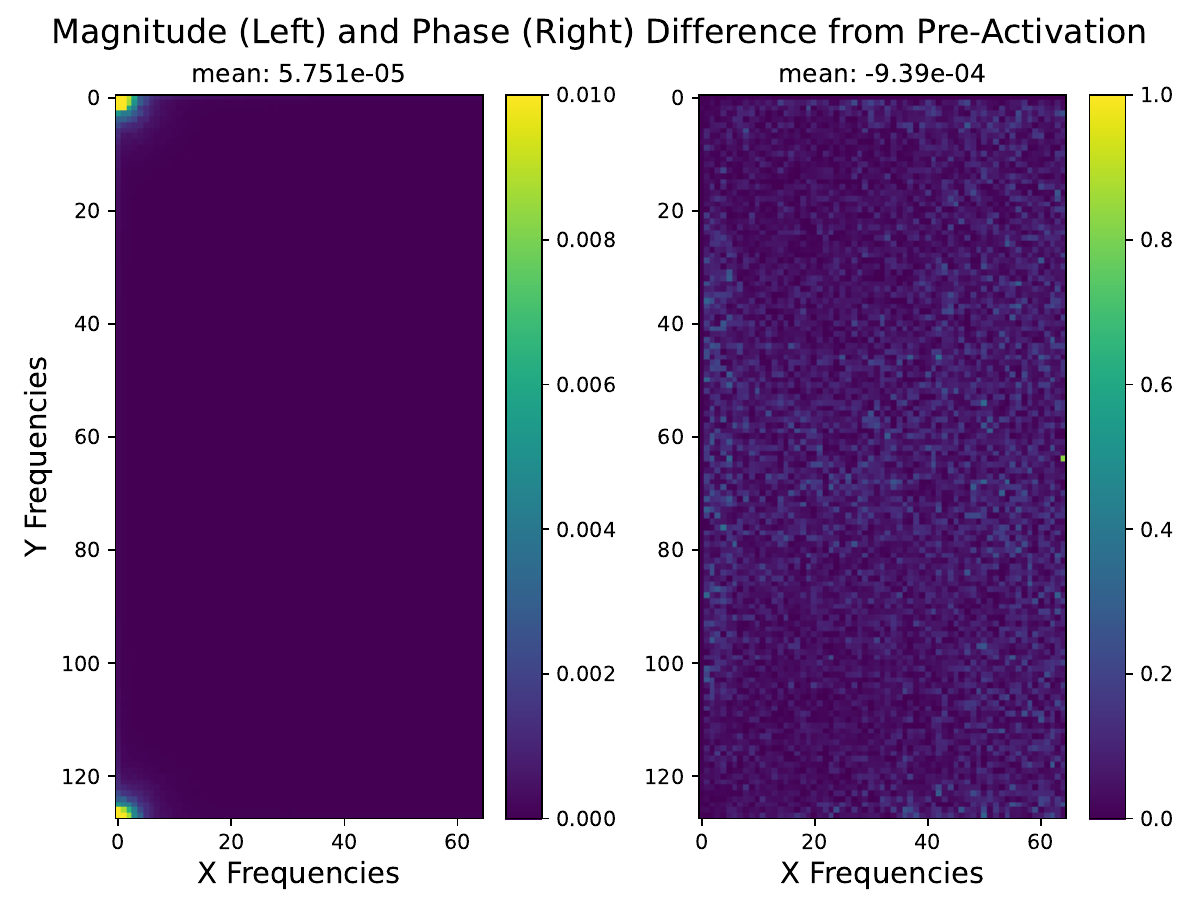}
\includegraphics[width=0.45\textwidth]{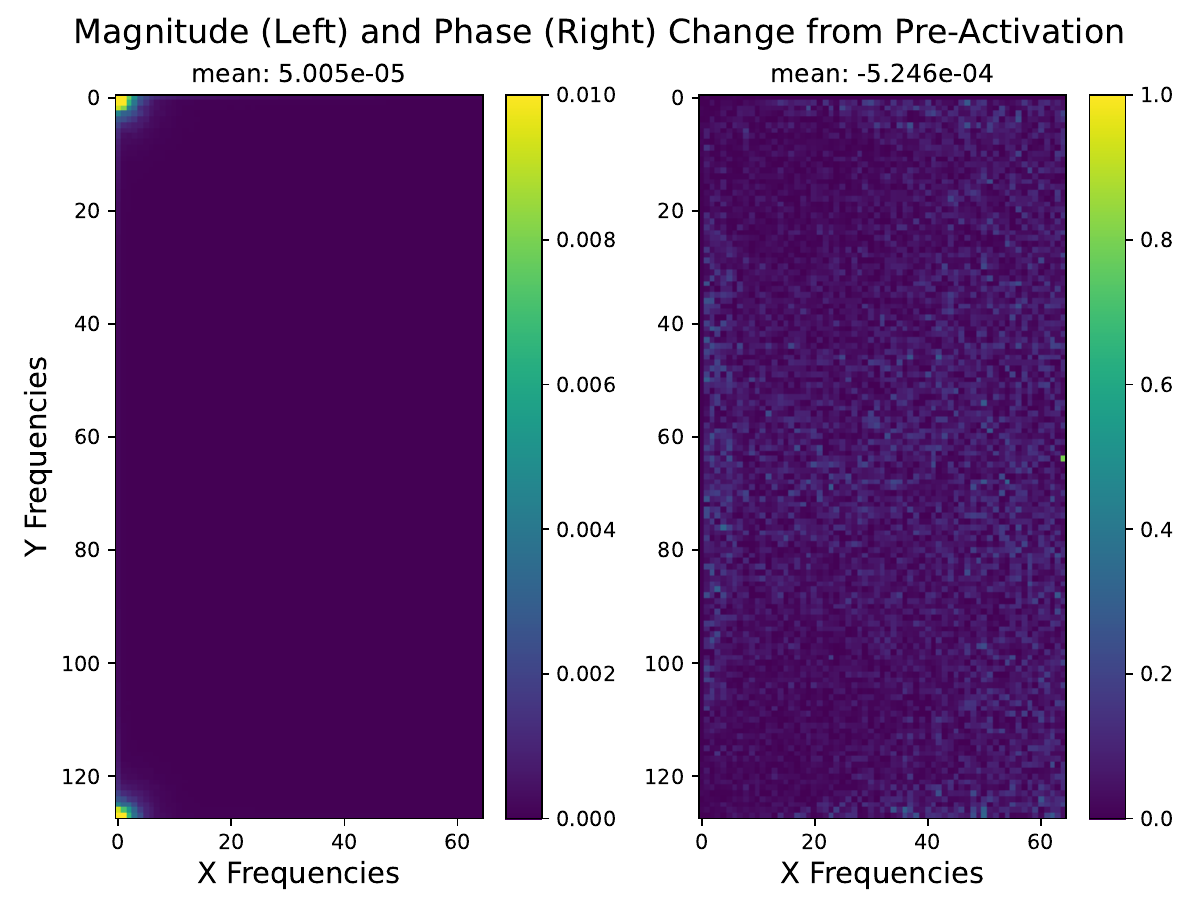}

\caption{
\textbf{Impact of pre-activation on frequency domain signals, for a fully-trained full FNO model (Left) and a mixed FNO model (Right), } 
 in terms of mean magnitude and phase differences, on a minibatch of the Navier-Stokes dataset. Pre-activation only changes a extremely small fraction of frequencies, and the two signals are well-aligned in phase. 
}
\label{fig:preact_study}
\end{center}
\vspace{-10pt}
\end{figure*}

\subsection{FNO Block Precision Ablation Study}\label{appsubsec:fnoablation}

As shown in \cref{fig:overview}, our FNO block has three complex-valued operations that are performed in half-precision: the forward FFT, the tensor contraction, and the inverse FFT. We investigate the effect of altering the precision setting of each on FNO training. This creates a total of eight settings since each operations have two precision configurations: full or half. In \cref{tab:fno_ablation}, we show error, runtime, and memory results from training for 30 epochs on the Darcy Flow dataset for each setting. The empirical results demonstrates the acorss-the-board advantage of our approach, which keeps all three operations in half-precision in the forward pass. 

\begin{table}[t]
\centering
\begin{tabular}{cccccc}
\hline
\textbf{Forward FFT} & \textbf{Contraction} & \textbf{Inverse FFT} & \textbf{Runtime per epoch} & \textbf{Memory} & \textbf{Train Error} \\
(F/H) & (F/H) & (F/H) & (sec) & (MB) & (L2 loss) \\
\hline
F & F & F & 17.06 & 8870 & 9.00 \\
F & F & H & 16.55 & 8908 & 8.71 \\
F & H & F & 17.11 & 8614 & 8.76 \\
F & H & H & 16.96 & 8658 & 8.15 \\
H & F & F & 17.64 & 7824 & 9.13 \\
H & F & H & 16.81 & 8004 & \textbf{7.51} \\
H & H & F & 16.57 & \textbf{7558} & 8.75 \\
H & H & H & \textbf{15.63} & \textbf{7550} & \textbf{7.49} \\
\hline
\end{tabular}
\caption{Performance of each setting on Darcy Flow trained for 30 epochs. The fully half-precision setting (our method) is advantageous across all the metrics. Note that the numerical stabilizer is only used when forward FFT is in half-precision.}
\label{tab:fno_ablation}
\end{table}

\subsection{Tanh Ablation Study}
A natural question from using pre-activation is to ask how a hyperbolic tangent would affect the full precision FNO model with no other changes.
In \cref{tab:full-prec-tanh}, we answer this question by running the full precision FNO model on the Navier-Stokes dataset.
We find that there is no noticeable change in the test losses, and the runtime-per-epoch increases by 0.8 seconds, or 1.6\%.
This concludes that \texttt{tanh} is a reasonable and practical choice to improve numerical stability for half precision methods.

\begin{table}[b]
\centering
{\small
\caption{\textbf{Ablation study on full-precision FNO with and without tanh} on the Navier-Stokes dataset.
There is no noticeable change in accuracy, showing that \texttt{tanh} is a practical choice to improve numerical stability in low precision methods. 
}
\label{tab:full-prec-tanh}
\begin{tabular}{lrrr}
\toprule
{} & $H^1$ & $L^2$ & time-per-epoch (sec) \\
\midrule
Full precision & 0.0121 & 0.00470 & 51.72 \\
Full precision + tanh & 0.0122 & 0.00465 & 52.56 \\
\bottomrule
\end{tabular}
}
\end{table}


\subsection{Comparing the $H^1$ Loss} \label{appsubsec:additional}

We present additional experiments similar to the results in \cref{sec:experiments}, but using the $H^1$ metric instead of $L^2$.
In \cref{fig:mode_ablation_l2}, we plot an ablation study on the number of frequency modes for Darcy flow, similar to \cref{fig:mode_ablation} with $L^2$ loss. We see that the overall trends are similar, but test $L^2$ loss is noisier than test L1 loss; this makes sense, because the training loss is $H^1$.
In \cref{fig:full_results_l2}, we plot the performance-vs-time and Pareto frontier plots for Navier Stokes and Darcy flow, similar to \cref{fig:full_results} with $L^2$ loss. 
As before, we see largely the same trends as with $H^1$ loss, but the results are noisier.
In \cref{tab:full-8layer}, we plot the final performance from the models run in \cref{fig:full_results}.

 \begin{figure}[t]
\begin{center}
\includegraphics[width=0.5\textwidth]{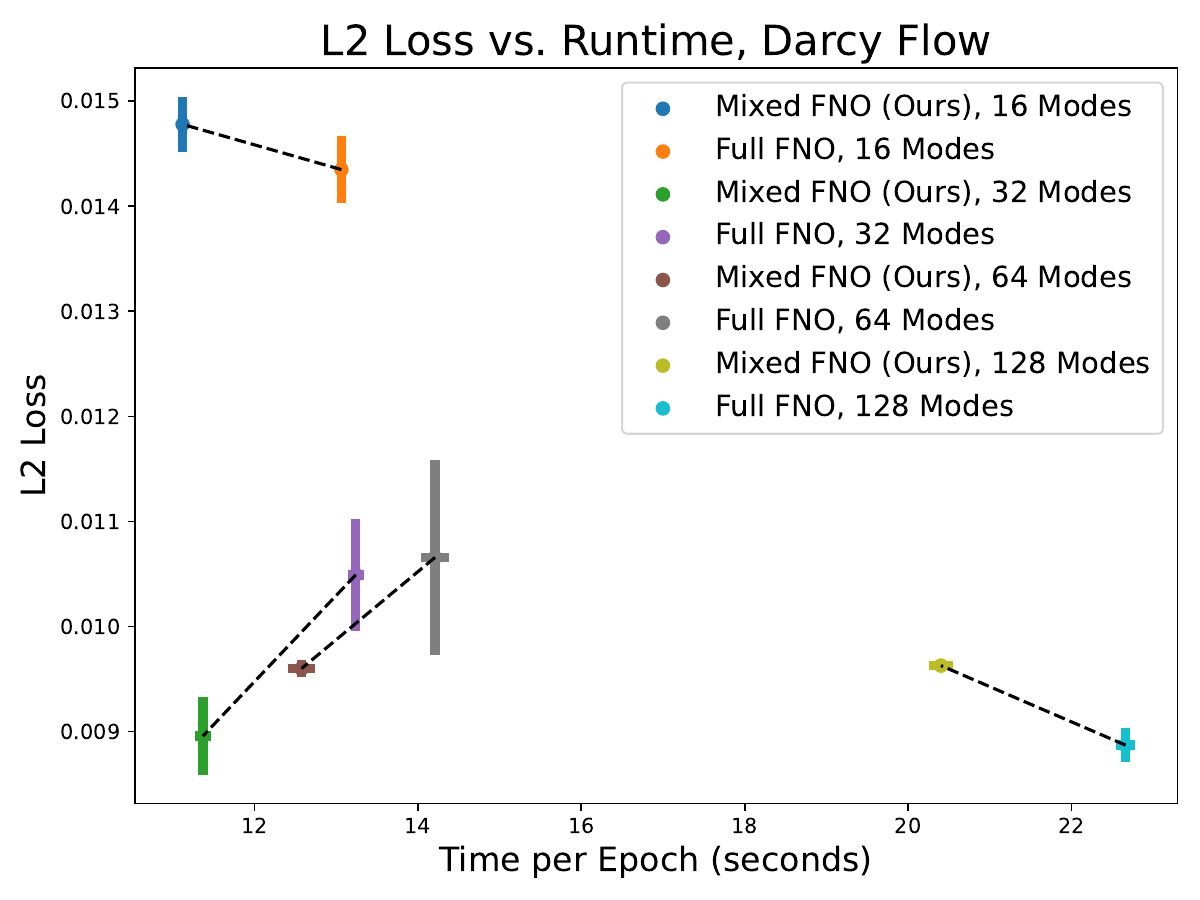}
\caption{
Comparison of the full-precision and mixed-precision (\algo) FNO 
 with different frequency modes, on the Darcy flow dataset.
}
\label{fig:mode_ablation_l2}
\end{center}
\end{figure}

\begin{figure}[t]
\begin{center}
\includegraphics[width=0.49\textwidth]{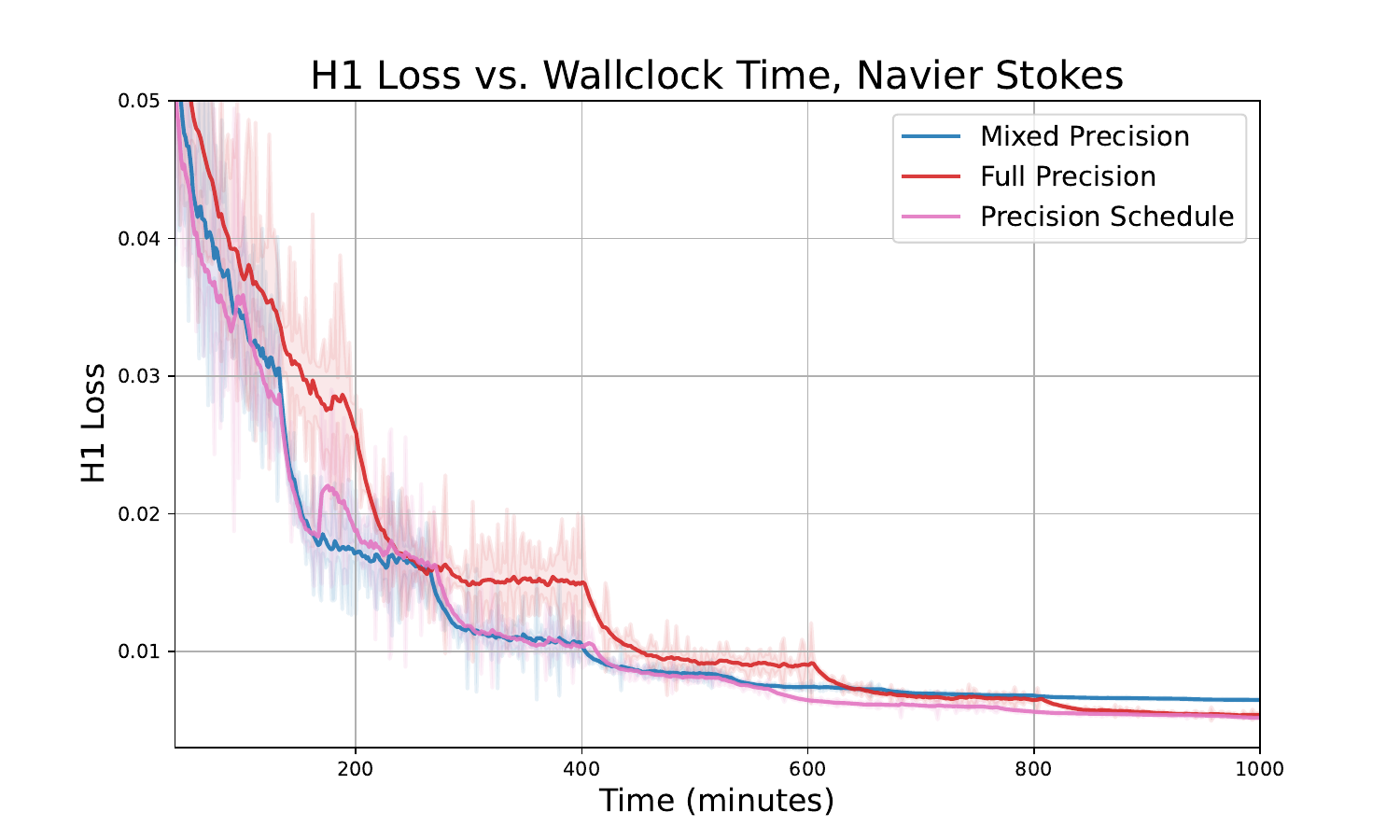}
\includegraphics[width=0.49\textwidth]{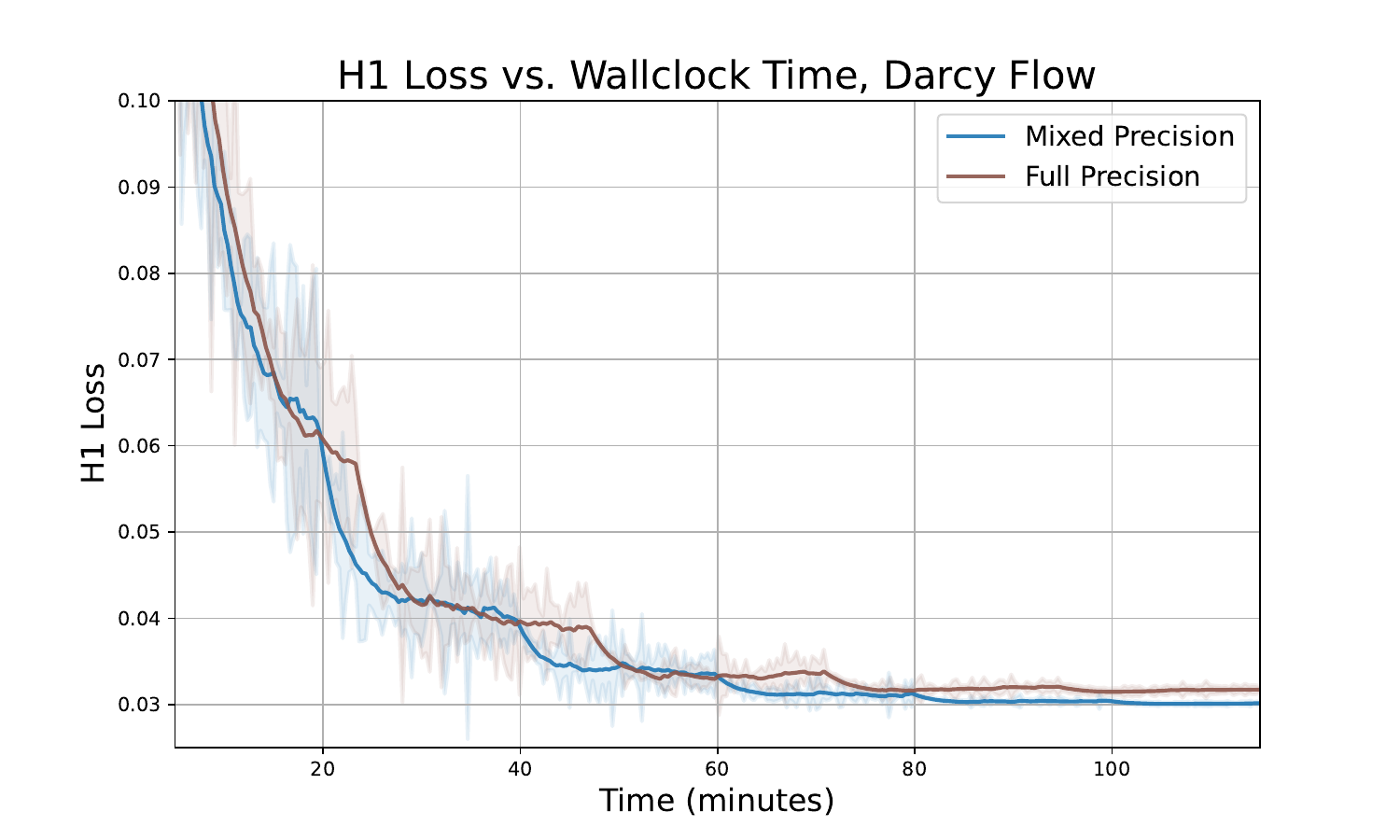}
\includegraphics[width=0.49\textwidth]{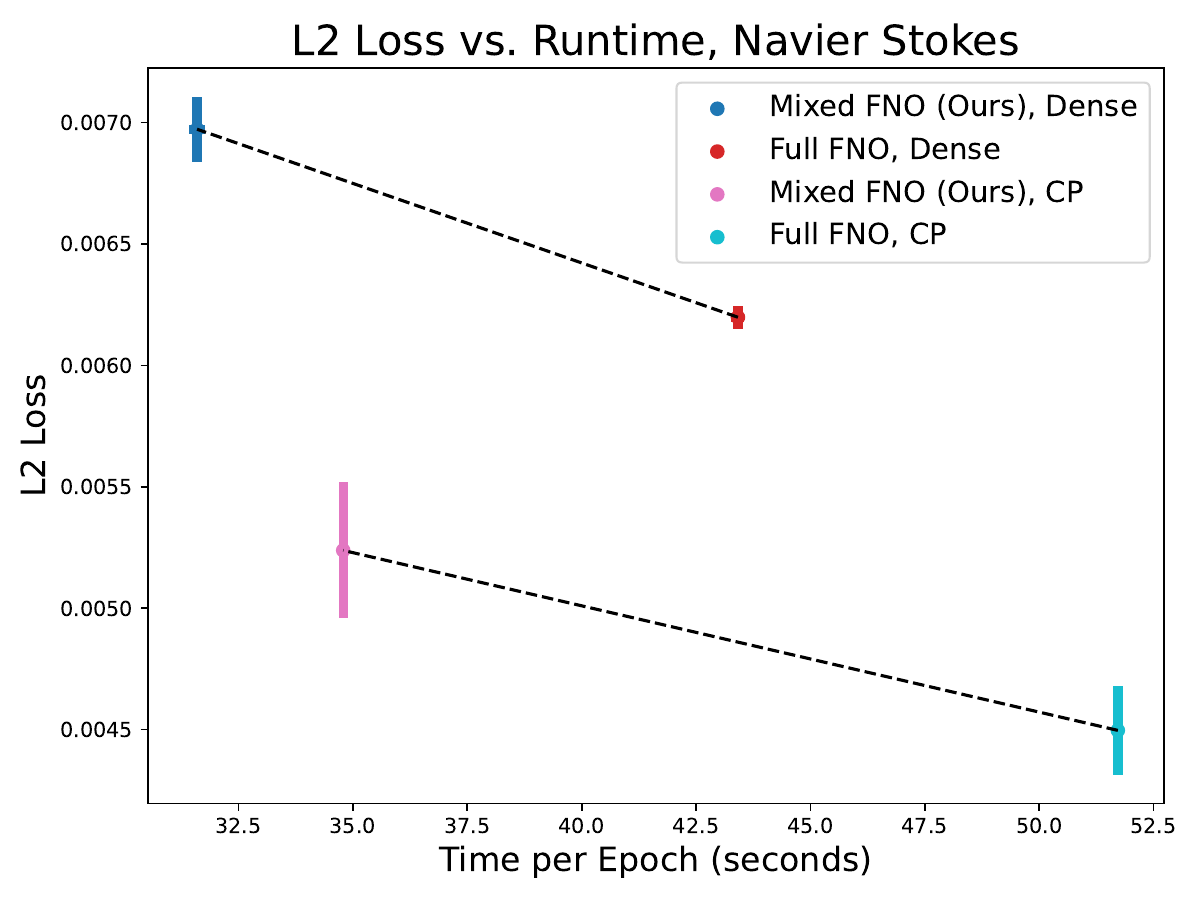}
\includegraphics[width=0.49\textwidth]{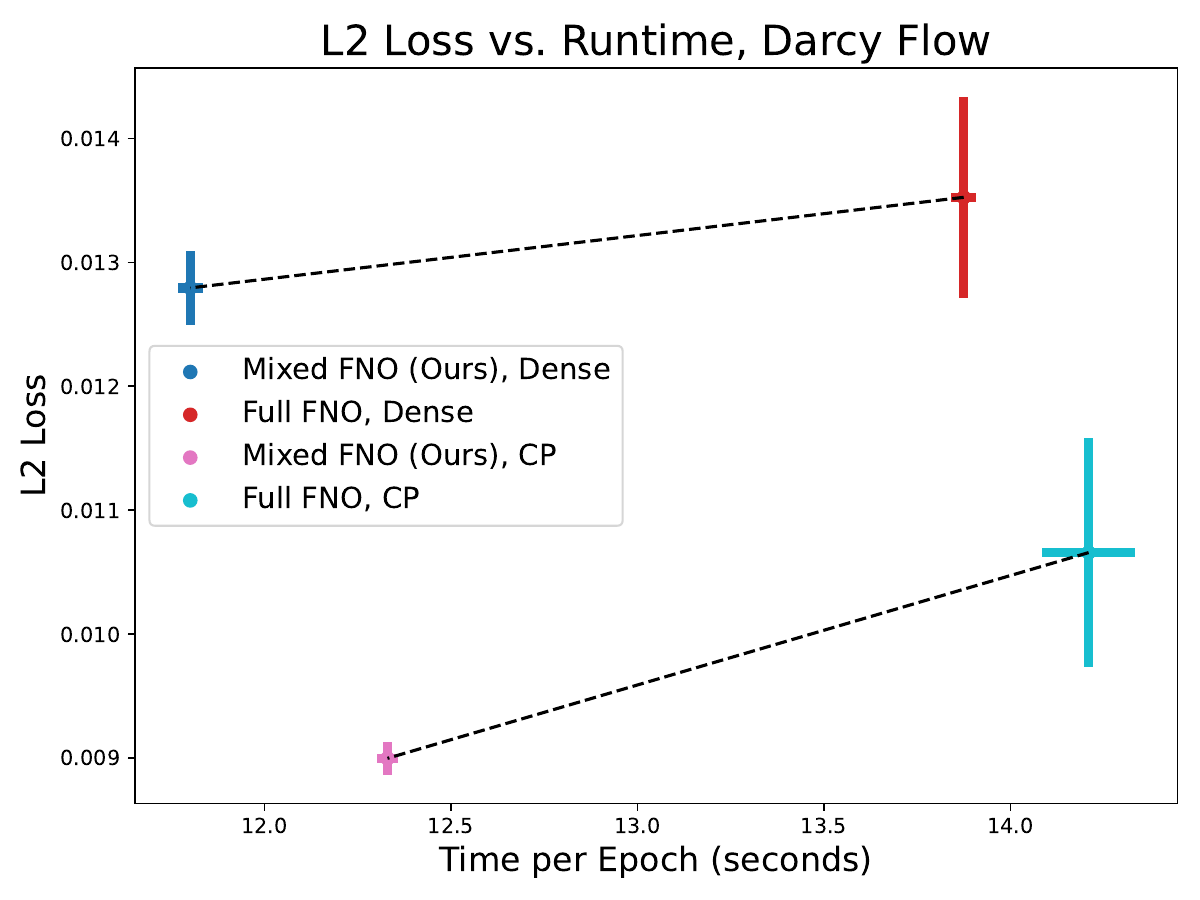}
\caption{
Test $H^1$ error curves for FNO on the Navier-Stokes (top left) and Darcy flow (top right) datasets. 
Pareto frontier for FNO on the Navier-Stokes (bottom left) and Darcy flow (bottom right) datasets, for Canonical-Polyadic factorization (CP) or no factorization (Dense).
We train each model for 500 epochs, and we plot the standard deviation across 3 trials for each setting.
} 
\label{fig:full_results_l2}
\end{center}
\vspace{-10pt}
\end{figure}

\begin{table}
\centering
{\small
\caption{Results for FNO with full precision, mixed precision, and precision schedule. All results are the average over 3 seeds trained to 19 hours each (which is the time it takes full precision FNO to reach 500 epochs).}
\label{tab:full-8layer}
\begin{tabular}{lrrr}
\toprule
{} & $H^1$ & $L^2$ & time-per-epoch (sec) \\
\midrule
Full FNO & $.00536\pm 2.1e-5$ & $.00214\pm 3.5e-5$ & 121.4 \\
Mixed FNO (Ours) & $.00645\pm 6.6e-5$ & $.00212\pm 1.4e-5$ & 80.2 \\
Precision schedule (Ours) & $.00515\pm 8.3e-5$  & $.00812\pm 4.1e-5$ & 80.2,~83.8,~121.4 \\
\bottomrule
\end{tabular}
}
\end{table}

 \begin{figure}
\begin{center}
\includegraphics[width=0.6\textwidth]{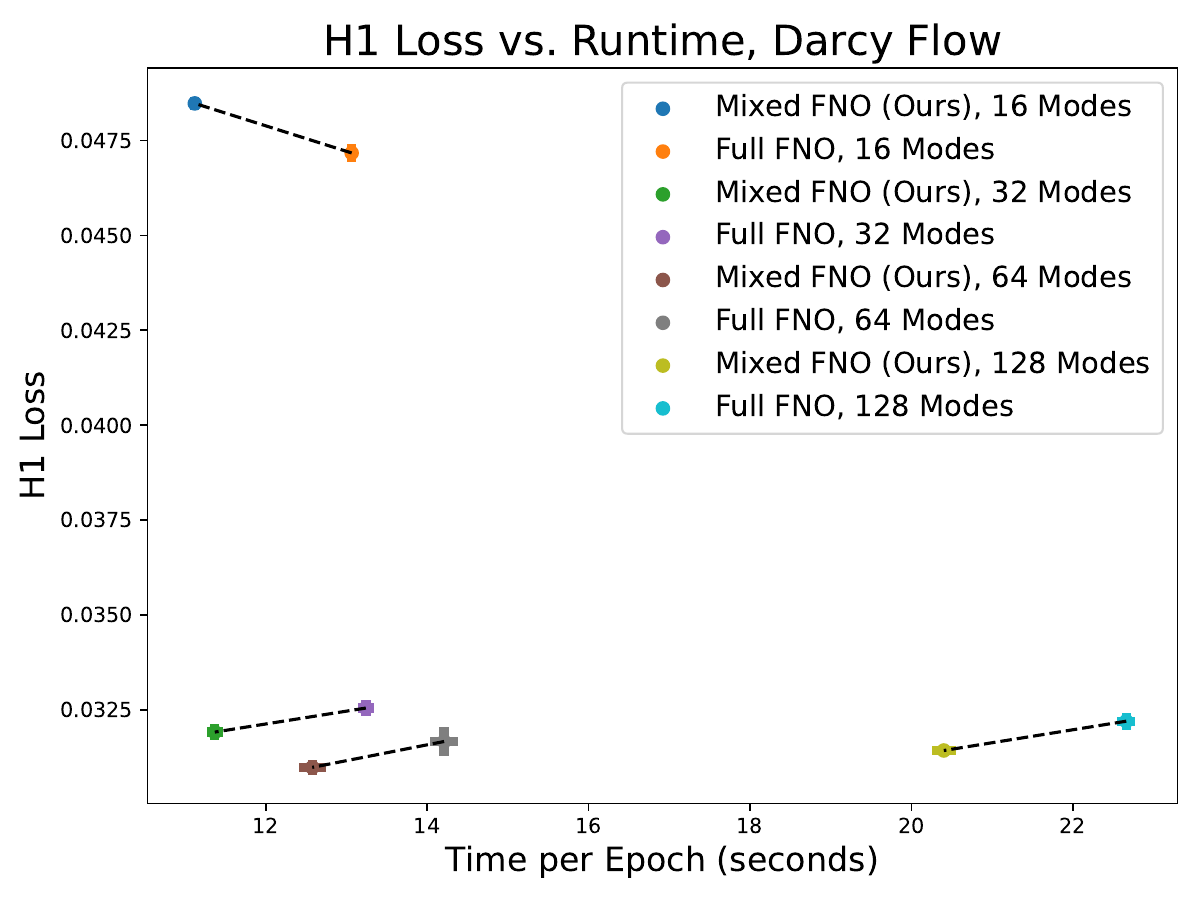}
\vspace{-10pt}
\caption{
\textbf{Comparison of the Full-FNO and Mixed-FNO} 
 with different frequency modes on the Darcy Flow dataset.
 For 16 frequency modes, the half precision error (compared to full precision) is higher than for 32, 64, or 128 modes.
}
\label{fig:mode_ablation}
\end{center}
\end{figure}

\subsection{Frequency Mode Ablation Study}\label{appsubsec:frequency}
We run an experiment on synthetic data to demonstrate that the error caused by half precision is higher for higher frequencies, relative to the amplitude.
We create a signal based on sine and cosine waves with frequencies from 1 to 10, with randomly drawn, exponentially decaying amplitudes.
Then we plot the Fourier spectrum in full and half precision, as well as the absolute error of the half-precision spectrum as a percentage of the true amplitudes. 
See \cref{fig:synthetic};
we find that the percentage error exponentially increases.
Since in real-world data, the energy is concentrated in the lowest frequency modes, and the higher frequency modes are truncated, this gives further justification for our half precision method.

\begin{figure}
\begin{center}
\includegraphics[width=0.5\columnwidth]{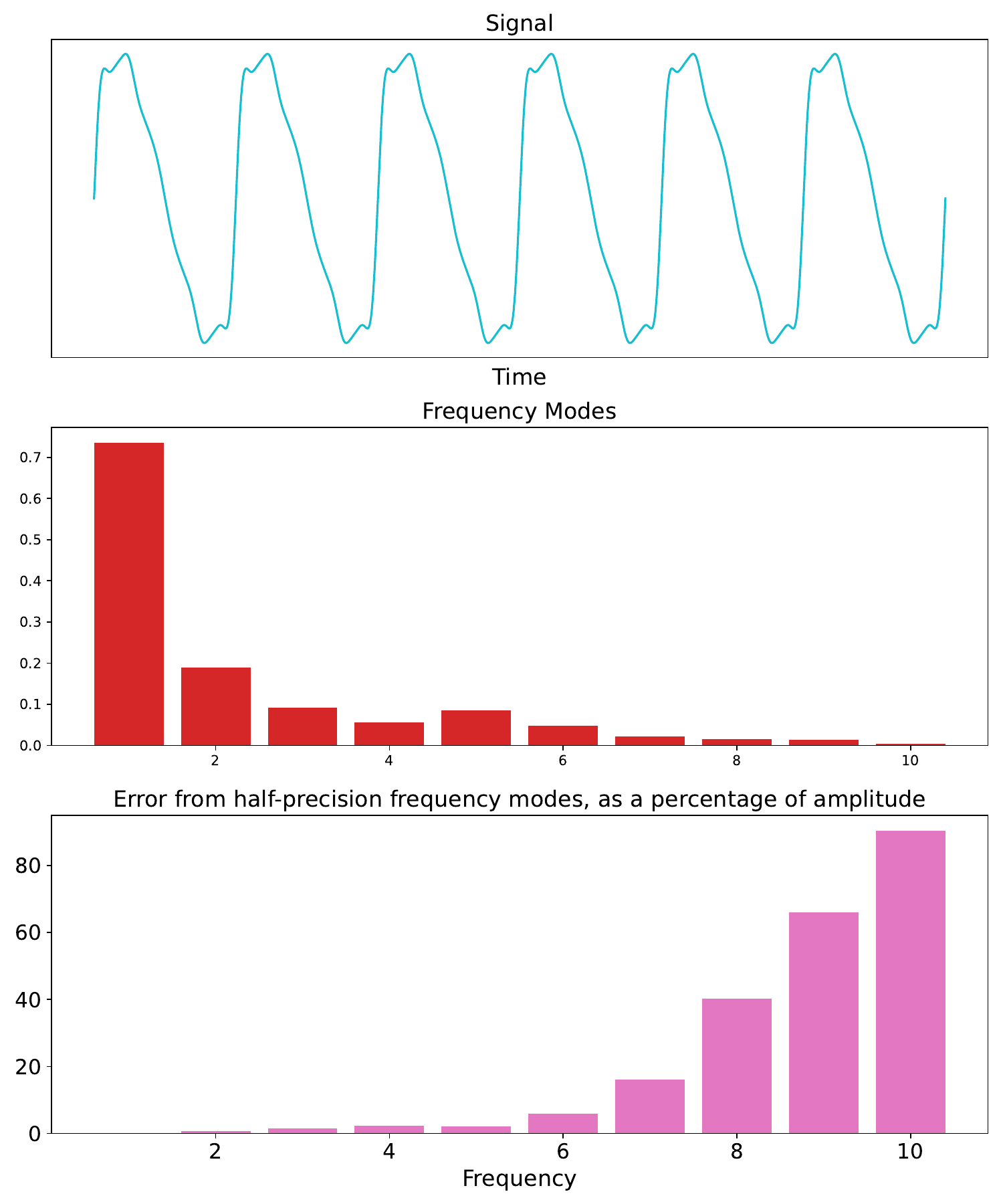}
\caption{
Synthetic signal (top), its frequency modes (middle), and the error due to half precision, as a percentage of the amplitude (bottom).
The percentage error increases for higher frequencies.
}
\label{fig:synthetic}
\end{center}
\end{figure}

\begin{table}
\centering
\begin{tabular}{ccc}
\toprule
& Navier-Stokes& Darcy Flow\\
\midrule
FNO + TF32 & 57.44& 14.12\\
Mixed FNO (Ours)& \textbf{53.72} & \textbf{13.52}\\
\bottomrule
\end{tabular}
\caption{\textbf{Training time per epoch on a Nvidia A100 GPU.}}
\label{tab:tf32}
\end{table}

\subsection{Other Numeric Systems}\label{appsubsec:othernumeric}
We compare training curves of full FNO, our mixed FNO pipeline, and FNO trained with AMP using Brain Float 16 (BF16) on the Navier-Stokes dataset in \cref{fig:bf16}. In addition to suffering from reduced precision, BF16 is not supported for most FFT-related operations in PyTorch, which makes its application on FNOs extremely difficult. 

Furthermore, we compare the efficiency of our mixed precision approach with TensorFloat 32 (TF32) on an Nvidia A100 GPU, on which TF32 is highly optimized. We report the time per epoch when training on Navier-Stokes and Darcy Flow datasets in \cref{tab:tf32}. Our method is still more efficient in comparison. 

In addition, we simulate 8-bit floating point (FP8) training using our method on the Darcy Flow dataset via clipping out-of-range values to the maximum and minimum representable under the E5M2 format, which has a higher dynamic range than the E4M3 format \citep{micikevicius2022fp8}. However, the GINO network diverges during training due to less available precision bits as shown in \cref{fig:bf16}.

Furthermore, the worse result of FP8, is actually predicted \cref{thm:precision_error}: while the precision constant for FP16 corresponds to $\epsilon=10^{-4}$, it corresponds to $\epsilon>10^{-2}$ for FP8 (even when using E4M3), and we can no longer claim that the precision error is lower than the resolution error.

\begin{figure}
\begin{center}
\includegraphics[width=0.45\columnwidth]{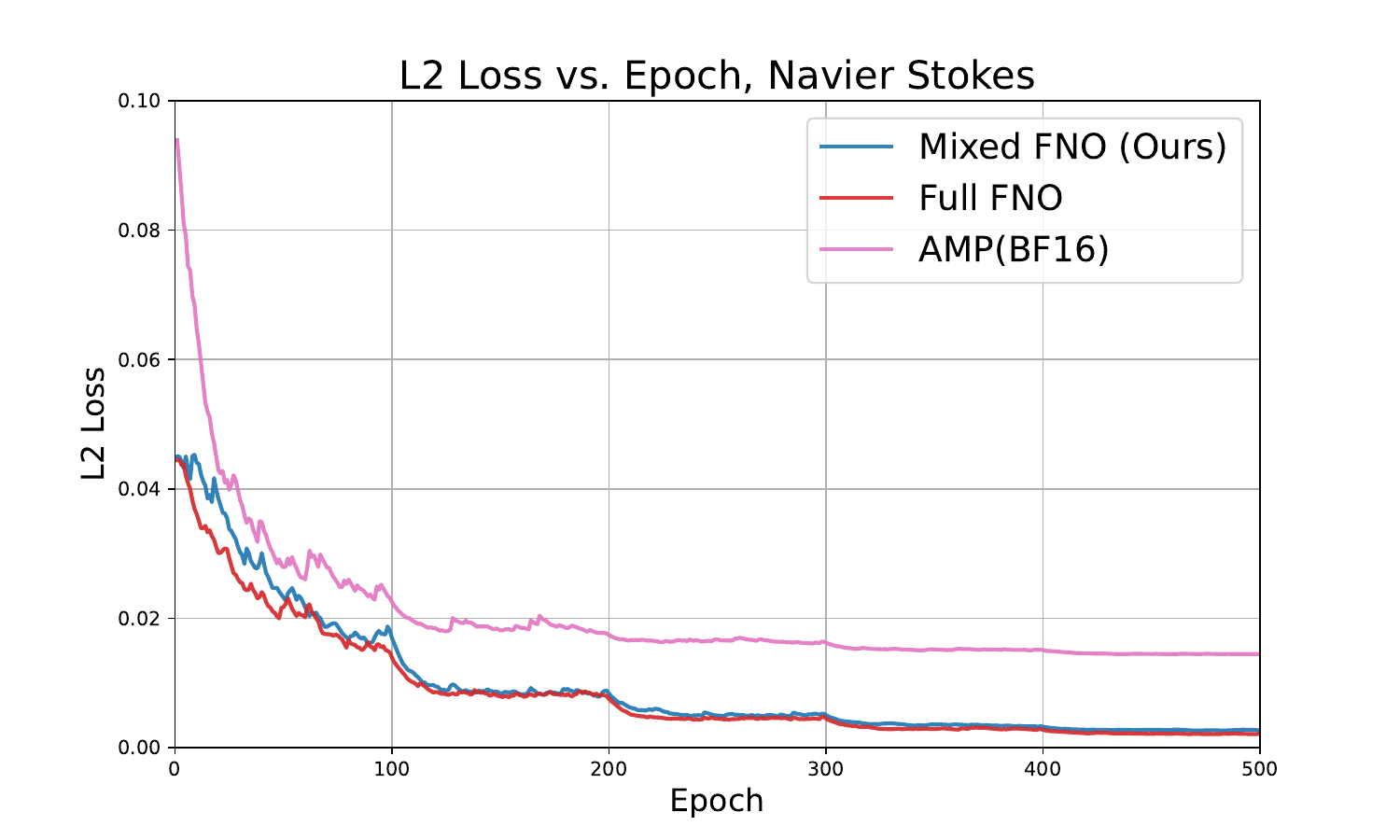}
\includegraphics[width=0.45\columnwidth]{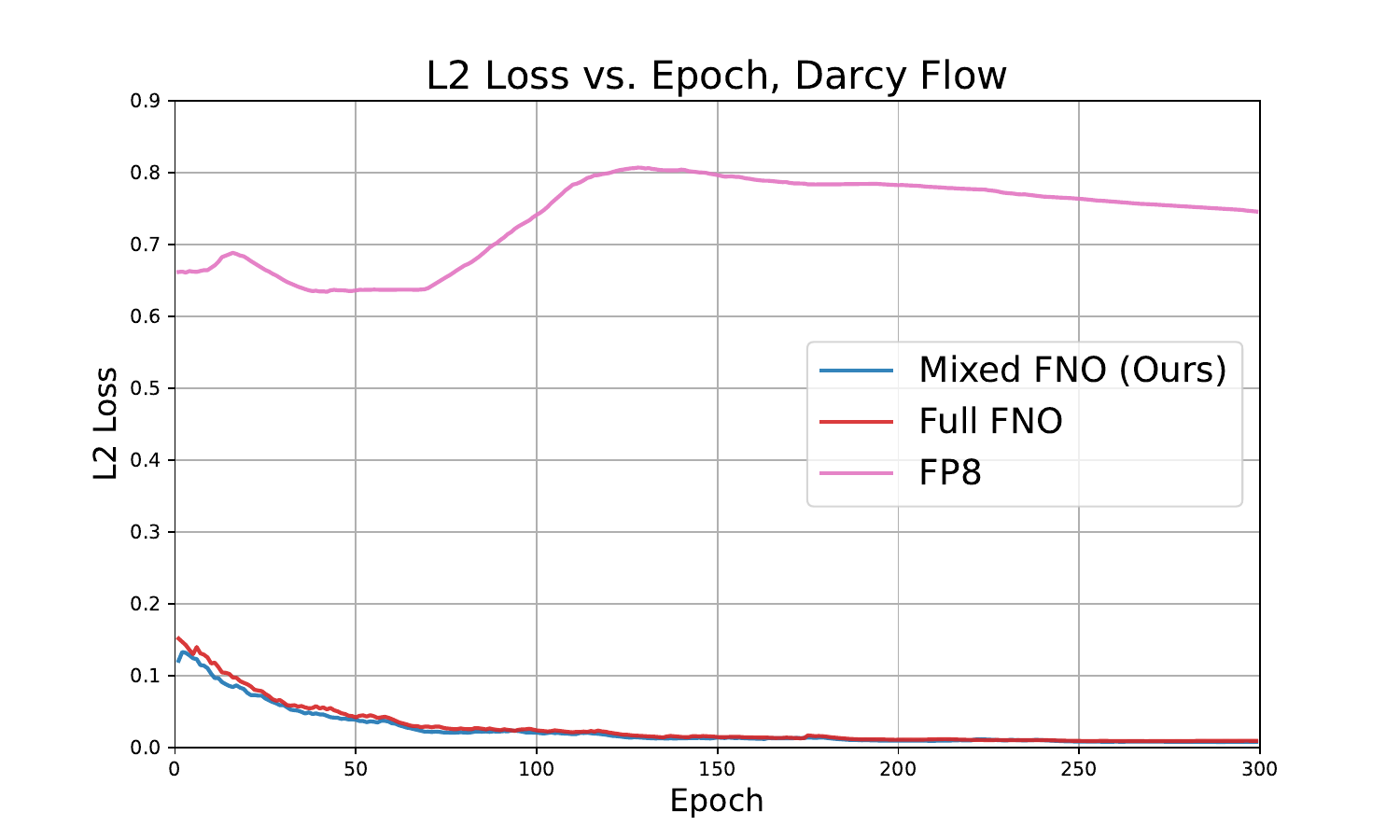}

\caption{
BF16 has significantly higher error compared to the other two approaches, possibly due to reduced bits for precision compared to FP16. FP8 training diverges due to the absence of precision bits in FP16.
}
\label{fig:bf16}
\end{center}
\end{figure}

\subsection{Ablation Study on Contract and View-as-real Usage}
This ablation study shows the benefit of (1) decomposition of einsum into sub-equations and appropriate view-as-real usage, (2) our einsum path caching, (3) our greedy tensor contraction procedure, and (4) both weights and inputs in half-precision.

The results are as follows:
\begin{enumerate}
    \item The usage of view-as-real is different for different tensor contract implementations: 
    \begin{itemize}
        \item Option A: View-as-real on all tensors, and compute a single einsum in one equation (naive); 
        \item Option B: View-as-real on two tensors at a time, and compute step-by-step einsums with two-tensor sub-equations (optimized); 
        \item Option C: View-as-real only on high-dimensional einsums, and contracting low-dimension sub-equations in complex format (ours, optimal).
    \end{itemize} 
    Results on the Navier-Stokes can be found in \cref{tab:contract-implementation}. 

    \item Re-computing contract paths (naive) vs. Caching contract paths (ours):
    
    Since tensor shapes are static, we avoid repeated path computation in the default contract implementation. Path computation could take up to 75\% of the cost of the contract operation in the forward pass, which we try to avoid. 
    
    Results on the Navier-Stokes and Darcy Flow datasets can be found in \cref{tab:path-caching}. Our method reduces \textit{time to calculate paths} to almost zero during each iteration of einsum in FNOs.  

    \item Opt-einsum's FLOP-optimal path (naive) vs. memory-greedy path (ours):

    By default, opt-einsum minimizes FLOPs for the contraction, but our method optimizes memory usage to allow higher-resolution PDEs. Results on Ahmed-body and Shape-Net Car could be found in \cref{tab:greedy-path}. On memory-consuming 3D datasets, our method could save up to 12\%.

    \item Approximate only weights in half-precision (naive) vs. inputs and weights both half-precision (ours):

    If only weights were in half-precision, the memory reduction would have been significantly limited for the contract operation since the inputs are in full-precision.
    Results on Darcy Flow and Navier-Stokes can be found in \cref{tab:half-weights-inputs}.
    Note that in the case of Navier-Stokes, pytorch uses a much less memory-efficient kernel if inputs were in full-precision. Hence, it is critical to cast both weights and inputs to half-precision.

\begin{table}[t] 
\centering
\caption{Comparison of tensor contraction implementations on the Navier-Stokes dataset}
\label{tab:contract-implementation}
\begin{tabular}{lcccc}
\hline
Metric                      & Option A   & Option B & Option C, Ours & Reduction from Option A (\%) \\ \hline
Time per Epoch (s)          & 1730       & 101.7    & \textbf{92.59}          & 94.65                        \\ 
Memory Usage (MB)           & 10310      & 5048     & \textbf{4832}           & 53.12                        \\ 
Contract Forward Time (s)   & 0.178      & 0.0084   & \textbf{0.0076}         & 95.73                        \\ \hline
\end{tabular}
\end{table}

\begin{table}[t]
\centering
\caption{Re-computing contract paths vs. Caching contract paths} \label{tab:path-caching}
\begin{tabular}{lccc}
\hline
Dataset         & Time to calculate paths (s) & Time to compute einsums (s) & Path/einsum (\%) \\ \hline
Navier-Stokes   & 0.000573                    & 0.000751                   & 76.3             \\ 
Darcy Flow      & 0.000441                    & 0.000716                   & 61.6             \\ \hline
\end{tabular}
\end{table}

\begin{table}[t]
\centering
\caption{FLOP-optimal path vs. memory-greedy path}\label{tab:greedy-path}
\begin{tabular}{lccc}
\hline
Dataset     & Greedy Memory (MB) & Default Memory (MB) & Reduction (\%) \\ \hline
Shape-Net Car     & 7906               & 8662                & 8.72           \\ 
Ahmed-body  & 11990              & 13610               & 11.91          \\ \hline
\end{tabular}
\end{table}

\begin{table}[h]
\centering
\caption{Approximate precision in weights vs. inputs and weights} \label{tab:half-weights-inputs}
\begin{tabular}{lccc}
\hline
Dataset         & Half-Prec Memory (MB) & Inputs-Full Memory (MB) & Reduction (\%) \\ \hline
Darcy Flow      & 7550                  & 8166                    & 7.54           \\ 
Navier-Stokes   & 4832                  & 9380                    & 48.46          \\ \hline
\end{tabular}
\end{table}

\end{enumerate}

\end{document}